\def\eqref#1{equation~\ref{#1}}
\def\1{\bm{1}}
\def\mA{{\bm{A}}}
\def\mB{{\bm{B}}}
\def\mC{{\bm{C}}}
\def\mD{{\bm{D}}}
\def\mG{{\bm{G}}}
\def\mM{{\bm{M}}}
\def\mU{{\bm{U}}}
\def\mV{{\bm{V}}}
\def\mX{{\bm{X}}}
\def\mY{{\bm{Y}}}
\DeclareMathAlphabet{\mathsfit}{\encodingdefault}{\sfdefault}{m}{sl}
\SetMathAlphabet{\mathsfit}{bold}{\encodingdefault}{\sfdefault}{bx}{n}
\DeclareMathOperator*{\argmin}{arg\,min}
\newtheorem{theorem}{Theorem}
\newtheorem{lemma}{Lemma}
\newtheorem{remark}{Remark}
\newtheorem{assumption}{Assumption}
\newtheorem{example}{Example}
\crefname{theorem}{Theorem}{Theorems}
\crefname{lemma}{Lemma}{Lemmas}
\crefname{equation}{Eq.~}{Eqs.~}
\crefname{remark}{Remark}{Remarks}
\crefname{assumption}{Assumption}{Assumptions}
\crefname{algorithm}{Algorithm}{Algorithms}
\newcommand{\proposed}{\textsc{FedMuon}}
\title{FedMuon: Federated Learning with Bias-corrected LMO-based Optimization}
\author{Yuki Takezawa \\
Kyoto University, OIST
\And
Anastasia Koloskova \\
University of Zurich
\And
Xiaowen Jiang \\
CISPA, Saarland University
\And
Sebastian U. Stich \\
CISPA
}
\begin{document}

\maketitle

\begin{abstract}
Recently, a new optimization method based on the linear minimization oracle (LMO), called Muon, has been attracting increasing attention since it can train neural networks faster than existing adaptive optimization methods, such as Adam.
In this paper, we study how Muon can be utilized in federated learning.
We first show that straightforwardly using Muon as the local optimizer of FedAvg does not converge to the stationary point since the LMO is a biased operator.
We then propose \proposed{} that can mitigate this issue. 
We also analyze how solving the LMO approximately affects the convergence rate and find that, surprisingly, \proposed{} can converge for any number of Newton-Schulz iterations, while it can converge faster as we solve the LMO more accurately.
Through experiments, we demonstrated that \proposed{} can outperform the state-of-the-art federated learning methods.
\end{abstract}

\section{Introduction}

Federated learning, which can train neural networks in parallel across many clients, has been attracting much attention \citep{kariouz2021advances,mcmahan2017communication,karimireddy2020scaffold}.
In federated learning, each client has its own training datasets and updates its parameters using a local optimizer, such as SGD. The central server collects the parameters from the clients and aggregates them.
Since clients do not need to share their local training datasets with others, federated learning inherently preserves data privacy.

For training neural networks efficiently, using an appropriate stepsize is one of the most critical factors.
If the stepsize is too large, the training collapses, whereas if the stepsize is too small, the training requires a huge number of iterations.
To adjust the stepsize on the fly during the training, using adaptive optimization methods, such as AdaGrad \citep{duchi2011adaptive}, Adam \citep{kingma2017adam}, Shampoo \citep{gupta2018shampoo}, and other methods \citep{loshchilov2018decoupled,vyas2025soap}, have long been regarded as the de facto standard for training neural networks.

Recently, Muon \citep{liu2025muon} has emerged as a promising alternative, attracting significant attention.
Many papers evaluated the performance of Muon and demonstrated that Muon can train neural networks faster and achieve higher accuracy than the existing optimization methods, such as AdamW \citep{liu2025muon,semenov2025benchmarking}.
Roughly speaking, Muon projects the momentum in the Momentum SGD onto the space of orthogonal matrices.
Muon is closely related to various optimization methods: it can be interpreted as a simplified version of Shampoo \citep{gupta2018shampoo}, in which a certain momentum accumulation is disabled \citep{liu2025muon}, and as an instance of optimizers with linear minimization oracle (LMO) under a specific norm \citep{pethick2025training}.
\citet{kovalev2025understanding} also showed that Muon is a special instance of the trust-region optimization method.

To use Muon for the large-scale training, developing the distributed version of Muon is important.
However, Muon requires us to solve the LMO every iteration, which makes it difficult to straightforwardly use Muon in a distributed environment.
\citet{ahn2025dion} proposed a method to solve the LMO in a distributed manner, although their method does not support multiple local steps and incurs a huge communication cost.
\citet{therien2025muloco} proposed MuLoCo, which extends Muon by allowing clients to update the parameters multiple times by Muon as in Local SGD \citep{stich2019local,woodworth2020is}.  
Although \citet{therien2025muloco} demonstrated that MuLoCo performs well when all clients share the same training dataset, their method is limited to homogeneous settings and lacks theoretical guarantees.
As we show in \cref{sec:local_muon}, MuLoCo fails to converge when clients have different datasets, which is a fundamental characteristic of federated learning.

In this paper, we study the federated learning methods with the LMO and propose \proposed{}.
(i) First, we show that straightforwardly using Muon as the local optimizer in FedAvg failed to converge to the stationary point since the LMO is a biased operator.
We formally analyze the lower bound of this straightforward method, showing that it does not converge to the stationary point, especially in the heterogeneous setting.
(ii) We then propose \proposed{}, which can mitigate the bias caused by the LMO and can provably converge to the stationary point.
(iii) Furthermore, we derive a novel analysis and reveal how the inexact LMO affects the convergence behavior of \proposed{}.
Since solving the LMO exactly is computationally expensive, we solve the LMO approximately by running the Newton-Schulz iteration \citep{schulz1933iterative} several times in practice.
There were many papers that analyzed the convergence behavior of Muon \citep{riabinin2025gluonmakingmuon,liu2025muon,shen2025convergence}, while most of them assumed that the LMO is solved exactly and ignored the effect caused by the inexact LMO.
We analyze the impact of inexact solutions to the LMO on the convergence rate.
We discover that for any number of Newton-Schulz iterations, \proposed{} can converge to the stationary point and can converge faster by up to a factor proportional to the square root of the dimension of the parameters as we solve the LMO more accurately.
We experimentally demonstrated the effectiveness of \proposed{}, showing that \proposed{} can achieve higher accuracy than the state-of-the-art adaptive federated learning optimization methods.

Our contributions are summarized as follows:
\begin{itemize}[nosep,leftmargin=12pt]
    \item We show that directly plugging Muon into FedAvg as the local optimizer does not converge to the stationary point since the LMO is a biased operator.
    \item We propose \proposed{}, which mitigates the above issue by the bias correction mechanism and can converge to the stationary point.
    \item We analyze the convergence rate of \proposed{} with the inexact LMO. Then, we show that for any number of Newton-Schulz iterations, \proposed{} can converge, revealing how the Newton-Schulz iteration affects the convergence rate.
    \item Through the experiments, we demonstrated that \proposed{} can outperform the state-of-the-art federated learning methods.
\end{itemize}

\paragraph{Notation:}
We use $\| \cdot \|$ to denote an arbitrary norm, and its dual norm is denoted by $\| \cdot \|_\star$. 
When we refer to a specific norm, we explicitly use the notation $\| \cdot \|_p$, $\| \cdot \|_F$, $\| \cdot \|_\text{sp}$, and $\| \cdot \|_\text{trace}$ to denote the Schatten $p$-norm, Frobenius norm, spectral norm, and trace norm, respectively.
We denote $[n] = \{ 1, 2, \dots, n \}$ for any $n \in \mathbb{N}$.

\section{Preliminary}
\label{sec:preliminary}
In this section, we briefly introduce federated learning and Muon.
The detailed discussion about the related works is deferred to \cref{sec:related_work}.

\paragraph{Federated Learning:}
We consider the following problem where the loss functions are distributed among $n$ clients:
\begin{align*}
    \min_{\mX \in \mathcal{X}} \left[ f (\mX) \coloneqq \frac{1}{n} \sum_{i=1}^n f_i (\mX) \right],
    \qquad f_i (\mX) \coloneqq \mathbb{E}_{\xi_i \sim \mathcal{D}_i} [ F_i (\mX ; \xi_i) ],
\end{align*}
where $\mathcal{X}$ is the parameter space (e.g., $\mathbb{R}^d$ or $\mathbb{R}^{d_1 \times d_2}$), $\mX$ is the model parameter, $\mathcal{D}_i$ is the training that client $i$ holds, and $f_i : \mathcal{X} \rightarrow \mathbb{R}$ is the loss function of client $i$.

The most fundamental algorithm for federated learning is Federated Averaging (FedAvg) \citep{mcmahan2017communication}.
In FedAvg, each client updates the parameter by using its own loss function, and then the central server aggregates the parameters sent from the clients.
The update rule of FedAvg is described in \cref{sec:algorithm_of_local_muon}.
The original FedAvg uses SGD as the local optimizer, while, as in the non-distributed learning, it is important to use adaptive optimization methods for stable and fast training.
Many papers proposed federated learning methods that use more sophisticated optimizers, such as Momentum SGD \citep{lin2021quasi}, Adam \citep{reddi2021adaptive}, and the Newton method \citep{elgabli2022fednew}.
\citet{reddi2021adaptive} proposed a general framework and analyzed the convergence rate with various optimizers.

\paragraph{Optimizer with Linear Minimization Oracle:}
Recently, optimizers with linear minimization oracle (LMO) have been attracting a lot of attention \citep{liu2025muon,pethick2025training,riabinin2025gluonmakingmuon}.
LMO is defined as follows:
\begin{align*}
    \text{lmo} (\mX ; \mathcal{D}) &\coloneqq \argmin_{\mY \in \mathcal{D}} \langle \mX, \mY \rangle,
\end{align*}
where $\mathcal{D}$ is the convex set and $\langle \mX, \mY \rangle \coloneqq \sum_{i,j} X_{ij} Y_{ij}$.
Originally, the LMO has been used to solve the convex constrained problems in the Frank-Wolfe algorithm \citep{frank1956algorithm,jaggi2013revisiting}.
Recently, \citet{jordan2024muon} proposed Muon, which uses LMO for training neural networks,
which is an unconstrained optimization problem. They showed that Muon can train neural networks faster than AdamW \citep{loshchilov2018decoupled} and Shampoo \citep{gupta2018shampoo,shi2023distributed}, which are the most commonly used optimizers these days.

Specifically, the optimizers with LMO choose the unit ball as the constraint set $\mathcal{D}$, measured in any chosen norm $\|\cdot\|$.
With a slight abuse of notation, we use $\text{lmo} (\cdot)$ to represent $\text{lmo} (\cdot ; \mathcal{D})$ with $\mathcal{D} \coloneqq \{ \mY \in \mathcal{X} \mid \| \mY \| \leq 1\}$, i.e., 
\begin{align*}
    \text{lmo} (\mX) &\coloneqq \argmin_{\mY \in \{ \mY \in \mathcal{X} \mid \| \mY \| \leq 1\}} \langle \mX, \mY \rangle.
\end{align*}
Then, the update rules are given by:
\begin{align*}
    \mM^{(r+1)} &= (1 - \alpha) \mM^{(r)} + \alpha \nabla F (\mX^{(r)} ; \xi^{(r)}), \\
    \mX^{(r+1)} &= \mX^{(r)} + \eta \text{lmo} (\mM^{(r+1))}).
\end{align*}
By varying the norm, we can recover different popular optimizers.
Specifically, if parameter space is a vector when we choose the Euclidean norm and max norm, we can recover Normalized SGD with momentum \citep{cutkosky2020momentum} and Sign SGD with momentum \citep{sun2023momentum}, respectively.
Then, if the parameter space is $\mathbb{R}^{d_1 \times d_2}$ and we use the spectral norm for the LMO, we can obtain Muon \citep{jordan2024muon}.
Note that the parameter space needs to be the space of $d_1 \times d_2$ matrices for Muon.
Each layer is taken into account separately.
For instance, the parameter of the convolutional layer is $\textit{out\_channel} \times \textit{in\_channel} \times h \times w$ matrix.
When we use Muon, we consider $d_1 = \textit{out\_channel}$ and $ d_2= \textit{in\_channel} \times h \times w$.
The remaining scalar and vector parameters in the neural network are trained by other optimization methods, such as SGD or Adam.

For the remainder of the paper, we will not take separate layers into account and represent all the model parameters as a single matrix for simplicity of presentation. We refer to \cite{riabinin2025gluonmakingmuon} for an explanation of how to take into account every layer separately in the analysis of Muon.

\section{LocalMuon does not Always Converge}
\label{sec:local_muon}
First, we provide a lower bound showing that straightforwardly using the optimizer with the LMO as the local optimizer in FedAvg does not always converge.
For simplicity, we consider the setting where all clients participate in every round and perform exactly one local update.
Straightforwardly applying the optimizer with the LMO to FedAvg yields the following update rules:
\begin{align}
    \label{eq:simple_rule_1}
    \mM_i^{(r+1)} &= (1 - \alpha) \mM_i^{(r)}  + \alpha \nabla F_i (\mX^{(r)}, \xi_i^{(r)}), \\
    \label{eq:simple_rule_2}
    \mX_i^{(r+1)} &= \mX_i^{(r)} + \eta \text{lmo} \left( \mM_i^{(r+1)} \right), \\
    \label{eq:simple_rule_3}
    \mX^{(r+1)} &= \frac{1}{n} \sum_{i=1}^n \mX_i^{(r+1)}.
\end{align}
We refer to the above algorithm as \textsc{LocalMuon} (see \cref{sec:algorithm_of_local_muon} for \textsc{LocalMuon} with multiple local steps and partial participation).
However, the above straightforward method fails to reach a stationary point due to the bias introduced by the LMO, and the optimization process stagnates.
Specifically, the LMO is biased, since in general we have\begin{align*}
    \frac{1}{n} \sum_{i=1}^n \text{lmo} \left( \mM_i^{(r+1)} \right) \not = \text{lmo} \left( \frac{1}{n} \sum_{i=1}^n \mM_i^{(r+1)} \right).
\end{align*}
The momentum $\mM_i$ is the estimation of the gradient $\nabla f_i (\mX)$, while the quantity of $\tfrac{1}{n} \sum_{i=1}^n \text{lmo} (\mM_i)$ is biased and does not align with the gradient $\nabla f (\mX)$.
This intuitively shows why \textsc{LocalMuon} cannot converge to the stationary point, especially when clients have different loss functions.
The following theorem formalizes this failure, with the proof deferred to \cref{sec:proof_of_lower_bound}.

\begin{theorem}
\label{theorem:lower_bound}
For simplicity, we consider the initialization $\mM_i^{(0)} = 0$.
There exist convex functions $\{ f_i \}_{i=1}^n$ such that for any $r \geq 1$ rounds, the output of \textsc{LocalMuon} (\cref{eq:simple_rule_1,eq:simple_rule_2,eq:simple_rule_3}) is the same as the initial parameter and does not converge to the optimal solution and satisfies the following:
\begin{align*}
    \| \nabla f (\mX^{(r)}) \|^2 \geq \Omega(\zeta^2_\star), 
\end{align*}
where $\zeta^2_\star \coloneqq \frac{1}{n} \sum_{i=1}^n \left\| \nabla f_i (\mX^\star) \right\|^2$ and $\mX^\star \coloneqq \argmin f(\mX)$.
\end{theorem}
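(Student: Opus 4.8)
The plan is to weaponize exactly the bias identified just above the theorem statement: since in general $\frac1n\sum_i \text{lmo}(\mM_i) \ne \text{lmo}\bigl(\frac1n\sum_i \mM_i\bigr)$, I would engineer a heterogeneous instance in which the individual LMO directions are all nonzero but cancel exactly upon averaging, while the true averaged gradient $\nabla f$ stays nonzero. Then the aggregated iterate never leaves its initialization, so $\mX^{(r)} = \mX^{(0)}$ for every round and $\|\nabla f(\mX^{(r)})\|$ is frozen at a value bounded away from zero.

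Concretely, I would take $n = 2$ (the general case follows by replicating this pair of clients) and fix a nonzero matrix $\mG$ together with a scalar $\lambda > 0$, $\lambda \ne 1$. Define the strongly convex quadratics $f_1(\mX) = \tfrac12\|\mX + \mG\|_F^2$ and $f_2(\mX) = \tfrac12\|\mX - \lambda\mG\|_F^2$, take the stochastic gradients to be exact (deterministic $F_i = f_i$), and set $\mX^{(0)} = 0$. Then $\nabla f_1(\mX^{(0)}) = \mG$ and $\nabla f_2(\mX^{(0)}) = -\lambda\mG$: the two initial gradients are antiparallel but of unequal magnitude. The construction rests on two elementary properties of the LMO over the unit ball of the arbitrary norm $\|\cdot\|$: (i) positive homogeneity, $\text{lmo}(c\mG) = \text{lmo}(\mG)$ for all $c > 0$; and (ii) odd symmetry, $\text{lmo}(-\mG) = -\text{lmo}(\mG)$, which follows from the symmetry $\mY \in \{\mY : \|\mY\|\le 1\} \Leftrightarrow -\mY \in \{\mY : \|\mY\|\le 1\}$ of the ball. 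Together they give $\text{lmo}(\nabla f_1(\mX^{(0)})) + \text{lmo}(\nabla f_2(\mX^{(0)})) = \text{lmo}(\mG) + \text{lmo}(-\lambda\mG) = \text{lmo}(\mG) - \text{lmo}(\mG) = 0$.

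I would then show $\mX^{(r)} = \mX^{(0)}$ by induction on $r$. Under full participation and a single local step, each client begins round $r$ from the global iterate $\mX^{(r)}$. If $\mX^{(r)} = \mX^{(0)}$, the gradient seen by client $i$ is the constant $\mG_i \in \{\mG, -\lambda\mG\}$, so unrolling \cref{eq:simple_rule_1} from $\mM_i^{(0)} = 0$ gives $\mM_i^{(r+1)} = \bigl(1 - (1-\alpha)^{r+1}\bigr)\mG_i$, a strictly positive multiple of $\mG_i$. By positive homogeneity, $\text{lmo}(\mM_i^{(r+1)}) = \text{lmo}(\mG_i)$, and averaging the local updates (\cref{eq:simple_rule_2,eq:simple_rule_3}) yields $\mX^{(r+1)} = \mX^{(r)} + \tfrac{\eta}{2}\bigl(\text{lmo}(\mG) + \text{lmo}(-\lambda\mG)\bigr) = \mX^{(r)}$. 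Hence $\mX^{(r)} = \mX^{(0)} = 0$ for all $r$, matching the claim that the output equals the initialization.

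Finally I would verify the quantitative bound. Here $\nabla f(\mX^{(0)}) = \tfrac12(\mG - \lambda\mG) = \tfrac{1-\lambda}{2}\mG \ne 0$, the minimizer is $\mX^\star = \tfrac{\lambda-1}{2}\mG$, and $\nabla f_1(\mX^\star) = \tfrac{\lambda+1}{2}\mG$, $\nabla f_2(\mX^\star) = -\tfrac{\lambda+1}{2}\mG$, so $\zeta^2_\star = \bigl(\tfrac{\lambda+1}{2}\bigr)^2\|\mG\|^2$. Since every relevant quantity is a scalar multiple of $\mG$, the factor $\|\mG\|$ cancels in the ratio and $\|\nabla f(\mX^{(r)})\|^2 = \|\nabla f(\mX^{(0)})\|^2 = \tfrac{(\lambda-1)^2}{(\lambda+1)^2}\zeta^2_\star = \Omega(\zeta^2_\star)$ for any fixed $\lambda \ne 1$. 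The step requiring genuine care — and the reason the informal bias remark is not by itself a proof — is property (i) across the momentum accumulation: one must check that the accumulated momentum never vanishes and stays exactly a positive scalar multiple of the constant local gradient, so that the LMO direction is frozen across all rounds. Any ties in the argmin (should the chosen norm admit them) are handled by a selection rule consistent with the odd symmetry, e.g.\ the unique $\mU\mV^\top$ solution for the spectral norm.
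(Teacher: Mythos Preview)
Your proof is correct and follows essentially the same approach as the paper's: construct two clients whose local gradients at the initialization are antiparallel but of unequal magnitude, so the per-client LMO outputs cancel exactly while the averaged gradient remains nonzero, freezing the iterate at $\mX^{(0)}$. The paper specializes to $d=1$ with the Euclidean norm and the concrete pair $f_1(x)=x^2/2$, $f_2(x)=(x+a)^2/2$ initialized at $x^{(0)}=-a/4$; your parametrization by a free matrix $\mG$ and scalar $\lambda$ (together with the explicit appeal to positive homogeneity and odd symmetry of the LMO) is a mild generalization to arbitrary dimension and norm, but the mechanism is identical.
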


Note that \textsc{LocalMuon} is a simplified version of MuLoCo \citep{therien2025muloco}, where the momentum at the central server is disabled.
We formally analyze only \textsc{LocalMuon}, while \cref{theorem:lower_bound} shows that the parameter stays at the initial parameters and does not converge to the stationary point.
This indicates that MuLoCo also suffers from the same issue, as adding the momentum at the central server does not prevent the parameter from remaining at its initial value.

\begin{algorithm}[b!]
\renewcommand\algorithmicindent{1.2em}
\caption{\proposed{}}
\label{algorithm:proposed_method}
\begin{algorithmic}[1]
\State \textbf{Input:} total number of clients $n$, number of sampled clients $S$, and the number of local steps $K$.
\For{$r \in \{ 0, 1, \cdots, R-1 \}$}  \emph {\color{gray} (at the server)}
    \State sample $S$ clients $\mathcal{S}_{r} \subset [n]$. 
    \State send $\mX^{(r)}$ and $\mC^{(r)}$ to the sampled clients.
    \For{$i \in \mathcal{S}_r$}   \emph {\color{gray} (at the clients)}
        \State $\mX_i^{(r, 0)} \leftarrow \mX^{(r)}$ and $\mM_i^{(r, 0)} \leftarrow \mM_i^{(r-1, K)}$
        \For{$k = 0, 1, \cdots, K-1$}
            \State $\mM_i^{(r, k+1)} \leftarrow (1 - \alpha) \mM_i^{(r, k)} + \alpha \nabla F_i (\mX_i^{(r, k)} ; \xi_i^{(r, k)})$.
            \State $\mX_i^{(r, k+1)} \leftarrow \mX_i^{(r, k)} + \eta \text{lmo} \left( \mM_i^{(r, k+1)} - \mC_i^{(r)} + \mC^{(r)} \right)$.
        \EndFor
        \State $\mC_i^{(r+1)} \leftarrow \mM_i^{(r, K)}$
        \State send $\mX_i^{(r, K)}$ and $\mC_i^{(r+1)}$ to the central server.
    \EndFor  \emph { \color{gray} (end clients, back to the server)}
    \For{$i \in [n] \setminus \mathcal{S}_r$}
        \State $\mC_i^{(r+1)} \leftarrow \mC_i^{(r)}$ and $\mM_i^{(r, K)} \leftarrow \mM_i^{(r-1, K)}$.
    \EndFor
    \State $\mC^{(r+1)} \leftarrow \mC^{(r)} + \frac{1}{N} \sum_{i \in \mathcal{S}_r} \left( \mC_i^{(r+1)} - \mC_i^{(r)} \right)$.
    \State $\mX^{(r+1)} \leftarrow \frac{n-S}{n} \mX^{(r)} + \frac{1}{n} \sum_{i \in \mathcal{S}_r} \mX_i^{(r, K)}$.
\EndFor
\end{algorithmic}
\end{algorithm}

\section{FedMuon}
\label{sec:fedmuon}

In the previous section, we showed that due to the bias caused by the LMO, \textsc{LocalMuon} does always converge.
In this section, in Algorithm~\ref{algorithm:proposed_method} we propose \proposed{}, which mitigates this issue and provably converges to the stationary point.

Instead of applying the LMO to the momentum alone, we apply the LMO to the bias corrected version of the momentum (line 8) in \cref{algorithm:proposed_method}.
Similarly to SCAFFOLD \citep{karimireddy2020scaffold} we introduce control variates $\mC_i^{(r)}$ and $\mC^{(r)}$ to estimate the directions of the local client gradients $\nabla f_i (\mX^{(r)})$ and the global gradient $\nabla f (\mX^{(r)})$, respectively. Given that the local momentum parameters $\mM_i^{(r, k+1)}$ estimate local gradients $\nabla f_i (\mX^{(r, k)})$,
the corrected update, $\mM_i^{(r, k+1)} - \mC_i^{(r)} + \mC^{(r)}$ is a good estimation of the full gradient $\nabla f (\mX^{(r, k)})$, mitigating the issue of local bias.
When we remove the LMO and set $\alpha = 1$, \proposed{} recovers vanilla SCAFFOLD \citep{karimireddy2020scaffold}.
It is important to note that there are several papers that apply the momentum to SCAFFOLD \citep{cheng2024momentum,karimireddy2021breaking}, however all of them incorporate momentum at the central server, differing from our proposed \proposed{}.

\section{Convergence Analysis}

\subsection{Assumptions}
We first summarize the assumptions that we use in our theoretical results.
As is common in the prior literature analyzing optimizers with LMO \citep{pethick2025training,riabinin2025gluonmakingmuon}, we use the following smoothness assumption.
Note that the norm here is the same as the one used in the LMO.

\begin{assumption}
\label{assumption:smoothness}
There exists $L \geq 0$ so that it holds for any $\mX, \mY \in \mathcal{X}$
\begin{align*}
    \| \nabla f_i (\mX) - \nabla f_i (\mY) \|_\star \leq L \| \mX - \mY \|.
\end{align*}
\end{assumption}

Since we consider non-Euclidean norms, we measure gradient differences in the dual norm, while parameter differences are measured in the primal norm \citep[cf.][]{nesterov2018lectures,xie2024implicit}.
However, as shown in \cref{remark:relationship_between_two_norm}, any two norms are equivalent in finite dimensions, and thus the class of functions satisfying Assumption \ref{assumption:smoothness} and the conventional smoothness assumptions (formulated for Euclidean norms) is the same (see \cref{remark:smoothness}).

\begin{remark}[{\cite[Theorem 3.1]{conway2019course}}]
\label{remark:relationship_between_two_norm}
If $\mathcal{X}$ is a finite-dimensional vector space over $\mathbb{F}$, then for any two norms $\| \cdot \|_p$ and $\| \cdot \|_q$, there exist $c, C \geq 0$ such that $c \| \mX \|_p \leq \| \mX \|_q \leq C \| \mX \|_p$ for all $\mX \in \mathcal{X}$. 
\end{remark}

\begin{remark}
\label{remark:smoothness}
If it holds that $\| \nabla f_i (\mX) - \nabla f_i (\mY) \| \leq C L \| \mX - \mY \|$ for any $\mX, \mY \in \mathcal{X}$, then $f_i$ satisfies \cref{assumption:smoothness} where $C \coloneqq \sup_{\mX \in \mathcal{X}} \frac{\| \mX \|_\star}{\| \mX \|}$.
If $f_i$ satisfies \cref{assumption:smoothness}, it holds that $\| \nabla f_i (\mX) - \nabla f_i (\mY) \| \leq \frac{L}{c} \| \mX - \mY \|$ for any $\mX, \mY \in \mathcal{X}$ where $c \coloneqq \sup_{\mX \in \mathcal{X}} \tfrac{\| \mX \|}{\| \mX \|_\star}$.
\end{remark}

For the analysis of \proposed{}, we often use the trace norm and Frobenius norm.
The following inequality holds between the Frobenius norm and the trace norm.
\begin{example}
For any $\mX \in \mathbb{R}^{d_1 \times d_2}$, it holds that $\| \mX \|_F \leq \| \mX \|_\text{trace} \leq \sqrt{\min \{ d_1, d_2 \}}\| \mX \|_F$.
\end{example}

For the stochastic gradient noise, we use the following assumption, which is quite common in the optimization literature, e.g., \citep{bubeck2015convex}.

\begin{assumption}
The stochastic gradient is unbiased, i.e., $\mathbb{E} [\nabla F_i (\mX; \xi_i) ] = \nabla f_i (\mX)$ for any $\mX \in \mathcal{X}$. Then, there exists $\sigma \geq 0$ so that it holds for any $\mX \in \mathcal{X}$
\label{assumption:stochastic_noise}
\begin{align*}
    \mathbb{E}_{\xi_i \sim \mathcal{D}_i} \left\| \nabla F_i (\mX ; \xi_i) - \nabla f_i (\mX) \right\|^2_F \leq \sigma^2. 
\end{align*}
\end{assumption}

\subsection{Convergence Result}

We provide the convergence rate of \proposed{} in \Cref{theorem:convergence_analysis}. 
For simplicity, we present the results for the special case $S = n$, where all clients participate during the training. 
The general case with arbitrary $S$ is provided in \cref{lemma:convergence_rate} in \cref{sec:proof_of_main_theorem}. 
The proof is deferred to \cref{sec:proof_of_main_theorem}.

\begin{theorem}
\label{theorem:convergence_analysis}
Consider \cref{algorithm:proposed_method}.
We define $\mX^{(r, k)} = \frac{1}{n} \sum_{i=1}^n \mX_i^{(r, k)}$.
Note that $\mX^{(r+1)} = \mX^{(r, K)}$.
Suppose that $n=S$ and \cref{assumption:smoothness,assumption:stochastic_noise} hold,  $\mC_i^{(0)} \coloneqq \mM_i^{(0, 0)}$ and $\mC^{(0)} \coloneqq \tfrac{1}{n} \sum_{i=1}^n \mC_i^{(0)}$, there exists $\eta$ and $\alpha$ so that it satisfies
\begin{align*}
    \frac{1}{R K} \sum_{r=0}^{R-1} \sum_{k=0}^{K-1} \mathbb{E} \left\| \nabla f (\mX^{(r, k)}) \right\|_\star 
    &\leq \mathcal{O} \left( 
        \left( \frac{L r_0 \tilde{\sigma}^2}{n R K} \right)^\frac{1}{4}
        + \left(  \frac{L r_0 \tilde{\sigma}}{R \sqrt{K}} \right)^\frac{1}{3}
        + \left( \frac{L r_0}{R} \right)^\frac{1}{2} \right. \\
        &\quad \left. + \tilde{\sigma}_0 \left[ \frac{1}{R} 
        + \left( \frac{\tilde{\sigma}^2 K}{L r_0 R n} \right)^{\frac{1}{2}}
        + \left( \frac{\tilde{\sigma}^2 K^2}{L r_0 R^2}\right)^\frac{1}{3} \right]
    \right),
\end{align*}
where $r_0 \coloneqq f (\mX^{(0)}) - f^\star, \rho \coloneqq \sup_{\mX \in \mathcal{X}} \tfrac{\| \mX \|_\star}{\| \mX \|_F}, \tilde{\sigma} \coloneqq \rho \sigma$, $\tilde{\sigma}_0 \coloneqq \rho \sigma_0$, and $\sigma_0^2 \coloneqq \tfrac{1}{n} \sum_{i=1}^n \mathbb{E} \| \mM_i^{(0,0)} - \nabla f_i (\mX^{(0)}) \|_F^2$.
\end{theorem}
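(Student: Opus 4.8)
The plan is to run a descent argument on the averaged trajectory $\mX^{(r,k)} \coloneqq \frac{1}{n}\sum_{i=1}^n \mX_i^{(r,k)}$. Because the local iterates are re-synchronized to $\mX^{(r)}$ at the start of each round and $\mX^{(r+1)} = \mX^{(r,K)}$, this averaged sequence evolves seamlessly across round boundaries as $\mX^{(r,k+1)} = \mX^{(r,k)} + \frac{\eta}{n}\sum_{i=1}^n \text{lmo}(\mD_i^{(r,k+1)})$, where $\mD_i^{(r,k+1)} \coloneqq \mM_i^{(r,k+1)} - \mC_i^{(r)} + \mC^{(r)}$ is the bias-corrected momentum. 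First I would apply \cref{assumption:smoothness} to obtain the non-Euclidean descent inequality $f(\mX^{(r,k+1)}) \le f(\mX^{(r,k)}) + \langle \nabla f(\mX^{(r,k)}), \mX^{(r,k+1)} - \mX^{(r,k)} \rangle + \tfrac{L}{2}\|\mX^{(r,k+1)} - \mX^{(r,k)}\|^2$. Since $\|\text{lmo}(\cdot)\| \le 1$ by construction, the averaged step satisfies $\|\mX^{(r,k+1)} - \mX^{(r,k)}\| \le \eta$, so the quadratic term is bounded by $\tfrac{L\eta^2}{2}$ unconditionally; this robustness is precisely what will later let the method tolerate an inexact LMO, but it also forces the entire descent to come from the linear term.

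For the linear term I would exploit the defining identity of the LMO over the unit ball, $\langle \mZ, \text{lmo}(\mZ) \rangle = -\|\mZ\|_\star$, together with $\|\text{lmo}(\mZ)\| \le 1$, to get the per-client estimate $\langle \nabla f(\mX^{(r,k)}), \text{lmo}(\mD_i^{(r,k+1)}) \rangle \le -\|\nabla f(\mX^{(r,k)})\|_\star + 2\|\nabla f(\mX^{(r,k)}) - \mD_i^{(r,k+1)}\|_\star$. Averaging over $i$, taking expectations, telescoping over all $(r,k)$, and dividing by $\eta R K$ then reduces the theorem to controlling the averaged corrected-momentum error $\frac{1}{n}\sum_{i=1}^n \mathbb{E}\|\nabla f(\mX^{(r,k)}) - \mD_i^{(r,k+1)}\|_\star$ at every step, the leftover descent residual $r_0/(\eta R K) + \tfrac{L\eta}{2}$ contributing a deterministic term after tuning $\eta$.

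Next I would decompose the corrected-momentum error. Using $\mC_i^{(r)} = \mM_i^{(r-1,K)}$ and $\mC^{(r)} = \frac{1}{n}\sum_j \mC_j^{(r)}$ (valid since $S=n$), I would split it into (i) the local momentum-tracking error $\mM_i^{(r,k+1)} - \nabla f_i(\mX_i^{(r,k)})$, (ii) client-drift terms arising from $\nabla f_i(\mX_i^{(r,k)}) - \nabla f_i(\mX^{(r)})$ and $\nabla f(\mX^{(r,k)}) - \nabla f(\mX^{(r)})$, controlled through \cref{assumption:smoothness}, and (iii) a control-variate error measuring how closely $\mC^{(r)} - \mC_i^{(r)}$ tracks $\nabla f(\mX^{(r)}) - \nabla f_i(\mX^{(r)})$. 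Passing between $\|\cdot\|_\star$ and $\|\cdot\|_F$ via \cref{remark:relationship_between_two_norm} (and the stated Frobenius--trace comparison) introduces the constant $\rho = \sup_{\mX}\|\mX\|_\star/\|\mX\|_F$, which is exactly what converts $\sigma,\sigma_0$ into $\tilde{\sigma} = \rho\sigma$ and $\tilde{\sigma}_0 = \rho\sigma_0$. I would then establish three coupled recursions: a momentum recursion that contracts by $(1-\alpha)$ while injecting $\alpha^2\sigma^2$ noise and an $L^2\eta^2$ drift term per step (plus a boundary correction at each round transition, where the momentum persists as the iterate jumps from $\mX_i^{(r,K)}$ to $\mX^{(r+1)}$); a control-variate recursion in which, because $\mC_i^{(r+1)} = \mM_i^{(r,K)}$, the round-$(r+1)$ control error is inherited from the end-of-round momentum error; and a client-drift bound $\|\mX_i^{(r,k)} - \mX^{(r)}\| \le \eta k \le \eta K$ that follows immediately from the unit-bounded LMO directions.

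Finally I would assemble a Lyapunov function combining $f(\mX^{(r)})$ with suitably weighted momentum- and control-error terms, telescope it over the $R$ rounds, and optimize over $\eta$ and $\alpha$. The main obstacle is closing the coupled system: the momentum error depends on the client drift, the drift on the magnitude of the correction, the correction on the control error, and the control error back on the end-of-round momentum error, so the Lyapunov weights must be chosen so that these cross-terms cancel and the induced coefficient recursion stays contractive, which in turn pins down the admissible ranges of $\eta$ and $\alpha$. Once the system is closed, balancing the descent residual against the drift error (of order $L\eta K$ per averaged step) yields the deterministic $(Lr_0/R)^{1/2}$ term, balancing against the stochastic-noise and momentum-lag errors produces the $\tfrac14$ and $\tfrac13$ exponents, and the initial mismatch $\sigma_0^2$---damped only at the geometric rate $(1-\alpha)$---contributes the separate $\tilde{\sigma}_0$ block.
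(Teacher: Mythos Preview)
Your proposal is correct and shares the paper's opening moves: the non-Euclidean descent lemma, the LMO identity $\langle \mZ, \text{lmo}(\mZ)\rangle = -\|\mZ\|_\star$, and the reduction to controlling $\|\nabla f(\mX^{(r,k)}) - \mD_i^{(r,k+1)}\|_\star$. The routes diverge after that. The paper does not use your SCAFFOLD-style split into momentum-tracking, client-drift, and control-variate pieces; instead it writes
\[
\|\nabla f(\mX^{(r,k)}) - \mD_i^{(r,k+1)}\|_\star \le \|\nabla f(\mX^{(r,k)}) - \nabla f(\mX^{(r-1,K-1)})\|_\star + \|\nabla f(\mX^{(r-1,K-1)}) - \mC^{(r)}\|_\star + \|\mM_i^{(r,k+1)} - \mC_i^{(r)}\|_\star
\]
and then bounds each piece by \emph{fully unrolling} the momentum history across all rounds, exploiting that $\|\text{lmo}(\cdot)\|\le 1$ forces every iterate difference to be at most $\eta$ times a step count. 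This yields explicit closed-form bounds (geometric sums in $(1-\alpha)$ against $O(\eta K)$ drifts) for each of the three pieces, and the theorem then follows by summing the per-step descent inequalities and tuning $\eta,\alpha$. No Lyapunov function or coupled recursion appears anywhere: the ``main obstacle'' you flag---choosing weights so the cross-terms contract---is sidestepped entirely, precisely because the bounded LMO directions make all drift and staleness terms directly summable. Your Lyapunov scheme would also close and is the natural reflex coming from the SCAFFOLD literature, but here it is heavier than needed; the paper's unrolling is shorter and makes the dependence on $\alpha$ and $K$ more transparent.
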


\paragraph{Discussion:}
Unlike \textsc{LocalMuon}, \cref{theorem:convergence_analysis} shows that \proposed{} can mitigate the issue that the LMO is a biased operator and can converge to the stationary point.
The dominant term is $\mathcal{O} (\tfrac{L r_0 \tilde{\sigma}^2}{n R K})^\frac{1}{4}$, which is almost the same as the terms appearing in the rate of FedAvg and SCAFFOLD \citep{karimireddy2020scaffold}, and the convergence rate is improved as the number of clients $n$ increases.
The only difference is that the convergence rate of \proposed{} depends on $\rho$, while this is because \cref{theorem:convergence_analysis} analyzes the dual norm of the gradient.
For instance, when the norm is the Frobenius norm, the dual norm is also the Frobenius norm and $\rho = 1$.
The last three terms arise from the initial error $\sigma_0$, which diminish faster than the other terms as the number of rounds $R$ increases.

We consider the case where the parameter space is $\mathcal{X} = \mathbb{R}^{d_1 \times d_2}$ and the spectral norm is used, as in Muon \citep{liu2025muon}. Since the dual of the spectral norm is the trace norm, we have $\| \nabla f (\mX) \|_F \leq \| \nabla f(\mX) \|_\text{trace}$. Consequently, \proposed{} can converge faster than SCAFFOLD in certain cases.
For instance, if the stochastic noise $\sigma$ is sufficiently small, \proposed{} converges as:
\begin{align*}
    \frac{1}{R K} \sum_{r=0}^{R-1} \sum_{k=0}^{K-1} \mathbb{E} \left\| \nabla f (\mX^{(r, k)}) \right\|_\text{trace} 
    &\leq \mathcal{O} \left( \left( \frac{r_0 L}{R}  \right)^\frac{1}{2} \right),
\end{align*}
and SCAFFOLD converges as follows (see Theorem 3 in \citep{karimireddy2020scaffold}):
\begin{align*}
    \frac{1}{R K} \sum_{r=0}^{R-1} \sum_{k=0}^{K-1} \mathbb{E} \left\| \nabla f (\mX^{(r, k)}) \right\|_F 
    &\leq \mathcal{O} \left( \left( \frac{r_0 L_F}{R}  \right)^\frac{1}{2} \right).
\end{align*}
where $L_F$ refers to the smoothness of $f_i$ with respect to the Frobenius norm.
Thus, when $L = \sup_{i \in [n], \mX,   \|\mU\|_{\text{sp}}\le 1}\langle \mU, \nabla^2 f_i(\mX) \mU \rangle \approx L_F$, i.e., when the Hessians have a few dominant singular values—equivalently, when they are approximately low-rank, then \proposed{} can converge faster than SCAFFOLD.
More precisely, the terms on the right-hand side are the same, and the only difference is the choice of the norm.
We stress that \cref{theorem:convergence_analysis} does not claim \proposed{} always converges faster, but it does suggest that in certain cases \proposed{} can outperform. This helps explain the strong empirical performance of Muon and \proposed{}.

\begin{algorithm}[b!]
\renewcommand\algorithmicindent{1.2em}
\caption{Newton-Schulz iteration}
\label{algorithm:ns_iteration}
\begin{algorithmic}[1]
\State \textbf{Input:} matrix $\mG$ and hyperparameters $a, b, c \in \mathbb{R}$.
\State $\mG^{(0)} \leftarrow \frac{\mG}{\| \mG \|_F}$. 
\For{$t \in \{ 0, 1, \cdots, T-1 \}$}
    \State $\mG^{(t+1)} \leftarrow a \mG^{(t)} + b (\mG^{(t)} {\mG^{(t)}}^\top) \mG^{(t)} + c (\mG^{(t)} {\mG^{(t)}}^\top)^2 \mG^{(t)}$. 
\EndFor
\State \textbf{Retern} $- \mG^{(T)}$
\end{algorithmic}
\end{algorithm}

\section{FedMuon with Inexact LMO}
\label{sec:fedmuon_with_inexact_lmo}

In the previous section, we considered the general case with an arbitrary norm and exact LMO. Here, we focus on the spectral norm, as in Muon \citep{liu2025muon}, and analyze \proposed{} when the LMO is only approximately solved via the Newton–Schulz iteration.
Then, thanks to the special property of spectral norm and Newton-Schulz iteration, we reveal that \proposed{} can converge to the stationary point regardless of how accurately we solve the LMO.

With the spectral norm, the LMO takes the following form:
\begin{align*}
    \text{lmo}_\text{muon} (\mX) \coloneqq \argmin_{\mY \in \{ \mY \in \mathbb{R}^{d_1 \times d_2} \mid \| \mY \|_\text{sp} \leq 1 \}} \langle \mX, \mY \rangle,
\end{align*}
Let the singular value decomposition of $\mX$ be $\mU \Sigma \mV$.
Then the LMO output is $- \mU \mV$, but computing this exactly is computationally expensive.
To address this, \citet{liu2025muon} proposed approximating the LMO via a fixed number of Newton–Schulz iterations (e.g., 5). 
The update rule of the Newton-Schulz iteration is described in \cref{algorithm:ns_iteration}.
Since the procedure involves only matrix multiplications, it can be efficiently executed on a GPU.
In the following, we analyze the convergence of \proposed{} when the LMO is solved approximately using Newton–Schulz iterations and characterize how inexactness impacts convergence.

Under the same assumption as in \cref{theorem:convergence_analysis}, we provide the convergence rate when we run Newton-Schulz iteration $T$ times to solve the LMO approximately and show how $T$ affects convergence.
For simplicity, we set $n=S$ and we use $a=\frac{15}{8}, b=-\frac{5}{4}$ and $c=\frac{3}{8}$ for the Newton-Schulz iteration, following the hyperparameter setting mentioned in \citet{amsel2025polar}.
For the general case of arbitrary $S$ we refer to \cref{lemma:convergence_rate_with_inexact_lmo} in \cref{sec:proof_of_theorem_with_inexact_lmo}. 

\begin{theorem}
\label{theorem:convergence_analysis_with_ns_iteration}
Consider \cref{algorithm:proposed_method} with the spectral norm and suppose that the LMO in line 8 is solved approximately using \cref{algorithm:ns_iteration} with $a=\frac{15}{8}, b=-\frac{5}{4}$, and $c=\frac{3}{8}$.
We define $\mX^{(r, k)} = \frac{1}{n} \sum_{i=1}^n \mX_i^{(r, k)}$.
Note that $\mX^{(r+1)} = \mX^{(r, K)}$.
Suppose that $n=S$ and \cref{assumption:smoothness,assumption:stochastic_noise} hold,  $\mC_i^{(0)} \coloneqq \mM_i^{(0, 0)}$ and $\mC^{(0)} \coloneqq \tfrac{1}{n} \sum_{i=1}^n \mC_i^{(0)}$.
Then, for any number of Newton-Schulz iteration $T \geq 0$, there exists $\eta$ and $\alpha$ so that it satisfies
\begin{align*}
    \frac{1}{R K} \sum_{r=0}^{R-1} \sum_{k=0}^{K-1} \mathbb{E} \left\| \nabla f (\mX^{(r, k)}) \right\|_p 
    &\leq \mathcal{O} \left( 
        \left( \frac{L r_0 \tilde{\sigma}^2}{n R K} \right)^\frac{1}{4}
        + \left( \frac{L r_0 \tilde{\sigma}}{R \sqrt{K}} \right)^\frac{1}{3}
        + \left( \frac{L r_0}{R} \right)^\frac{1}{2} \right. \\
        &\quad \left. + \tilde{\sigma}_0 \left[ \frac{1}{R} 
        + \left( \frac{\tilde{\sigma}^2 K}{L r_0 R n} \right)^{\frac{1}{2}}
        + \left( \frac{\tilde{\sigma}^2 K^2}{L r_0 R^2}\right)^\frac{1}{3} \right]
    \right),
\end{align*}
where $r_0 \coloneqq f (\mX^{(0)}) - f^\star, \rho \coloneqq \sqrt{\min \{ d_1, d_2 \}}, \tilde{\sigma} \coloneqq \rho \sigma$, $\tilde{\sigma}_0 \coloneqq \rho \sigma_0$, and $\sigma_0^2 \coloneqq \tfrac{1}{n} \sum_{i=1}^n \mathbb{E} \| \mM_i^{(0,0)} - \nabla f_i (\mX^{(0)}) \|_F^2$.
Then, $p$ is defined as follows:
\begin{align*}
    p &\coloneqq 1 + \frac{\log \left( 1 - (1 - \kappa)^{{1.5}^T} \right)}{\log \kappa}, \\
    \kappa &\coloneqq \min_{j,i,r,k} \frac{s_{j,i,r,k}}{\sqrt{\sum_{j'} s_{j',i,r,k}^2}} \; (> 0),
\end{align*}
where $\{ s_{j,i,r,k} \}_j$ are non-zero singular values of $\mM_i^{(r, k+1)} - \mC_i^{(r)} + \mC^{(r)}$. 
\end{theorem}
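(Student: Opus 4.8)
The plan is to show that the Newton--Schulz output $-\mG^{(T)}$ behaves as an \emph{approximate} linear minimization oracle for the Schatten-$p$ geometry, and then to re-run the proof of \cref{theorem:convergence_analysis} with the dual norm $\|\cdot\|_\text{trace}$ replaced by $\|\cdot\|_p$. First I would record the spectral action of \cref{algorithm:ns_iteration}: writing the input as $\mG = \mU \Sigma \mV^\top$, each step sends $\mG^{(t)} = \mU \Sigma^{(t)} \mV^\top$ to $\mU\,\phi(\Sigma^{(t)})\,\mV^\top$, where $\phi(x) = ax + bx^3 + cx^5$ acts on the singular values alone; thus the singular vectors are preserved throughout and only the singular values evolve under the scalar map $\phi$. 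After the initial normalization $\mG^{(0)} = \mG/\|\mG\|_F$, the nonzero singular values $\tilde{\sigma}_j \coloneqq \sigma_j/\|\mG\|_F$ start in $[\kappa,1]$ by the very definition of $\kappa$, and since the exact and inexact oracles share the singular vectors of $\mG$, all alignment quantities reduce to one-dimensional sums over singular values.

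Next I would invoke the contraction property of this specific quintic (the coefficients of \citet{amsel2025polar}): $\phi$ maps $[0,1]$ into $[0,1]$ with $1$ as a fixed point, and the key scalar estimate is $1 - \phi(x) \leq (1-x)^{3/2}$ on $[0,1]$. Iterating $T$ times from $\tilde{\sigma}_j$ then gives, for every nonzero singular value, $\tilde{\sigma}_j^{(T)} \in [\,1 - (1-\tilde\sigma_j)^{1.5^T},\,1\,] \subseteq [\,1 - (1-\kappa)^{1.5^T},\,1\,]$. In particular $\|{-\mG^{(T)}}\|_\text{sp} = \max_j \tilde\sigma_j^{(T)} \leq 1$, which is exactly the feasibility property needed to control the quadratic term of the smooth descent inequality under the spectral norm, precisely as in the exact-oracle argument.

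The crux is the surrogate sharpness inequality $\langle \mG, -\mG^{(T)}\rangle = -\sum_j \sigma_j \tilde{\sigma}_j^{(T)} \leq -\|\mG\|_p$. It suffices to establish the pointwise comparison $1 - (1-x)^{1.5^T} \geq x^{\,p-1}$ on $[\kappa,1]$: combined with the lower bound $\tilde\sigma_j^{(T)} \geq 1-(1-\tilde\sigma_j)^{1.5^T}$ and the homogeneity identity $\sum_j \sigma_j \tilde\sigma_j^{\,p-1} = \|\mG\|_F^{1-p}\|\mG\|_p^p \geq \|\mG\|_p$ (the last step using $\|\mG\|_p \geq \|\mG\|_F$ for $p\le 2$), this yields the claim. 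Taking logarithms, the comparison is equivalent to $\Phi(x) \coloneqq \log\!\big(1-(1-x)^{1.5^T}\big)/\log x \leq \Phi(\kappa) = p-1$ on $[\kappa,1]$, and I would prove it by showing that $\Phi$ is nonincreasing on $(0,1)$ (with boundary values $\Phi(0^+)=1$ and $\Phi(1^-)=0$); the exponent $p$ is defined exactly so that equality holds at the left endpoint $x=\kappa$. The endpoints are reassuring sanity checks: $T=0$ forces $\Phi\equiv 1$, hence $p=2$, consistent with the Frobenius oracle $-\mG/\|\mG\|_F$, while $T\to\infty$ drives $p\to 1$, recovering the trace-norm sharpness of the exact oracle.

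Finally, with the two structural properties $\|{-\mG^{(T)}}\|_\text{sp}\le 1$ and $\langle \mG, -\mG^{(T)}\rangle \le -\|\mG\|_p$ in hand, I would substitute $-\mG^{(T)}$ for the exact oracle into the proof of \cref{theorem:convergence_analysis}, which uses only these feasibility and sharpness properties with the dual norm now taken to be $\|\cdot\|_p$. The SCAFFOLD-style bias-correction cross terms $\langle \nabla f - \mG, -\mG^{(T)}\rangle$ are bounded by spectral--trace duality as $\|\nabla f - \mG\|_\text{trace}$ and then converted from the Frobenius-controlled noise of \cref{assumption:stochastic_noise} through the inequality $\|\cdot\|_\text{trace}\le\sqrt{\min\{d_1,d_2\}}\,\|\cdot\|_F$, reproducing $\tilde\sigma=\rho\sigma$ and $\tilde\sigma_0=\rho\sigma_0$ with $\rho=\sqrt{\min\{d_1,d_2\}}$ verbatim. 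Since the right-hand side is then identical to \cref{theorem:convergence_analysis} and only the left-hand measuring norm changes to $\|\cdot\|_p$, the bound holds for every $T\ge 0$. The main obstacle is the third step—the monotonicity of $\Phi$ and hence the pointwise comparison that pins down $p$; the remaining pieces are either the cited scalar contraction or a line-by-line transcription of the existing analysis with the dual norm relabelled.
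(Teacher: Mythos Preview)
Your proposal is correct and follows essentially the same route as the paper: establish the two structural properties of the Newton--Schulz output (spectral-norm feasibility and the sharpness bound $\langle \mG, -\mG^{(T)}\rangle \le -\|\mG\|_p$, which is the paper's \cref{lemma:inexact_lmo2}), then re-run the descent and error-control lemmas from the proof of \cref{theorem:convergence_analysis} verbatim with the trace norm on the left replaced by $\|\cdot\|_p$. Your derivation of the sharpness bound via the pointwise comparison $1-(1-x)^{1.5^T}\ge x^{\,p-1}$ on $[\kappa,1]$ and the monotonicity of $\Phi$ is in fact more explicit than the paper, which simply asserts the conclusion ``using the definition of $p$ and $\kappa$'' after the contraction estimate $1-\phi(x)\le(1-x)^{1.5}$.
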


\begin{remark}
When $T=0$, $p=2$. 
As $T$ increase, $p$ monotonically decreases to $1$ for any $\kappa>0$.
\end{remark}

\begin{remark}
Recall that $\| \cdot \|_p$ is the Schatten $p$-norm.
For any $1 \leq p \leq q$, we have $\| \mA \|_q \leq \| \mA \|_p$.
Then, $\| \mA \|_p$ becomes $\| \mA \|_\text{trace}$ and $\| \mA \|_F$ when $p=1$ and $p=2$, respectively.
\end{remark}

\paragraph{Discussion:}
Surprisingly, the above theorem shows that \proposed{} converges to the stationary point, regardless of how many times we run the Newton-Schulz iteration.
The only difference between the case when we solve the LMO exactly, i.e., \cref{theorem:convergence_analysis}, and the case when we solve the LMO approximately, i.e., \cref{theorem:convergence_analysis_with_ns_iteration}, is that \cref{theorem:convergence_analysis} establishes the convergence in the trace norm of the gradient $\| \nabla f (\mX) \|_\text{trace} (= \| \nabla f (\mX) \|_1)$,\footnote{When $\| \cdot \|$ is the spectral norm, its dual norm is the trace norm.} while \cref{theorem:convergence_analysis_with_ns_iteration} establishes the convergence in the Schatten $p$-norm $\| \nabla f (\mX) \|_p$. We recover the convergence rate of Theorem~\ref{theorem:convergence_analysis} by setting $T \to \infty$, and therefore have $p \to 1$.
Since we have $\| \mA \|_q \leq \| \mA \|_p$ when $1 \leq p \leq q$, \cref{theorem:convergence_analysis_with_ns_iteration} implies that \proposed{} can converge faster when we increase the number of Newton-Schulz iterations $T$.
More specifically, since it holds that $\| \mA \|_1 \leq \sqrt{\min\{ d_1, d_2 \}} \| \mA \|_2$, solving the LMO accurately can improve the convergence rate by up to a factor of $\sqrt{\min\{ d_1, d_2 \}}$.
In our experiments, we will demonstrate that \proposed{} can train neural networks even if $T=0$, while \proposed{} can achieve higher accuracy as $T$ increases (see \cref{sec:effect_of_inexact_lmo}).
These observations are consistent with \cref{theorem:convergence_analysis_with_ns_iteration}.

The quantity of $(1 - \kappa)^{{1.5}^T}$ in the definition of $p$ measures how fast the Newton-Schulz iteration converges.
If we consider the worst case, $\kappa$ could be arbitrarily close to zero, and thus a large $T$ would be required to sufficiently decrease $p$.
However, the main implication of \cref{theorem:convergence_analysis_with_ns_iteration} is that increasing $T$ leads to an improved convergence rate.
Indeed, our experiments show that even increasing $T$ from $0$ to $1$ dramatically improves accuracy (see \cref{sec:effect_of_inexact_lmo}).

\paragraph{Comparison with Existing Analysis with Inexact LMO:}
There are many papers that analyzed the convergence rate of Muon, while most of them assumed that the LMO is exactly solved \citep{pethick2025training,riabinin2025gluonmakingmuon,shen2025convergence}.
The only study analyzing the rate with an inexact LMO is \citet{refael2025sumo}. However, they also assumed that we run Newton-Schulz iterations a certain number of times (see Lemma 3.3 and Remark 3.6 in \citep{refael2025sumo}).
Compared with these prior analyses, our novel analysis provides a stronger claim that \proposed{} can converge to the stationary point for any number of the Newton-Schulz iterations $T \geq 0$.
Furthermore, it is first observed by \cref{theorem:convergence_analysis_with_ns_iteration} that the different norms of the gradient are bounded depending on $T$.

\paragraph{Proof Sketch:}
In the following, we provide an intuition for why \proposed{} can converge for any $T \geq 0$.
If we solve the LMO exactly, we have
\begin{align}
    \label{eq:exact_lmo}
    \langle \mG, \text{lmo}_\text{muon} (\mG) \rangle = - \| \mG \|_\text{trace}, \qquad \left\| \text{lmo}_\text{muon} (\mG) \right\|_\text{sp} \leq 1.
\end{align}
The first equality holds from the definition of the dual norm (see \cref{lemma:lmo}), and the second inequality holds since the solution of the LMO satisfies the constraint.
Then, the output of the Newton-Schulz iteration satisfies the following (see \cref{lemma:inexact_lmo2}):
\begin{align}
    \label{eq:inexact_lmo]}
    - \| \mG \|_\text{trace} \leq \langle \mG, - \mG^{(T)} \rangle \leq - \| \mG \|_p, \qquad \left\| - \mG^{(T)} \right\|_\text{sp} \leq 1.
\end{align}
The above inequality indicates that even if we run the Newton-Schulz iteration only a few times to solve the LMO approximately, the output of the Newton-Schulz iteration is a proper direction to minimize the loss function, and \proposed{} can converge to the stationary point.
For instance, when $T=0$, the output of Newton-Schulz iteration is $- \tfrac{\mG}{\| \mG \|_F}$, which corresponds to the normalized gradient, and it is natural that \proposed{} can converge to the stationary point when $T=0$.
Then, if we run the Newton-Schulz iteration $T$ times, the output of the Newton-Schulz iteration comes close to the exact solution of LMO and remains a proper direction to minimize the loss function.
Thanks to this property, \proposed{} can converge to the stationary point for any number of Newton-Schulz iterations $T$.

\section{Experiment}
\begin{figure}[t]
    \vskip - 0.1 in
    \begin{minipage}[t]{\hsize}
        \centering
        \includegraphics[width=\linewidth]{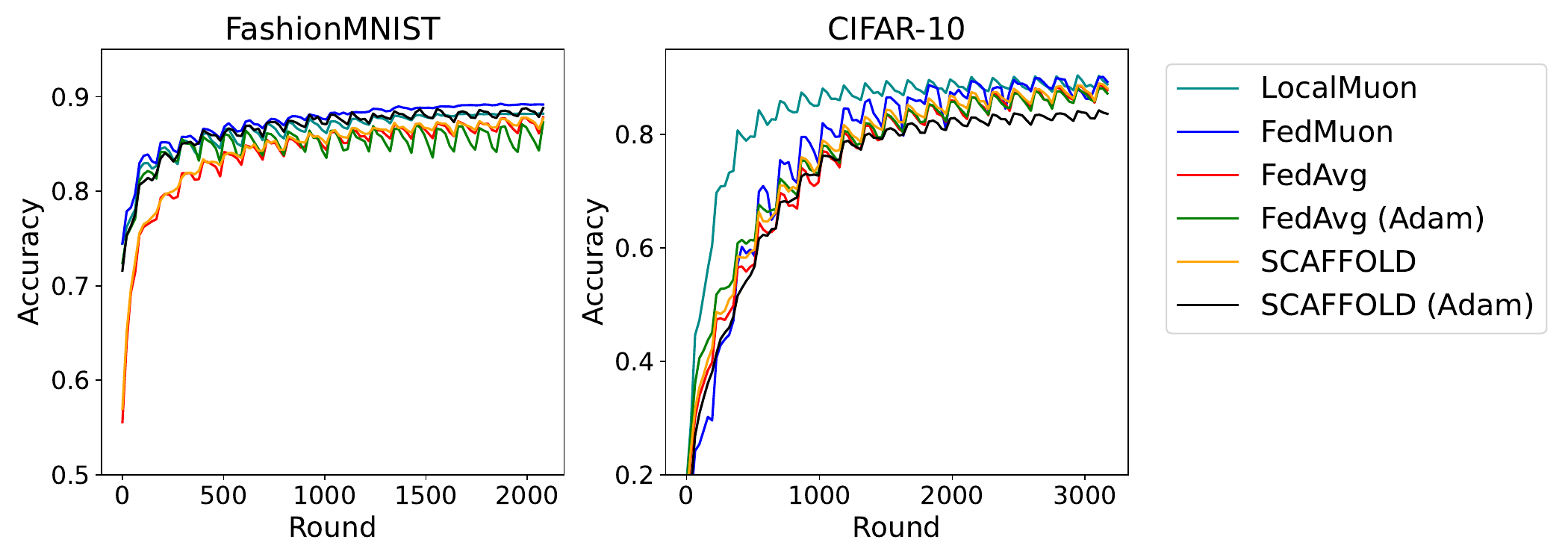}
        \vskip - 0.1 in
        \subcaption{Homogeneous Case ($\beta=10.0$)}
        \label{fig:iid}
    \end{minipage}
    \begin{minipage}[t]{\hsize}
        \centering
        \includegraphics[width=\linewidth]{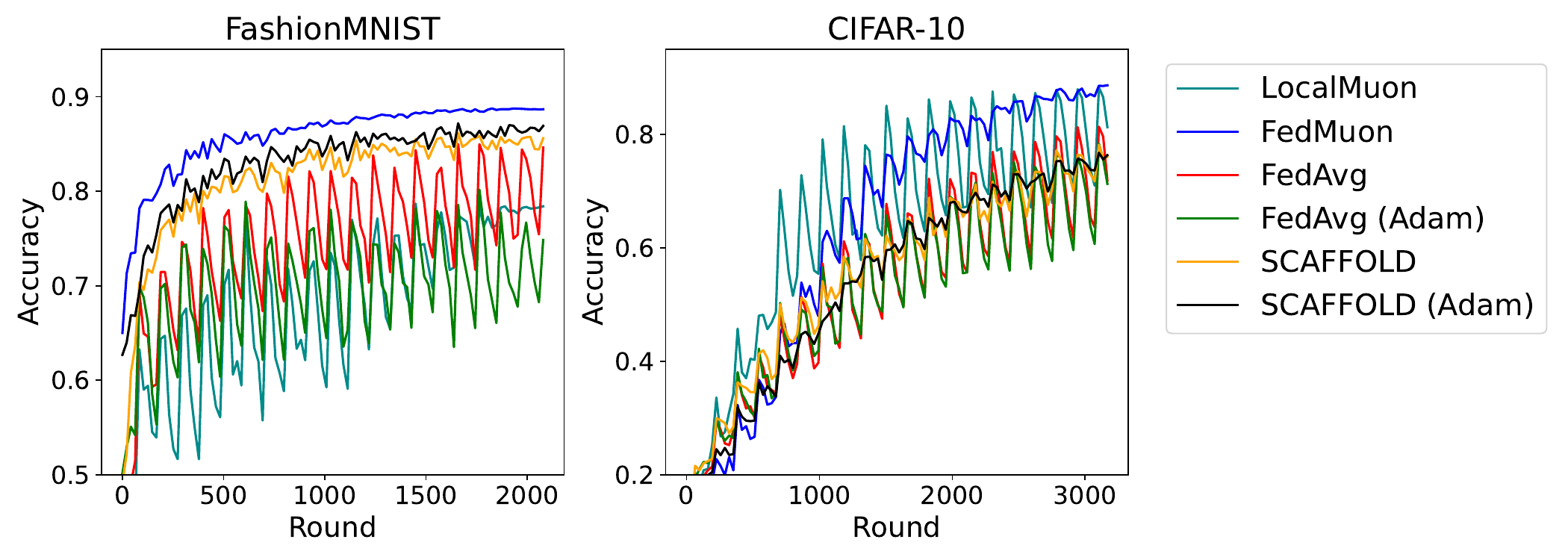}
        \vskip - 0.1 in
        \subcaption{Heterogeneous Case ($\beta=0.1$)}
        \label{fig:non_iid}
    \end{minipage}
    \caption{Training curves of various methods. For all settings, \proposed{} can achieve higher test accuracy than other methods.}
    \label{fig:main_results}
    \vskip - 0.1 in
\end{figure}
\subsection{Federated Learning Tasks}
\paragraph{Setup:}
We used FashionMNIST \citep{xiao2017fashion} and CIFAR-10 \citep{krizhevsky09learning} as training datasets, and used LeNet \citep{lecun1998gradient} for Fashion MNIST and ResNet-18 \citep{He2015} for CIFAR-10.
Following the prior paper \citep{hsieh2020non}, we used Group Normalization \citep{wu2018group} instead of Batch Normalization \citep{ioffe2015batch} for ResNet-18.
We set the number of clients $n$ to $16$ and sampled $S=8$ clients every round.
We set the number of local steps $K$ to $5$ and set the number of epochs to $100$ and $200$ for FashionMNIST and CIFAR-10, respectively.
Following the prior paper \citep{hsu2019measuring}, we distributed the training dataset to clients by using Dirichlet distributions with hyperparameter $\beta$.
As $\beta$ approaches zero, each clients come to have a different training dataset.
We tuned the stepsize by grid search. See \cref{sec:hyperparameter_tuning} for details.
The experiments were repeated with two different seed values, and we reported the average.

\paragraph{Comparison Methods:}
We compared the following methods:
(1) FedAvg \citep{mcmahan2017communication}: We used Momentum SGD as the optimizer.
(2) FedAvg (Adam): We used Adam as the optimizer of FedAvg.
(3) SCAFFOLD \citep{karimireddy2020scaffold}: We used Momentum SGD as the optimizer.
(4) SCAFFOLD (Adam): We used Adam as the optimizer of SCAFFOLD.
(5) \proposed{}: Our proposed method.
Following the suggestion of \citet{liu2025muon}, we changed the scale of the stepsize per layer, depending on the dimension.

\paragraph{Results:}
We show the results in \cref{fig:main_results}.
The results indicate that \proposed{} can perform the best for all settings.
By comparing \proposed{} with FedAvg (Adam) and SCAFFOLD (Adam), \proposed{} achieved the highest accuracy, which can demonstrate that Muon is also beneficial in the federated learning setting.
By comparing \proposed{} and \textsc{LocalMuon}, \textsc{LocalMuon} performed well in the homogeneous setting, but did not match the performance of \proposed{} in the heterogeneous setting.
This observation is consistent with the discussion in \cref{sec:local_muon}, where we show that \textsc{LocalMuon} does not converge to the stationary point in the heterogeneous setting.
These observations were consistent with \cref{theorem:lower_bound}.

\subsection{Effect of Inexact LMO}
\label{sec:effect_of_inexact_lmo}

Next, we evaluate how the number of Newton-Schulz iterations $T$ affects the performance.
\Cref{fig:inexact_lmo} shows the training curves of \proposed{} with different $T$.
In the homogeneous setting, the highest accuracy was achieved when $T=4$, and in the heterogeneous setting, the highest accuracy was achieved when $T=2$.
Thus, we can observe that solving the LMO accurately can improve the performance.
Notably, \proposed{} already worked with $T=0$, but increasing $T$ from $0$ to $1$ led to a significant improvement in accuracy.
These observations were consistent with \cref{theorem:convergence_analysis_with_ns_iteration}, which shows that \proposed{} can converge for any $T$ and converge faster as $T$ increases.

\begin{figure}[t!]
    \centering
    \vskip - 0.1 in
    \includegraphics[width=\linewidth]{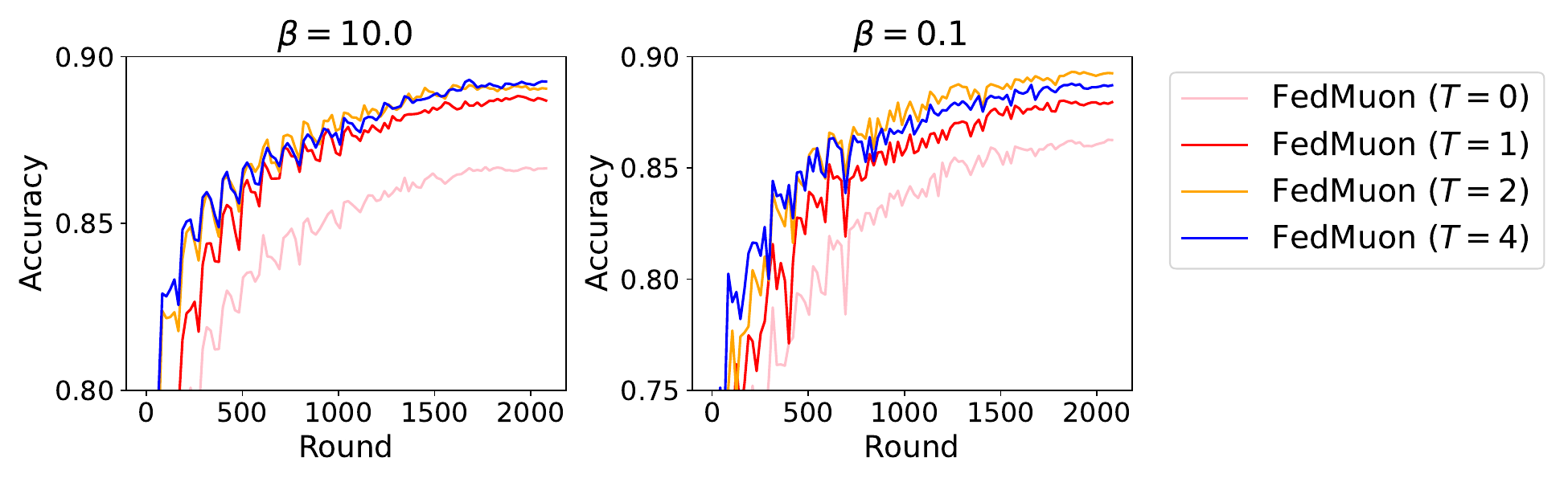}
    \vskip -0.1 in
    \caption{Training curves of \proposed{} with various number of Newton-Schulz iterations. We used FashionMNIST and LeNet.}
    \label{fig:inexact_lmo}
    \vskip -0.2 in
\end{figure}

\section{Conclusion}
In this paper, we study the federated learning methods with the LMO and propose \proposed{}.
We first propose directly plugging the optimization methods with the LMO into FedAvg, which we referred to as \textsc{LocalMuon},
and show that \textsc{LocalMuon} cannot converge to the stationary point since the LMO is a biased operator.
We then propose \proposed{} to solve this issue and show that \proposed{} can converge to the stationary point.
We analyze the convergence rate of \proposed{} and reveal how the approximate solution of the LMO affects the convergence behavior.
Notably, we show that \proposed{} can converge for any number of Newton-Schulz iterations, and \proposed{} can converge faster as we solve the LMO more accurately.
Throughout the experiments, we demonstrated the effectiveness of \proposed{} and verified our theoretical discovery.

\bibliography{iclr2025_conference}
\bibliographystyle{iclr2025_conference}

\newpage

\appendix

\section{LLM Usage}
We used LLM for proofreading, and it did not contribute to the content of the paper itself.

\section{Related Work}
\label{sec:related_work}

\paragraph{Federated Learning:}
The simplest algorithm for federated learning is FedAvg \citep{mcmahan2017communication,stich2019local}.
The main challenge of federated learning is reducing communication between the central server and clients.
Various techniques such as client sampling \citep{gu2021fast,chen2022optimal,zhang2023heterogeneity}, multiple local steps \citep{woodworth2020is,koloskova2020unified,jian2024stabilized,jiang2024federated}, and communication compression \citep{alistarh2017qagd,stich2018sparsified,karimireddy2019error,vogels2019powersgd,he2023unbiased,gao2024econtrol} have been studied to reduce the communication costs.
However, FedAvg still requires a huge amount of communication when clients have different training datasets.
Many papers proposed federated learning methods that are robust to data heterogeneity \citep{karimireddy2020scaffold,jian2024stabilized,jiang2024federated}.
The seminal work is SCAFFOLD \citep{karimireddy2020scaffold}, which can converge regardless of data heterogeneity.
Besides these methods, asynchronous methods \citep{koloskova2022sharper,mishchenko2022asynchronous,islamov2024asgrad} and decentralized methods \citep{nedic2017achieving,tang2018d2,tang2018communication,koloskova2020unified,takezawa2023momentum} have been widely studied to further improve the efficiency.

\paragraph{Adaptive Optimization Methods:}
Using adaptive optimization methods is standard for training neural networks efficiently \citep{amari1998natural,rachel2020adagrad,duchi2011adaptive,kingma2017adam,loshchilov2018decoupled,zaheer2018adaptive,zhuang2020adablief,defazio2024the,rodomanov2024universality}.
Over the last decade, Adam \citep{kingma2017adam} and AdamW \citep{loshchilov2018decoupled} are the most widely used, but recently, Shampoo \citep{gupta2018shampoo} won the External Tuning Task of AlgoPerf \citep{dahl2025benchmarking} and is attracting considerable attention \citep{shi2023distributed,vyas2025soap,ishikawa2024on}.
Muon \citep{liu2025muon} can be regarded as the simplified version of Shampoo, and many papers have demonstrated that Muon can train neural networks faster than Adam, AdamW, and Shampoo \citep{liu2025muon,pethick2025training,amsel2025polar,liu2025cosmos,ma2025swan,amsel2025polar,grishina2025accelerating}.
Using Muon in distributed environments is one of the popular topics \citep{therien2025muloco,ahn2025dion}.
Specifically, \citet{ahn2025dion} proposed a method to solve the LMO in a distributed way, and \citet{therien2025muloco} proposed MuLoCo, which extends Muon by allowing clients to perform several steps before averaging the parameters as in \textsc{LocalMuon}.
However, since they consider settings where all clients have the same dataset, their objective differs from ours.
As we explained in \cref{sec:local_muon}, because the LMO is a biased operator, bias correction mechanisms used in \proposed{} are necessary in federated learning, in which clients have different training datasets.

\newpage

\section{Pseudo Code}
\label{sec:algorithm_of_local_muon}

\begin{algorithm}[h]
\renewcommand\algorithmicindent{1.2em}
\caption{FedAvg \citep{mcmahan2017communication}}
\label{algorithm:federated_averaging}
\begin{algorithmic}[1]
\State \textbf{Input:} the total number of clients $n$, the number of sampled clients $S$, and the number of local steps $K$.
\For{$t \in \{ 0, 1, \cdots, T \}$} \emph {\color{gray} (at the server)}
    \State sample $S$ clients $\mathcal{S}_{r} \subset [n]$. 
    \For{$i \in \mathcal{S}_r$} \emph{ \color{gray} (at the clients)}
        \State $\mX_i^{(r, 0)} \leftarrow \mX^{(r)}$.
        \For{$k = 0, 1, \cdots, K-1$}
            \State $\mX_i^{(r, k+1)} \leftarrow \mX_i^{(r, k)} - \eta \nabla F_i (\mX_i^{(r, k)} ; \xi_i^{(r, k)})$.
        \EndFor
    \EndFor \emph{ \color{gray} (end clients, back to the server)}
    \State $\mX^{(r+1)} \leftarrow \frac{n-S}{n} \mX^{(r)} + \frac{1}{n} \sum_{i \in \mathcal{S}_r} \mX_i^{(r, K)}$.
\EndFor
\end{algorithmic}
\end{algorithm}

\begin{algorithm}[h]
\renewcommand\algorithmicindent{1.2em}
\caption{LocalMuon}
\label{algorithm:localmuon}
\begin{algorithmic}[1]
\State \textbf{Input:} the total number of clients $n$, the number of sampled clients $S$, and the number of local steps $K$.
\For{$r \in \{ 0, 1, \cdots, R-1 \}$}  \emph {\color{gray} (at the server)}
    \State sample $S$ clients $\mathcal{S}_{r} \subset [n]$. 
    \For{$i \in \mathcal{S}_r$} \emph{ \color{gray} (at the clients)}
        \State $\mX_i^{(r, 0)} \leftarrow \mX^{(r)}$ and $\mM_i^{(r, 0)} \leftarrow \mM_i^{(r-1, K)}$
        \For{$k = 0, 1, \cdots, K-1$}
            \State $\mM_i^{(r, k+1)} \leftarrow (1 - \alpha) \mM_i^{(r, k)} + \alpha \nabla F_i (\mX_i^{(r, k)} ; \xi_i^{(r, k)})$.
            \State $\mX_i^{(r, k+1)} \leftarrow \mX_i^{(r, k)} + \eta \text{lmo} \left( \mM_i^{(r, k+1)} \right)$.
        \EndFor
        \State $\mC_i^{(r+1)} \leftarrow \mM_i^{(r, K)}$
    \EndFor 
    \For{$i \in [n] \setminus \mathcal{S}_r$}
        \State $\mM_i^{(r, K)} \leftarrow \mM_i^{(r-1, K)}$.
    \EndFor \emph{ \color{gray} (end clients, back to the server)}
    \State $\mX^{(r+1)} \leftarrow \frac{n-S}{n} \mX^{(r)} + \frac{1}{n} \sum_{i \in \mathcal{S}_r} \mX_i^{(r, K)}$.
\EndFor
\end{algorithmic}
\end{algorithm}

\newpage

\section{Proof of \cref{theorem:lower_bound}}
\label{sec:proof_of_lower_bound}

\begin{proof}
We consider the setting where $n = 2$, $d=1$, and the norm is the Euclidean norm.
In this case, we have
\begin{align*}
    \text{lmo} (x) = \frac{x}{|x|}. 
\end{align*}
Then, we consider the case where $f_1$ and $f_2$ are defined as follows:
\begin{align*}
    f_1 (x) &\coloneqq \frac{x^2}{2}, \\
    f_2 (x) &\coloneqq \frac{(x+a)^2}{2}.    
\end{align*}
When $\mM_i^{(0)} = 0$ and $\mX^{(0)} = - \frac{a}{4}$, we have
\begin{align*}
    \mM_1^{(1)} &= -\frac{\alpha a}{4}, \\
    \mM_2^{(1)} &= \frac{3\alpha a}{4}, \\
    \text{lmo} (\mM_1^{(1)}) + \text{lmo} (\mM_2^{(1)}) 
    &= 0,
\end{align*}
where we use $\alpha \in (0, 1]$.

Thus, the parameter does not change, i.e., $\mX^{(1)} = \mX^{(0)}$.
For the next round, we have 
\begin{align*}
    \mM_1^{(2)} &= - \frac{a}{4} \left( \alpha + \alpha (1 - \alpha) \right), \\
    \mM_2^{(2)} &= \frac{3 a}{4} \left( \alpha + \alpha (1 - \alpha) \right).
\end{align*}
Then, since it holds the following as in the first round:
\begin{align*}
    \text{lmo} (\mM_1^{(2)}) + \text{lmo} (\mM_2^{(2)}) 
    &= 0.
\end{align*}
The parameter does not change. 
Due to the above discussion, the parameter does not change for any $r$.
Now, we have
\begin{align}
    \| \nabla f (\mX^{(r)}) \|^2 = \frac{a^2}{16}.
\end{align}
Then, using $\frac{1}{2} \sum_{i=1}^2 \| \nabla f (\mX^\star) \|^2 = \frac{5 a^2}{16}$, we obtain the desired result.
\end{proof}

\newpage
\section{Proof of \cref{theorem:convergence_analysis}}
\label{sec:proof_of_main_theorem}

\subsection{Notation}
\label{sec:notation}
In this section, we use the following notation.

\begin{align}
    \mX^{(r, k)} &= \frac{n - S}{n} \mX^{(r)} + \frac{1}{n} \sum_{i \in \mathcal{S}_r} \mX_i^{(r, k)}, \\
    \mG_i^{(r, k+1)} &= \mM_i^{(r, k+1)} - \mC_i^{(r)} + \mC^{(r)}, \\
    \mD_i^{(r, k+1)} &= \text{lmo} \left( \mG_i^{(r, k+1)} \right).
\end{align}

\subsection{Useful Lemma}

\begin{lemma}
\label{lemma:lmo}
For any $\mX \in \mathcal{X}$, we have
\begin{align*}
    \left\langle \mX, \text{lmo} (\mX) \right\rangle =  - \| \mX \|_\star. 
\end{align*}
\end{lemma}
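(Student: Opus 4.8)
The plan is to unwind the definition of the LMO and connect the optimal value of the linear minimization problem to the dual norm by exploiting the symmetry of the unit ball. By definition, $\text{lmo}(\mX)$ is a minimizer of the linear functional $\mY \mapsto \langle \mX, \mY \rangle$ over the unit ball $\{ \mY \in \mathcal{X} \mid \|\mY\| \leq 1 \}$, so the quantity of interest $\langle \mX, \text{lmo}(\mX) \rangle$ is precisely the optimal value $\min_{\|\mY\| \leq 1} \langle \mX, \mY \rangle$. First I would observe that this minimum is attained: the unit ball is compact in the finite-dimensional space $\mathcal{X}$ and the objective is continuous, so $\text{lmo}(\mX)$ is well defined and the minimum value is achieved at it.

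The key step is the symmetry of the unit ball. Since $\|-\mY\| = \|\mY\|$, the substitution $\mZ = -\mY$ maps the unit ball bijectively onto itself and transforms the objective via $\langle \mX, \mY \rangle = -\langle \mX, \mZ \rangle$. This gives
\begin{align*}
    \min_{\|\mY\| \leq 1} \langle \mX, \mY \rangle = -\max_{\|\mZ\| \leq 1} \langle \mX, \mZ \rangle.
\end{align*}
By the standard definition of the dual norm, $\|\mX\|_\star = \sup_{\|\mZ\| \leq 1} \langle \mX, \mZ \rangle$, and this supremum is again a maximum by compactness, so the right-hand side above equals $-\|\mX\|_\star$.

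Chaining the two observations yields $\langle \mX, \text{lmo}(\mX) \rangle = \min_{\|\mY\| \leq 1} \langle \mX, \mY \rangle = -\|\mX\|_\star$, which is the claim. I do not anticipate any genuine obstacle: the statement is essentially a restatement of the variational characterization of the dual norm, and the only points requiring care are confirming attainment of both extrema (immediate from finite-dimensionality and compactness of the unit ball) and applying the sign-flipping substitution consistently to both the constraint set and the objective.
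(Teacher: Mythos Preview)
Your proposal is correct and follows essentially the same approach as the paper: both identify $\langle \mX, \text{lmo}(\mX)\rangle$ with the minimum of $\langle \mX,\mY\rangle$ over the unit ball, flip a sign to turn it into a maximum, and invoke the definition of the dual norm. The only cosmetic difference is that the paper writes $-\max_{\|\mY\|\le 1}\langle -\mX,\mY\rangle = -\|-\mX\|_\star$ rather than substituting $\mZ=-\mY$, and it omits your remark on attainment by compactness.
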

\begin{proof}
From the definition of $\text{lmo} (\cdot)$, we have
\begin{align*}
    \left\langle \mX, \text{lmo} (\mX) \right\rangle 
    &= \min_{\mY \in \{ \mY \in \mathcal{X} \mid \| \mY \| \leq 1 \}} \langle \mX, \mY \rangle \\
    &= - \max_{\mY \in \{ \mY \in \mathcal{X} \mid \| \mY \| \leq 1 \}} \langle - \mX, \mY \rangle \\
    &= - \| - \mX \|_\star \\
    &= - \| \mX \|_\star.
\end{align*}
\end{proof}

\begin{lemma}
\label{lemma:sum_of_geometric_series}
For any $k \geq 0, R \geq 0$ and $\alpha \in (0, 1]$, we have
\begin{align*}
    \sum_{r=0}^R k (1 - \alpha)^{k r} \leq \frac{1}{\alpha} + k.
\end{align*}
\end{lemma}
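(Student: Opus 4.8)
The plan is to reduce the whole statement to a single elementary inequality about $(1-\alpha)^k$. First I would dispose of the degenerate case $k=0$, where every summand vanishes and the claim $0 \le \tfrac{1}{\alpha}$ is immediate. For $k \ge 1$ I would split off the $r=0$ term, which contributes exactly $k$, and rewrite the remaining tail as a geometric series in the ratio $q \coloneqq (1-\alpha)^k \in [0,1)$. Since $\sum_{r=0}^{R-1} q^r \le \tfrac{1}{1-q}$, this yields
\begin{align*}
    \sum_{r=0}^R k (1-\alpha)^{kr} = k + k (1-\alpha)^k \sum_{r=0}^{R-1} q^r \le k + \frac{k(1-\alpha)^k}{1 - (1-\alpha)^k}.
\end{align*}
Comparing with the target $\tfrac{1}{\alpha} + k$, it then suffices to bound the second summand by $\tfrac{1}{\alpha}$, which is exactly the content of the lemma once the $r=0$ term is accounted for.

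The crux is therefore the auxiliary bound
\begin{align*}
    (1-\alpha)^k \le \frac{1}{1 + k\alpha},
\end{align*}
which rearranges (multiply out by the positive quantity $\alpha\,(1-(1-\alpha)^k)$ and cancel) precisely into $\tfrac{k(1-\alpha)^k}{1-(1-\alpha)^k} \le \tfrac{1}{\alpha}$. I would establish this bound in two standard steps: the inequality $1 - \alpha \le e^{-\alpha}$ raised to the $k$-th power gives $(1-\alpha)^k \le e^{-k\alpha}$, and then $e^{x} \ge 1 + x$ evaluated at $x = k\alpha$ gives $e^{-k\alpha} \le \tfrac{1}{1+k\alpha}$. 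Chaining these two facts yields the auxiliary bound, and substituting it back into the display above completes the proof; note the argument is valid for real $k \ge 0$, not only integers.

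There is essentially no hard step here. The only point that requires a moment of thought is recognizing that the target splits cleanly as \emph{($r=0$ term $=k$)} plus \emph{(geometric tail $\le \tfrac{1}{\alpha}$)}, which is what pins down the precise inequality $(1-\alpha)^k \le (1+k\alpha)^{-1}$ one must verify. The boundary cases are trivial: when $\alpha=1$ the tail vanishes because $q=0$, and when $k=0$ the whole sum is zero, so both are consistent with the stated bound.
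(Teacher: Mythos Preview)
Your proof is correct and follows essentially the same route as the paper: both reduce to the chain $(1-\alpha)^k \le e^{-k\alpha} \le \tfrac{1}{1+k\alpha}$ after bounding the geometric series. The only cosmetic difference is that the paper bounds the entire sum by $\tfrac{k}{1-(1-\alpha)^k}$ in one step and then applies the inequality to get $\tfrac{1}{\alpha}+k$, whereas you split off the $r=0$ term first; algebraically these are identical since $\tfrac{k}{1-q} = k + \tfrac{kq}{1-q}$.
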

\begin{proof}
We have
\begin{align*}
    \sum_{r=0}^R k (1 - \alpha)^{k r} \leq \frac{k}{1 - (1 - \alpha)^k}.
\end{align*}
Then, using $(1 - \alpha)^k \leq e^{- \alpha k} \leq \tfrac{1}{1 + \alpha k}$, we obtain the desired result. 
\end{proof}

\begin{lemma}
\label{lemma:inner_product}
For any $\mA, \mB \in \mathcal{X}$, we have
\begin{align*}
    \langle \mA, \mB \rangle \leq \| \mA \| \| \mB \|_\star.
\end{align*}
\end{lemma}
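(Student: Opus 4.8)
The plan is to obtain this inequality—a generalized Cauchy–Schwarz / Hölder bound—directly from the definition of the dual norm, exactly as was done for \cref{lemma:lmo}. Recall that for any $\mZ \in \mathcal{X}$ the dual norm is characterized by $\| \mZ \|_\star = \max_{\mY : \| \mY \| \leq 1} \langle \mZ, \mY \rangle$. The key point is that this defining supremum already encodes the desired estimate once one of the two arguments is rescaled to lie on the unit sphere of $\| \cdot \|$.

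Concretely, I would first dispose of the degenerate case $\mA = \vzero$, in which both sides equal $0$ and the claim is trivial. For $\mA \neq \vzero$, I would normalize by setting $\mY \coloneqq \mA / \| \mA \|$, so that $\| \mY \| = 1$. Applying the definition of the dual norm to $\mB$ then gives $\langle \mB, \mY \rangle \leq \| \mB \|_\star$, i.e.
\begin{align*}
    \frac{1}{\| \mA \|} \langle \mB, \mA \rangle \leq \| \mB \|_\star.
\end{align*}
Multiplying through by $\| \mA \| > 0$ and invoking the symmetry of the Frobenius inner product, $\langle \mA, \mB \rangle = \langle \mB, \mA \rangle$, yields $\langle \mA, \mB \rangle \leq \| \mA \| \, \| \mB \|_\star$, as required.

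There is essentially no substantive obstacle here: the statement is the defining property of the dual norm up to a homogeneity argument. The only point requiring minor care is the choice of which argument to normalize—the bound couples the primal norm $\| \cdot \|$ of $\mA$ with the dual norm $\| \cdot \|_\star$ of $\mB$—but the symmetry of $\langle \cdot, \cdot \rangle$ makes either pairing valid, so the orientation in the statement is consistent with the convention fixed in the notation section and used in \cref{lemma:lmo}.
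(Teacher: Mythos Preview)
Your proof is correct and follows essentially the same approach as the paper: normalize $\mA$ by its primal norm and invoke the defining supremum of the dual norm on $\mB$. The paper's version is simply a one-line compression of exactly this argument (omitting the trivial $\mA = \vzero$ case you handle explicitly).
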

\begin{proof}
We have
\begin{align*}
    \langle \mA, \mB \rangle = \| \mA \| \left\langle \frac{\mA}{\| \mA \|}, \mB \right\rangle
    \leq \| \mA \| \| \mB \|_\star.
\end{align*}
\end{proof}

\begin{lemma}
\label{lemma:smoothness}
Suppose that \cref{assumption:smoothness} holds. Then, it holds that for any $\mX, \mY \in \mathcal{X}$,
\begin{align}
    f_i (\mX) \leq f_i (\mY) + \langle \nabla f_i (\mY), \mX - \mY \rangle + \frac{L}{2} \left\| \mX - \mY \right\|.
\end{align}
\end{lemma}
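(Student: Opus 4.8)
The plan is to follow the classical descent-lemma argument, but carried out entirely in the non-Euclidean primal/dual norm pair so that \cref{assumption:smoothness} applies without any conversion. First I would fix $\mX, \mY \in \mathcal{X}$ and consider the scalar function $t \mapsto f_i(\mY + t(\mX - \mY))$ on $[0,1]$; by the fundamental theorem of calculus this gives the integral representation $f_i(\mX) - f_i(\mY) = \int_0^1 \langle \nabla f_i(\mY + t(\mX - \mY)), \mX - \mY \rangle \, dt$, where the chain rule supplies the inner-product form of the derivative.

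Next I would subtract the linear term $\langle \nabla f_i(\mY), \mX - \mY \rangle$ from both sides and rewrite the remainder as $\int_0^1 \langle \nabla f_i(\mY + t(\mX - \mY)) - \nabla f_i(\mY), \mX - \mY \rangle \, dt$. The integrand is precisely a pairing between a gradient difference and the displacement $\mX - \mY$, so I would invoke \cref{lemma:inner_product} with $\mA = \mX - \mY$ and $\mB = \nabla f_i(\mY + t(\mX - \mY)) - \nabla f_i(\mY)$, bounding the integrand by $\| \mX - \mY \| \, \| \nabla f_i(\mY + t(\mX - \mY)) - \nabla f_i(\mY) \|_\star$. This step is where the primal norm attaches to the displacement and the dual norm to the gradient difference, matching exactly the form of \cref{assumption:smoothness}.

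Finally, \cref{assumption:smoothness} controls the dual-norm gradient difference by $L \, \| t(\mX - \mY) \| = L t \, \| \mX - \mY \|$, so the integrand is at most $L t \, \| \mX - \mY \|^2$; integrating $\int_0^1 L t \, dt = \tfrac{L}{2}$ produces the quadratic remainder $\tfrac{L}{2} \| \mX - \mY \|^2$, which is the intended descent bound. The one step that needs genuine care rather than routine computation is the norm bookkeeping in the Hölder-type pairing: because we work in a non-Euclidean geometry, Cauchy--Schwarz is unavailable and the primal and dual norms are genuinely distinct, so the displacement must be measured in $\| \cdot \|$ and the gradient difference in $\| \cdot \|_\star$ for \cref{lemma:inner_product} and \cref{assumption:smoothness} to chain together; any mismatch would silently break the estimate.
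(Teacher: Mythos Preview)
Your proposal is correct and follows essentially the same approach as the paper: the fundamental theorem of calculus along the segment from $\mY$ to $\mX$, subtraction of the linear term, the primal/dual Hölder inequality (\cref{lemma:inner_product}), and then \cref{assumption:smoothness} to bound the gradient difference, yielding $\tfrac{L}{2}\|\mX-\mY\|^2$ after integrating $Lt$. Note that both your argument and the paper's proof produce the squared norm $\|\mX-\mY\|^2$ in the remainder, so the unsquared $\|\mX-\mY\|$ in the displayed statement is evidently a typo.
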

\begin{proof}
Using the Fundamental Theorem of Calculus, we have
\begin{align*}
    f (\mX) &= f (\mY) + \int_{t=0}^1 \left\langle \nabla f (\mY + t (\mX - \mY)), \mX - \mY \right\rangle dt\\
    &= f (\mY) + \left\langle \nabla f (\mY), \mY - \mX \right\rangle  + \int_{t=0}^1 \left\langle \nabla f (\mY + t (\mX - \mY)) - \nabla f (\mY), \mX - \mY \right\rangle dt \\
    &\leq f (\mY) + \left\langle \nabla f (\mY), \mY - \mX \right\rangle  + \int_{t=0}^1 \left\| \nabla f (\mY + t (\mX - \mY)) - \nabla f (\mY) \|_\star \| \mX - \mY \right\| dt \\
    &\leq f (\mY) + \left\langle \nabla f (\mY), \mY - \mX \right\rangle  + \int_{t=0}^1 L t \left\| \mX - \mY \right\|^2 dt \\
    &= f (\mY) + \left\langle \nabla f (\mY), \mY - \mX \right\rangle  + \frac{L}{2} \left\| \mX - \mY \right\|^2,
\end{align*}
where we use \cref{lemma:inner_product,assumption:smoothness} for the first and second inequalities, respectively.
\end{proof}

\subsection{Main Proof}

\begin{lemma}
\label{lemma:norm_of_update}
Suppose that both $r = r'$ and $k \geq k'$ hold, or $r > r'$ holds. Then, we have
\begin{align*}
    \left\| \mX^{(r, k)} - \mX^{(r', k')} \right\| \leq \frac{\eta S}{n} ( (r - r')K + k - k')
\end{align*}
\end{lemma}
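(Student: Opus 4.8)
The plan is to view $\{\mX^{(r,k)}\}$ as a single trajectory and bound the displacement between two points by summing the norms of the consecutive increments. The first ingredient is a single-step bound within a round. Inside round $r$ the server iterate $\mX^{(r)}$ and all non-sampled clients are frozen, so from the definition $\mX^{(r,k)} = \frac{n-S}{n}\mX^{(r)} + \frac{1}{n}\sum_{i\in\mathcal{S}_r}\mX_i^{(r,k)}$ and the local update (line 8 of \cref{algorithm:proposed_method}) I obtain
\[
\mX^{(r,k+1)} - \mX^{(r,k)} = \frac{1}{n}\sum_{i\in\mathcal{S}_r}\bigl(\mX_i^{(r,k+1)} - \mX_i^{(r,k)}\bigr) = \frac{\eta}{n}\sum_{i\in\mathcal{S}_r}\mD_i^{(r,k+1)}.
\]
Because every LMO output lies in the unit ball, $\|\mD_i^{(r,k+1)}\| \le 1$, so the triangle inequality yields $\|\mX^{(r,k+1)} - \mX^{(r,k)}\| \le \eta S/n$. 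This is the only step where the LMO is used; the rest is bookkeeping.

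Next I would establish that the trajectory is continuous across round boundaries, so that moving from the end of one round to the start of the next costs nothing. Evaluating the definition at $k=K$ and comparing with the server aggregation gives $\mX^{(r,K)} = \frac{n-S}{n}\mX^{(r)} + \frac{1}{n}\sum_{i\in\mathcal{S}_r}\mX_i^{(r,K)} = \mX^{(r+1)}$. Since each sampled client of the next round is initialized at $\mX_i^{(r+1,0)} = \mX^{(r+1)}$, the definition at $k=0$ collapses to $\mX^{(r+1,0)} = \frac{n-S}{n}\mX^{(r+1)} + \frac{S}{n}\mX^{(r+1)} = \mX^{(r+1)}$. Hence $\mX^{(r,K)} = \mX^{(r+1,0)}$, and the boundary increment vanishes.

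Finally I would telescope $\mX^{(r,k)} - \mX^{(r',k')}$ into consecutive increments and apply the triangle inequality. The round-boundary terms are all zero, so only the within-round steps contribute, each bounded by $\eta S/n$. Counting them gives $(K-k')$ steps to finish round $r'$, then $K$ steps for each intervening round, then $k$ steps into round $r$, for a total of $(r-r')K + k - k'$ nonzero increments; the same count reduces to $k-k'$ when $r=r'$. Multiplying by $\eta S/n$ gives the claim. I expect the only delicate point to be the partial-participation bookkeeping: one must verify that the frozen mass $\frac{n-S}{n}\mX^{(r)}$ and the fact that $\mathcal{S}_r$ and $\mathcal{S}_{r+1}$ generally differ do not break the continuity identity $\mX^{(r,K)} = \mX^{(r+1,0)}$. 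Once that identity is secured, the remainder is a routine triangle-inequality count.
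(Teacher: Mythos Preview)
Your proposal is correct and follows essentially the same approach as the paper: both express $\mX^{(r,k)}-\mX^{(r',k')}$ as a sum of increments of the form $\tfrac{\eta}{n}\sum_{i\in\mathcal{S}_{r''}}\mD_i^{(r'',k'')}$, use $\|\mD_i\|\le 1$, and count the $(r-r')K+k-k'$ terms. The only cosmetic difference is that the paper writes out the fully unrolled sum in one shot, whereas you telescope step-by-step and make the round-boundary identity $\mX^{(r,K)}=\mX^{(r+1,0)}$ explicit; the content is the same.
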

\begin{proof}
From the update rule of $\mX^{(r, k)}$ and $\mX^{(r', k')}$, we have
\begin{align*}
    \mX^{(r, k)} 
    &= \frac{n - S}{n} \mX^{(r)} + \frac{1}{n} \sum_{i \in \mathcal{S}_r} \mX_i^{(r, k)} \\
    &= \mX^{(r)} + \frac{\eta}{n} \sum_{i \in \mathcal{S}_r} \sum_{k''=1}^{k} \mD_i^{(r, k'')} \\
    &= \mX^{(r')} + \frac{\eta}{n} \sum_{i \in \mathcal{S}_r} \sum_{k''=1}^{k} \mD_i^{(r, k'')} + \frac{\eta}{n} \sum_{r''=r'}^{r-1} \sum_{i \in \mathcal{S}_{r''}} \sum_{k''=1}^{K} \mD_i^{(r'', k'')}, \\
    \mX^{(r', k')} 
    &= \mX^{(r')} + \frac{\eta}{n} \sum_{i \in \mathcal{S}_{r'}} \sum_{k''=1}^{k'} \mD_i^{(r', k'')}.
\end{align*}
Thus, we have
\begin{align*}
    \left\| \mX^{(r, k)} - \mX^{(r', k')} \right\|
    &= \left\| \frac{\eta}{n} \sum_{i \in \mathcal{S}_r} \sum_{k''=1}^{k} \mD_i^{(r, k'')} + \frac{\eta}{n} \sum_{r''=r'+1}^{r-1} \sum_{i \in \mathcal{S}_{r''}} \sum_{k''=1}^{K} \mD_i^{(r'', k'')}
    + \frac{\eta}{n} \sum_{i \in \mathcal{S}_{r'}} \sum_{k''=k'+1}^{K} \mD_i^{(r', k'')} \right\| \\
    &\leq \frac{\eta S}{n} ((r - r')K + k - k'),
\end{align*}
where we use $\| \mD_i^{(r, k)} \| = 1$ for any $r$ and $k$.
\end{proof}

\begin{lemma}
\label{lemma:norm_of_update_of_client}
Suppose that both $r = r'$ and $k \geq k'$ hold, or $r > r'$ holds. Then, we have
\begin{align*}
    \left\| \mX_i^{(r, k)} - \mX_i^{(r', k')} \right\| \leq (r - r' + 2) K \eta.
\end{align*}
\end{lemma}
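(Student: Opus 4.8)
The plan is to compare the two client iterates by routing through the corresponding global iterates $\mX^{(r)}$ and $\mX^{(r')}$, which is natural because at the start of every round each participating client resets its local parameter to the global one, i.e. $\mX_i^{(r,0)} = \mX^{(r)}$ (line 6 of \cref{algorithm:proposed_method}). Concretely, I would insert these anchor points and apply the triangle inequality to the decomposition
\begin{align*}
    \mX_i^{(r, k)} - \mX_i^{(r', k')} = \left( \mX_i^{(r, k)} - \mX^{(r)} \right) + \left( \mX^{(r)} - \mX^{(r')} \right) + \left( \mX^{(r')} - \mX_i^{(r', k')} \right),
\end{align*}
bounding each of the three pieces in turn.

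For the first and third pieces I would unroll the local update rule (line 9 of \cref{algorithm:proposed_method}), which gives $\mX_i^{(r, k)} - \mX^{(r)} = \mX_i^{(r, k)} - \mX_i^{(r, 0)} = \eta \sum_{k''=1}^{k} \mD_i^{(r, k'')}$. Since every LMO output has unit norm, $\| \mD_i^{(r, k'')} \| = 1$, the triangle inequality yields $\| \mX_i^{(r, k)} - \mX^{(r)} \| \leq \eta k \leq \eta K$, and identically $\| \mX^{(r')} - \mX_i^{(r', k')} \| \leq \eta k' \leq \eta K$. These two within-round drift terms are exactly what produce the additive $+2$ in the claimed bound.

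For the middle piece I would invoke the already-proven \cref{lemma:norm_of_update} at $k = k' = 0$. Observe first that $\mX^{(r, 0)} = \mX^{(r)}$: the $k=0$ averaged iterate collapses to $\mX^{(r)}$ because every sampled client starts the round at $\mX^{(r)}$ and $|\mathcal{S}_r| = S$. Hence \cref{lemma:norm_of_update} gives $\| \mX^{(r)} - \mX^{(r')} \| \leq \frac{\eta S}{n} (r - r') K$, and using $S \leq n$ this is at most $\eta (r - r') K$. Summing the three bounds yields $\eta K + \eta (r - r') K + \eta K = (r - r' + 2) K \eta$, as required; the hypothesis $r = r', k \geq k'$ or $r > r'$ guarantees $r \geq r'$ so that the invocation of \cref{lemma:norm_of_update} is legitimate.

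The argument is essentially routine once this decomposition is chosen, so I do not anticipate a genuine obstacle. The one point requiring care is recognizing that the per-round reset $\mX_i^{(r,0)} = \mX^{(r)}$ is precisely what keeps a client's local drift bounded by $\eta K$ at each endpoint rather than accumulating across rounds; the modeling insight is simply to separate the within-round client drift (controlled by $\| \mD_i \| = 1$) from the cross-round global drift (handled by \cref{lemma:norm_of_update}).
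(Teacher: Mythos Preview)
Your proposal is correct and matches the paper's own proof essentially line for line: the paper also splits via the triangle inequality into $\|\mX_i^{(r,k)}-\mX_i^{(r,0)}\| + \|\mX^{(r,0)}-\mX^{(r',0)}\| + \|\mX_i^{(r',k')}-\mX_i^{(r',0)}\|$, bounds the two endpoint terms by $\eta(k+k') \leq 2K\eta$ using $\|\mD_i\|\leq 1$, and invokes \cref{lemma:norm_of_update} (with $S\leq n$) for the middle term. The only cosmetic difference is that the paper keeps $\eta(k+k')$ before relaxing to $2K\eta$, whereas you relax each endpoint to $\eta K$ immediately.
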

\begin{proof}
We have
\begin{align*}
    \left\| \mX_i^{(r, k)} - \mX_i^{(r', k')} \right\| 
    &\leq \left\| \mX_i^{(r, k)} - \mX_i^{(r, 0)} \right\| 
    + \left\| \mX_i^{(r, 0)} - \mX_i^{(r', 0)} \right\| 
    + \left\| \mX_i^{(r', k')} - \mX_i^{(r', 0)} \right\| \\
    &= \left\| \mX_i^{(r, k)} - \mX_i^{(r, 0)} \right\| 
    + \left\| \mX^{(r, 0)} - \mX^{(r', 0)} \right\| 
    + \left\| \mX_i^{(r', k')} - \mX_i^{(r', 0)} \right\| \\
    &\leq \eta ( k + k')  
    + \left\| \mX^{(r, 0)} - \mX^{(r', 0)} \right\|.
\end{align*}
Using \cref{lemma:norm_of_update}, we obtain the desired result.
\end{proof}

\begin{lemma}
\label{lemma:descent_lemma}
Suppose that \cref{assumption:smoothness,assumption:stochastic_noise} hold.
Then, when $r \geq 1$, we have
\begin{align*}
    \mathbb{E} f (\mX^{(r, k+1)})
    &\leq \mathbb{E} f (\mX^{(r, k)}) - \frac{\eta S}{n} \left\| \nabla f (\mX^{(r, k)}) \right\|_\star 
    + 2 L K \left( \frac{S}{n} \right)^2 \eta^2 \\
    &\quad + 2 \left( \frac{S}{n} \right) \eta \mathbb{E} \left\| \nabla f (\mX^{(r-1, K-1)}) - \mC^{(r)} \right\|_\star 
    + \frac{2 \eta}{n} \mathbb{E} \sum_{i \in \mathcal{S}_r} \left\| \mM_i^{(r, k+1)} - \mC_i^{(r)} \right\|_\star
    + \frac{L}{2} \left( \frac{S}{n} \right) \eta^2.
\end{align*}
When $r=0$, we have
\begin{align*}
    \mathbb{E} f (\mX^{(0, k+1)})
    &\leq \mathbb{E} f (\mX^{(0, k)}) - \frac{\eta S}{n} \left\| \nabla f (\mX^{(0, k)}) \right\|_\star 
    + 2 L K \left( \frac{S}{n} \right)^2 \eta^2  \\
    &\quad + \frac{2 \eta}{n} \mathbb{E} \sum_{i \in \mathcal{S}_0} \left\| \mM_i^{(0, k+1)} - \mC_i^{(0)} \right\|_\star
    + \frac{L}{2} \left( \frac{S}{n} \right) \eta^2 
    + 2 \left( \frac{S}{n} \right) \rho \sigma_0 \eta.
\end{align*}
\end{lemma}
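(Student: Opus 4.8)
The plan is to run the standard descent-lemma argument on the averaged iterate $\mX^{(r,k)}$ and then carefully account for the bias-corrected LMO direction. Writing $\Delta \coloneqq \mX^{(r,k+1)} - \mX^{(r,k)} = \tfrac{\eta}{n}\sum_{i \in \mathcal{S}_r}\mD_i^{(r,k+1)}$ (which follows from the averaging update together with $\mX^{(r,0)}=\mX^{(r)}$), I would first apply \cref{lemma:smoothness} to $f$ with $\mX=\mX^{(r,k+1)}$ and $\mY=\mX^{(r,k)}$. The quadratic term is immediate: since $\|\mD_i^{(r,k+1)}\|=1$, the triangle inequality gives $\|\Delta\| \le \tfrac{\eta S}{n}$, so $\tfrac{L}{2}\|\Delta\|^2 \le \tfrac{L}{2}(\tfrac{S}{n})^2\eta^2 \le \tfrac{L}{2}(\tfrac{S}{n})\eta^2$, where the last step uses $S\le n$. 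The whole difficulty lies in the linear term $\langle \nabla f(\mX^{(r,k)}),\Delta\rangle = \tfrac{\eta}{n}\sum_{i\in\mathcal{S}_r}\langle \nabla f(\mX^{(r,k)}),\mD_i^{(r,k+1)}\rangle$.

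For the linear term I would treat each client summand separately. Splitting $\langle \nabla f,\mD_i\rangle = \langle \mG_i^{(r,k+1)},\mD_i\rangle + \langle \nabla f-\mG_i^{(r,k+1)},\mD_i\rangle$, the first piece equals $-\|\mG_i^{(r,k+1)}\|_\star$ by \cref{lemma:lmo}, while the second is at most $\|\nabla f - \mG_i^{(r,k+1)}\|_\star$ by \cref{lemma:inner_product} and $\|\mD_i\|=1$. To extract $-\|\nabla f\|_\star$ I would use the variational characterization of the dual norm: picking a unit-norm $\mU$ with $\langle \mU,\nabla f\rangle = \|\nabla f\|_\star$ gives $\|\mG_i\|_\star \ge \langle \mU,\mG_i\rangle \ge \|\nabla f\|_\star - \|\mG_i-\nabla f\|_\star$, hence $-\|\mG_i\|_\star \le -\|\nabla f\|_\star + \|\mG_i-\nabla f\|_\star$. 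Combining the two pieces yields the per-client bound $\langle \nabla f,\mD_i\rangle \le -\|\nabla f\|_\star + 2\|\mG_i^{(r,k+1)}-\nabla f(\mX^{(r,k)})\|_\star$, and this factor $2$ is exactly the source of the constants in the statement. Summing over the $S$ sampled clients and multiplying by $\tfrac{\eta}{n}$ reproduces the $-\tfrac{\eta S}{n}\|\nabla f\|_\star$ term plus the discrepancy $\tfrac{2\eta}{n}\sum_{i\in\mathcal{S}_r}\|\mG_i^{(r,k+1)}-\nabla f(\mX^{(r,k)})\|_\star$.

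It then remains to route the discrepancy. Since $\mG_i^{(r,k+1)} = \mM_i^{(r,k+1)} - \mC_i^{(r)} + \mC^{(r)}$, the triangle inequality separates it into $\|\mM_i^{(r,k+1)}-\mC_i^{(r)}\|_\star$, which is already one of the target terms, and $\|\mC^{(r)}-\nabla f(\mX^{(r,k)})\|_\star$. For $r\ge 1$ I would insert $\nabla f(\mX^{(r-1,K-1)})$ and bound $\|\nabla f(\mX^{(r,k)})-\nabla f(\mX^{(r-1,K-1)})\|_\star \le L\|\mX^{(r,k)}-\mX^{(r-1,K-1)}\|$ via \cref{assumption:smoothness}; \cref{lemma:norm_of_update} then gives $\|\mX^{(r,k)}-\mX^{(r-1,K-1)}\| \le \tfrac{\eta S}{n}(k+1)\le\tfrac{\eta S}{n}K$, so after the $\tfrac{2\eta S}{n}$ prefactor this produces exactly the $2LK(\tfrac{S}{n})^2\eta^2$ term, leaving the $2(\tfrac{S}{n})\eta\|\nabla f(\mX^{(r-1,K-1)})-\mC^{(r)}\|_\star$ term. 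Taking expectations over the sampling and the stochastic gradients closes the $r\ge1$ case.

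For $r=0$ the same split is used, but $\nabla f(\mX^{(0)})$ is inserted instead; the smoothness term again yields $2LK(\tfrac{S}{n})^2\eta^2$ through $\|\mX^{(0)}-\mX^{(0,k)}\|\le\tfrac{\eta S}{n}K$, while the remaining term is $\|\mC^{(0)}-\nabla f(\mX^{(0)})\|_\star$. Here I would use $\mC^{(0)}=\tfrac{1}{n}\sum_i\mM_i^{(0,0)}$ and $\nabla f(\mX^{(0)})=\tfrac1n\sum_i\nabla f_i(\mX^{(0)})$, pass to the Frobenius norm through $\|\cdot\|_\star\le\rho\|\cdot\|_F$, and apply Jensen together with the definition of $\sigma_0$ to obtain $\mathbb{E}\|\mC^{(0)}-\nabla f(\mX^{(0)})\|_\star\le\rho\sigma_0$, giving the final $2(\tfrac{S}{n})\rho\sigma_0\eta$ term. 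The main obstacle is the per-client factor-of-two inner-product bound; once that is established, the rest is triangle inequalities, a single use of smoothness, and the displacement estimate of \cref{lemma:norm_of_update}.
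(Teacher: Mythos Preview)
Your proposal is correct and follows essentially the same route as the paper's own proof: apply \cref{lemma:smoothness}, split the inner product through $\mG_i^{(r,k+1)}$, use \cref{lemma:lmo} and the reverse triangle inequality to extract $-\|\nabla f\|_\star + 2\|\mG_i-\nabla f\|_\star$, and then unpack the discrepancy via the triangle inequality, \cref{assumption:smoothness}, and \cref{lemma:norm_of_update} (inserting $\nabla f(\mX^{(r-1,K-1)})$ for $r\ge1$ and $\nabla f(\mX^{(0)})$ for $r=0$). The only cosmetic differences are that the paper bounds the quadratic term slightly more loosely via $\frac{L\eta^2}{2n}\sum_{i\in\mathcal{S}_r}\|\mD_i\|^2\le\frac{LS\eta^2}{2n}$ directly, and invokes the triangle inequality rather than a variational argument for $-\|\mG_i\|_\star\le-\|\nabla f\|_\star+\|\mG_i-\nabla f\|_\star$; neither changes the structure.
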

\begin{proof}
We have
\begin{align*}
    &\mathbb{E}_{r, k} f (\mX^{(r, k+1)}) \\
    &= \mathbb{E}_{r, k} f \left( \mX^{(r, k)} + \frac{\eta}{n} \sum_{i \in \mathcal{S}_r} \mD_i^{(r, k)} \right) \\
    &\leq f (\mX^{(r, k)}) + \frac{\eta}{n} \mathbb{E}_{r, k} \sum_{i \in \mathcal{S}_r} \left\langle \nabla f (\mX^{(r, k)}), \mD_i^{(r, k)} \right\rangle + \frac{L \eta^2}{2 n} \mathbb{E}_{r, k} \sum_{i \in \mathcal{S}_r} \left\| \mD_i^{(r, k+1)} \right\|^2 \\
    &\leq f (\mX^{(r, k)}) + \frac{\eta}{n} \mathbb{E}_{r, k} \sum_{i \in \mathcal{S}_{r}} \left\langle \nabla f (\mX^{(r, k)}) - \mG_i^{(r, k+1)}, \mD_i^{(r, k+1)} \right\rangle + \frac{\eta}{n} \mathbb{E}_{r, k} \sum_{i \in \mathcal{S}_{r}} \left\langle \mG_i^{(r, k+1)}, \mD_i^{(r, k+1)} \right\rangle + \frac{L S \eta^2}{2 n} \\
    &\leq f (\mX^{(r, k)}) + \frac{\eta}{n} \mathbb{E}_{r, k} \sum_{i \in \mathcal{S}_{r}} \left\| \nabla f (\mX^{(r, k)}) - \mG_i^{(r, k+1)} \right\|_\star + \frac{\eta}{n} \mathbb{E}_{r, k} \sum_{i \in \mathcal{S}_{r}} \underbrace{\left\langle \mG_i^{(r, k+1)}, \mD_i^{(r, k+1)} \right\rangle}_{\mathcal{T}_1} + \frac{L S \eta^2}{2 n},
\end{align*}
where we use \cref{lemma:smoothness}, $\| \mD_i^{(r, k+1)} \| \leq 1$, and the Cauchy-Schwarz inequality in the first, second, and third inequalities, and $\mG_i$ and $\mD_i$ are defined in \cref{sec:notation}.
Using \cref{lemma:lmo} and the triangle inequality, we have
\begin{align*}
    \mathcal{T}_1 = - \left\| \mG_i^{(r, k+1)} \right\|_\star
    \leq - \left\| \nabla f (\mX^{(r, k)}) \right\|_\star + \left\| \nabla f (\mX^{(r, k)}) - \mG_i^{(r, k+1)} \right\|_\star.
\end{align*}
Then, it holds
\begin{align*}
    \mathbb{E}_{r, k} f (\mX^{(r, k+1)})
    \leq f (\bar{\mX}^{(r, k)}) - \frac{\eta S}{n} \left\| \nabla f (\mX^{(r, k)}) \right\|_\star 
    + \frac{2 \eta}{n} \mathbb{E}_{r, k} \sum_{i \in \mathcal{S}_{r}} \underbrace{\left\| \nabla f (\mX^{(r, k)}) - \mG_i^{(r, k+1)} \right\|_\star}_{\mathcal{T}_2} 
    + \frac{L S \eta^2}{2 n}.
\end{align*}
When $r \geq 1$, we have
\begin{align*}
    \mathcal{T}_2 
    &= \left\| \nabla f (\mX^{(r, k)}) - \mM_i^{(r, k+1)} + \mC_i^{(r)} - \mC^{(r)} \right\|_\star \\
    &\leq \left\| \nabla f (\mX^{(r, k)}) - \nabla f (\mX^{(r-1, K-1)}) \right\|_\star
    + \left\| \nabla f (\mX^{(r-1, K-1)}) - \mC^{(r)} \right\|_\star
    + \left\| \mM_i^{(r, k+1)} - \mC_i^{(r)} \right\|_\star \\
    &\leq L \left\| \mX^{(r, k)} - \mX^{(r-1, K-1)} \right\|
    + \left\| \nabla f (\mX^{(r-1, K-1)}) - \mC^{(r)} \right\|_\star
    + \left\| \mM_i^{(r, k+1)} - \mC_i^{(r)} \right\|_\star \\
    &\leq \frac{L S K \eta}{n} + \left\| \nabla f (\mX^{(r-1, K-1)}) - \mC^{(r)} \right\|_\star
    + \left\| \mM_i^{(r, k+1)} - \mC_i^{(r)} \right\|_\star,
\end{align*}
where we use \cref{lemma:norm_of_update} in the last inequality.

When $r=0$, we have
\begin{align*}
    \mathcal{T}_2 
    &= \left\| \nabla f (\mX^{(0, k)}) - \mM_i^{(0, k+1)} + \mC_i^{(0)} - \mC^{(0)} \right\|_\star \\
    &\leq \left\| \nabla f (\mX^{(0, k)}) - \nabla f (\mX^{(0, 0)}) \right\|_\star
    + \left\| \nabla f (\mX^{(0, 0)}) - \mC^{(0)} \right\|_\star
    + \left\| \mM_i^{(0, k+1)} - \mC_i^{(0)} \right\|_\star \\
    &\leq L \left\| \mX^{(0, k)} - \mX^{(0, 0)} \right\|
    + \left\| \nabla f (\mX^{(0, 0)}) - \mC^{(0)} \right\|_\star
    + \left\| \mM_i^{(0, k+1)} - \mC_i^{(0)} \right\|_\star \\
    &\leq \frac{L S K \eta}{n} + \left\| \nabla f (\mX^{(0, 0)}) - \mC^{(0)} \right\|_\star
    + \left\| \mM_i^{(0, k+1)} - \mC_i^{(0)} \right\|_\star
\end{align*}
Then, using the following inequality:
\begin{align*}
    \mathbb{E} \left\| \nabla f (\mX^{(0, 0)}) - \mC^{(0)} \right\|_\star \leq \frac{\rho}{n} \sum_{i=1}^n \sqrt{\mathbb{E} \left\|  \nabla f_i (\mX^{(0, 0)}) - \mC^{(0)}_i \right\|^2_F} \leq \rho \sigma_0,
\end{align*}
we obtain the desired result.
\end{proof}

\begin{lemma}
Suppose that \cref{assumption:smoothness,assumption:stochastic_noise} hold,  $\mC_i^{(0)} \coloneqq \mM_i^{(0, 0)}$ and $\mC^{(0)} \coloneqq \tfrac{1}{n} \sum_{i=1}^n \mC_i^{(0)}$, 
\label{lemmq:approximation_quality_of_local_momentum}
\begin{align*}
    \frac{1}{n} \mathbb{E} \sum_{i \in \mathcal{S}_r} \left\| \nabla F_i (\mX_i^{(r-1, K-1)} ) - \mM_i^{(r, 0)} \right\|_\star
    &\leq \frac{S \rho \sigma}{n} \left( 1 - \frac{S \alpha}{n} \right)^{r-1}
    + \frac{S}{n} \alpha \rho \sqrt{K \sigma^2}
    +  6 L K \eta.
\end{align*}
\end{lemma}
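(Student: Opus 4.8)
The plan is to read the summand as the error of the momentum $\mM_i^{(r,0)} = \mM_i^{(r-1,K)}$, interpreted as an exponential moving average (EMA) of past stochastic gradients, relative to the gradient $\nabla f_i(\mX_i^{(r-1,K-1)})$. I would decompose this error into three pieces: a geometrically decaying initial discrepancy, a gradient-drift bias controlled by smoothness, and a stochastic-noise term controlled by the variance bound. Throughout, every quantity is measured in the dual norm $\|\cdot\|_\star$ and converted to the Frobenius norm via $\rho = \sup_{\mX}\|\mX\|_\star/\|\mX\|_F$ exactly at the point where \cref{assumption:stochastic_noise} (stated in Frobenius norm) is invoked.

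First I would unroll the momentum recursion $\mM_i^{(r,k+1)} = (1-\alpha)\mM_i^{(r,k)} + \alpha\nabla F_i(\mX_i^{(r,k)};\xi_i^{(r,k)})$ over the $K$ steps of round $r-1$ and, via $\mM_i^{(r,0)} = \mM_i^{(r-1,K)}$, back across earlier rounds, obtaining $\mM_i^{(r-1,K)} = (1-\alpha)^{N}\mM_i^{(0,0)} + \alpha\sum_{(s,j)}(1-\alpha)^{c_{s,j}}\nabla F_i(\mX_i^{(s,j)};\xi_i^{(s,j)})$ where the coefficients form a partition of unity. Subtracting $\nabla f_i(\mX_i^{(r-1,K-1)})$ and using that the weights sum to one turns the error into a weighted sum of (i) the initial gap $\nabla f_i(\mX^{(0)}) - \mM_i^{(0,0)}$, (ii) gradient differences $\nabla f_i(\mX_i^{(r-1,K-1)}) - \nabla f_i(\mX_i^{(s,j)})$, and (iii) centered stochastic-noise terms.

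For the drift piece (ii), I would bound each difference by $L\|\mX_i^{(r-1,K-1)} - \mX_i^{(s,j)}\|$ using \cref{assumption:smoothness}, then apply \cref{lemma:norm_of_update_of_client} to control the iterate distance by a multiple of $K\eta$ times the round gap; summing these against the geometric EMA weights and invoking \cref{lemma:sum_of_geometric_series} with $k = K$ collapses the weighted round-gaps into the $\mathcal{O}(LK\eta)$ contribution. For the noise piece (iii), I would use the Frobenius-norm conversion and then Jensen's inequality together with the independence and zero-mean property from \cref{assumption:stochastic_noise} to turn the weighted sum of the $K$ per-round noises, each of weight $\approx\alpha$, into $\alpha\rho\sqrt{K}\sigma$.

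The piece I expect to be the genuine obstacle is the coupling of the random client sampling with the EMA. Because client $i$ only updates its momentum when $i \in \mathcal{S}_r$, the number $N$ of effective updates, and hence the decay factor $(1-\alpha)^N$ multiplying the initial discrepancy (i), is itself random. I would handle this by setting up a per-round recursion for $\tfrac{1}{n}\mathbb{E}\sum_{i}\|\cdot\|_\star$, noting that conditioned on the sampling a sampled client contracts its error by $(1-\alpha)^K \leq 1-\alpha$ while an unsampled one is unchanged, so the expected per-round contraction is $1 - \tfrac{S\alpha}{n}$; unrolling this recursion yields the decaying factor $(1-\tfrac{S\alpha}{n})^{r-1}$ and pulls out the $\tfrac{S}{n}$ prefactors, with the base case bounded by $\rho\sigma$ via \cref{assumption:stochastic_noise} and the initialization $\mC_i^{(0)} = \mM_i^{(0,0)}$. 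Carefully keeping the drift and noise increments correctly weighted by $\tfrac{S}{n}$ through this sampling expectation, while converting between $\|\cdot\|_\star$ and $\|\cdot\|_F$ only where needed, is the delicate bookkeeping on which the result hinges.
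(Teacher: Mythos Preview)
Your high-level decomposition into (i) geometrically decaying initial error, (ii) gradient drift, and (iii) stochastic noise is exactly the structure the paper uses, and your per-round recursion for handling the random number $N$ of effective EMA updates is equivalent to the paper's direct computation: the paper writes $N=Kc_i(r-1)$ with $c_i(r-1)\sim\mathrm{Binomial}(r-1,S/n)$ and evaluates $\mathbb{E}[(1-\alpha)^{Kc_i(r-1)}]=\bigl(\tfrac{S}{n}(1-\alpha)^K+1-\tfrac{S}{n}\bigr)^{r-1}\le(1-\tfrac{S\alpha}{n})^{r-1}$ via the binomial moment-generating function, whereas your contraction argument computes the same quantity inductively.

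Where your plan departs from the paper, and where it is incomplete, is piece (ii). You propose to bound each $\|\mX_i^{(r-1,K-1)}-\mX_i^{(s,j)}\|$ directly via \cref{lemma:norm_of_update_of_client} and then sum the resulting round-gaps against the EMA weights using \cref{lemma:sum_of_geometric_series}. But \cref{lemma:sum_of_geometric_series} handles $\sum_r K(1-\alpha)^{Kr}$ with a \emph{constant} prefactor; here the round-gap for the $m$-th-from-last update is cumulative, of order $(m/K)\cdot n/S$ in expectation under partial participation, so the relevant sum is of the form $\sum_m (1-\alpha)^m\cdot m$ and does not collapse to the stated $\mathcal{O}(LK\eta)$. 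The paper instead first applies an Abel summation, rewriting the EMA-weighted gradient sum as $\nabla f_i$ at the last sampled iterate plus a weighted sum of \emph{consecutive} gradient differences $\nabla f_i(\mX_i^{(\text{update }c')})-\nabla f_i(\mX_i^{(\text{update }c'+1)})$; each resulting iterate gap is then either one local step or one inter-sampling interval, never a cumulative gap. It then explicitly uses that the inter-sampling interval $r_i(j{+}1)-r_i(j)$ is geometric with mean $n/S$, and treats the boundary gaps (from round $0$ to the first sampling, and from the last sampling to round $r{-}1$) as separate terms with their own distributional arguments. This telescoping step, together with the explicit invocation of the geometric law of inter-sampling gaps, is the ingredient your proposal does not supply.
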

\begin{proof}
Let $c_i (r-1)$ be the number of times that client $i$ has been sampled by round $r$.
We have
\begin{align*}
    \mM_i^{(r, 0)} = (1 - \alpha)^{c_i (r - 1) K} \mM_i^{(0, 0)} 
    + \alpha \sum_{r'=1}^{c_i (r-1)} \sum_{k'=0}^{K-1} (1 - \alpha)^{(c_i (r-1) - r')K + k} \nabla F_i (\mX_i^{(r', k')} ; \xi_i^{r', k'})
\end{align*}
To simplify the notation, we denote $r_i(r')$ by the number of rounds that client $i$ is sampled for the $r'$-th time.
Using this notation, we have
\begin{align*}
    &\mM_i^{(r, 0)} \\
    &= (1 - \alpha)^{c_i (r - 1) K} \mM_i^{(0, 0)} 
    + \alpha \sum_{c'=1}^{c_i (r-1) K} (1 - \alpha)^{c_i (r-1) K  - c'} \nabla F_i (\mX_i^{(r' (\lceil \frac{c'}{K} \rceil), c' - K \lceil \frac{c'}{K} \rceil)} ; \xi_i^{(r' (\lceil \frac{c'}{K} \rceil), c' - K \lceil \frac{c'}{K} \rceil)}) \\
    &= (1 - \alpha)^{c_i (r - 1) K} \left( \nabla F_i (\mX_i^{(0, 0)} ; \xi_i^{(0, 0)}) - \nabla f_i (\mX_i^{(0, 0)}) \right) \\ 
    &\quad + \alpha \sum_{c'=1}^{c_i (r-1) K} (1 - \alpha)^{c_i (r-1) K  - c'} \left(  \nabla F_i (\mX_i^{(r' (\lceil \frac{c'}{K} \rceil), c' - K \lceil \frac{c'}{K} \rceil)} ; \xi_i^{(r' (\lceil \frac{c'}{K} \rceil), c' - K \lceil \frac{c'}{K} \rceil)}) - \nabla f_i (\mX_i^{(r' (\lceil \frac{c'}{K} \rceil), c' - K \lceil \frac{c'}{K}) \rceil} \right)\\
    &\quad + \underbrace{(1 - \alpha)^{c_i (r - 1) K} \nabla f_i (\mX_i^{(0, 0)})
    + \alpha \sum_{c'=1}^{c_i (r-1) K} (1 - \alpha)^{c_i (r-1) K  - c'} \nabla f_i (\mX_i^{(r' (\lceil \frac{c'}{K} \rceil), c' - L \lceil \frac{c'}{K} \rceil)})}_{\mathcal{T}}.
\end{align*}
Using $\alpha (1 - \alpha)^m = (1 - \alpha)^m - (1 - \alpha)^{m+1}$, we have
\begin{align*}
    \mathcal{T}
    &= \nabla f_i (\mX_i^{(r_i (c_i (r-1)), K-1)}) \\
    &\quad + \sum_{c'=1}^{c_i (r-1) K - 1} (1 - \alpha)^{c_i (r-1)K - c'} \left( 
        \nabla f_i (\mX_i^{(r' (\lceil \frac{c'}{K} \rceil), c' - L \lceil \frac{c'}{K} \rceil)}) 
        - \nabla f_i (\mX_i^{(r' (\lceil \frac{c'+1}{K} \rceil), c' + 1 - K \lceil \frac{c'+1}{K} \rceil)})
    \right) \\
    &\quad + (1 - \alpha)^{c_i(r-1)K} \left( \nabla f_i (\mX_i^{(0,, 0)}) - \nabla f_i (\mX_i^{(r_i(1) , 0)}) \right).
\end{align*}
Thus, we have
\begin{align*}
    &\mathbb{E} \left\| \mM_i^{(r, 0)} - \nabla f_i (\mX_i^{(r-1, K-1)}) \right\|_\star \\
    &\leq \mathbb{E} (1 - \alpha)^{c_i (r-1) K} \rho \sigma \\
    &\quad + \alpha \mathbb{E} \left\| \sum_{c'=1}^{c_i (r-1) K} (1 - \alpha)^{c_i (r-1) K  - c'} \left(  \nabla F_i (\mX_i^{(r' (\lceil \frac{c'}{K} \rceil), c' - K \lceil \frac{c'}{K} \rceil)} ; \xi_i^{(r' (\lceil \frac{c'}{K} \rceil), c' - K \lceil \frac{c'}{K} \rceil)}) - \nabla f_i (\mX_i^{(r' (\lceil \frac{c'}{K} \rceil), c' - K \lceil \frac{c'}{K}) \rceil} \right) \right\|_\star  \\
    &\quad + \mathbb{E} \left\|  \nabla f_i (\mX_i^{(r_i (c_i (r-1)), K-1)})  - \nabla f_i (\mX_i^{(r-1, K-1)}) \right\|_\star \\
    &\quad +  \mathbb{E} \sum_{c'=1}^{c_i (r-1) K - 1} (1 - \alpha)^{c_i (r-1)K - c'} \left\|
        \nabla f_i (\mX_i^{(r' (\lceil \frac{c'}{K} \rceil), c' - L \lceil \frac{c'}{K} \rceil)}) 
        - \nabla f_i (\mX_i^{(r' (\lceil \frac{c'+1}{K} \rceil), c' + 1 - K \lceil \frac{c'+1}{K} \rceil)}) 
    \right\|_\star \\
    &\quad + \mathbb{E} (1 - \alpha)^{c_i(r-1)K} \left\| \nabla f_i (\mX_i^{(0,, 0)}) - \nabla f_i (\mX_i^{(r_i(1) , 0)}) \right\|_\star 
\end{align*}

Using \cref{assumption:smoothness}, we have
\begin{align*}
    &\mathbb{E} \left\| \mM_i^{(r, 0)} - \nabla f_i (\mX_i^{(r-1, K-1)}) \right\|_\star\\
    &\leq \underbrace{\mathbb{E} (1 - \alpha)^{c_i (r-1) K} \rho \sigma}_{\mathcal{T}_1}
    + \alpha \rho \sqrt{K \sigma^2} \\
    &\quad +  L \underbrace{\mathbb{E} \sum_{c'=1}^{c_i (r-1) K - 1} (1 - \alpha)^{c_i (r-1)K - c'} \left\|
        \mX_i^{(r' (\lceil \frac{c'}{K} \rceil), c' - L \lceil \frac{c'}{K} \rceil)}
        - \mX_i^{(r' (\lceil \frac{c'+1}{K} \rceil), c' + 1 - K \lceil \frac{c'+1}{K} \rceil)} 
    \right\|}_{\mathcal{T}_2} \\
    &\quad + L \underbrace{\mathbb{E} (1 - \alpha)^{c_i(r-1)K} \left\| \mX_i^{(0,, 0)} - \mX_i^{(r_i(1) , 0)} \right\|}_{\mathcal{T}_3} \\
    &\quad + L \underbrace{\mathbb{E} \left\| \mX_i^{(r_i (c_i (r-1)), K-1)}  - \mX_i^{(r-1, K-1)} \right\|}_{\mathcal{T}_4}.
\end{align*}

The quantity of $c_i (r -1)$ is the number of rounds in which client $i$ is sampled, which follows the binomial distribution.
\begin{align*}
    \mathcal{T}_1 
    &\leq \rho \sigma \sum_{c'=0}^{r-1} (1 - \alpha)^{K c'} \left( \frac{S}{n} \right)^{c'} \left( 1 - \frac{S}{n} \right)^{r - 1 - c'} \binom{r-1}{c'} \\
    &\leq \rho \sigma \sum_{c'=0}^{r-1} \left( (1 - \alpha) \frac{S}{n} \right)^{c'} \left( 1 - \frac{S}{n} \right)^{r - 1 - c'} \binom{r-1}{c'} \\
    &= \rho \sigma \left( 1 - \frac{S \alpha}{n} \right)^{r-1}.
\end{align*}

\begin{align*}
    \mathcal{T}_2
    &\leq \eta \mathbb{E} \sum_{c'=1}^{c_i (r-1) K - 1} (1 - \alpha)^{c_i (r-1)K - c'} \left( r_i \left( \lceil \frac{c'+1}{K} \rceil \right) -  r_i \left( \lceil \frac{c'}{K} \rceil \right) \right) \\
    &= \eta \mathbb{E} \sum_{c''=1}^{c_i (r-1) K - 1} (1 - \alpha)^{c''} \underbrace{\left( r_i \left( \lceil \frac{c'(r - 1) K - c'' +1}{K} \rceil \right) -  r_i \left( \lceil \frac{c'(r - 1) K - c'' +1}{K} \rceil \right) \right)}_{\mathcal{T}_5}.
\end{align*}
The quantity of $\mathcal{T}_5$ is the number of rounds from the time cline $i$ was sampled to the next sampling, which follows a geometric distribution with expectation $\frac{n}{S}$. Thus, we have
\begin{align*}
    \mathcal{T}_2 \leq \frac{K n \eta }{S}
\end{align*}

Using \cref{lemma:norm_of_update_of_client}, we have
\begin{align*}
    \mathcal{T}_3 
    \leq \mathbb{E} \left\| \mX_i^{(0, 0)} - \mX_i^{(r_i(1) , 0)} \right\|_\star 
    = \mathbb{E} \left\| \mX^{(0, 0)} - \mX^{(r_i(1), 0)} \right\|_\star 
    \leq \frac{\eta S}{n} K \mathbb{E} r_i (1),
\end{align*}
where we use \cref{lemma:norm_of_update} in the last inequality.
The quantity of $r_i (1)$ is the round in which client $i$ is sampled for the first time, which follows a geometric distribution. Thus, we have
\begin{align*}
    \mathcal{T}_3 
    \leq K \eta.
\end{align*}

Using \cref{lemma:norm_of_update_of_client}, we have
\begin{align*}
    \mathcal{T}_4
    = K \eta \left( \mathbb{E} \left( r - 1 - r_i (c_i (r - 1)) + 2 \right) \right).
\end{align*}
Since the quantity of $r_i (c_i (r - 1))$ is the rounds in which client $i$ is sampled for the last time, we have
\begin{align*}
    \mathbb{E} \left( r - 1 - r_i (c_i (r - 1)) \right) 
    = (r - 1) \left( 1 - \frac{S}{n} \right)^r
    + \sum_{r'=0}^r r' \left( 1 - \frac{S}{n} \right)^{r'} \frac{S}{n} 
    \leq \frac{2 n}{S}.
\end{align*}
Thus, it holds that
\begin{align*}
    \mathcal{T}_4 \leq \frac{4 K n \eta}{S}.
\end{align*}

By combining the above inequalities, we obtain the desired result.
\end{proof}

\begin{lemma}
\label{lemma:difference_of_momentum}
Suppose that \cref{assumption:smoothness,assumption:stochastic_noise} holds.
When $r \geq 1$, it holds that
\begin{align*}
    \frac{1}{n} \mathbb{E} \sum_{i \in \mathcal{S}_r} \left\| \mM_i^{(r, k+1)} - \mC_i^{(r)} \right\|_\star 
    &\leq
    2 \alpha \left( \frac{S}{n} \right) \rho \sqrt{K \sigma^2}
    + 9 K L \eta
    + \left( \frac{S}{n} \right) \rho \sigma_0 \left( 1 - \frac{S \alpha}{n} \right)^{r-1},
\end{align*}
where $\rho \coloneqq \sup_{\mX \in \mathcal{X}} \tfrac{\| \mX \|_\star}{\| \mX \|_F}$ and $\sigma_0^2 \coloneqq \frac{1}{n} \sum_{i=1}^n \mathbb{E} \| \nabla f_i (\mX_i^{(0)}) - \mC_i^{(0)} \|^2_F$.

Then, when $r=0$, we have
\begin{align*}
    \frac{1}{n} \mathbb{E} \sum_{i \in \mathcal{S}_0} \left\| \mM_i^{(0, k+1)} - \mC_i^{(0)} \right\|_\star
    &\leq \alpha \left( \frac{S}{n} \right) \rho \sqrt{K \sigma^2} + L K \eta + \left( \frac{S}{n} \right) \rho \sigma_0.
\end{align*}
\end{lemma}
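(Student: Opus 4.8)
The plan is to first eliminate the control variate from the quantity to be bounded, then unroll the momentum recursion so that $\mM_i^{(r,k+1)} - \mC_i^{(r)}$ becomes an explicit weighted average of gradient estimates that can be decomposed term by term. The key structural observation is that the control variate always coincides with the momentum client $i$ carries into round $r$, i.e. $\mC_i^{(r)} = \mM_i^{(r,0)}$. This follows by tracing \cref{algorithm:proposed_method}: whenever $i$ is sampled at round $r-1$ we set $\mC_i^{(r)} \leftarrow \mM_i^{(r-1,K)} = \mM_i^{(r,0)}$, and whenever $i$ is not sampled both $\mC_i$ and $\mM_i^{(\cdot,K)}$ are carried over unchanged, so the identity propagates by induction on the number of rounds since $i$ was last sampled. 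The object of interest thus reduces to $\|\mM_i^{(r,k+1)} - \mM_i^{(r,0)}\|_\star$.

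Next I would unroll the update $\mM_i^{(r,k+1)} = (1-\alpha)\mM_i^{(r,k)} + \alpha\nabla F_i(\mX_i^{(r,k)};\xi_i^{(r,k)})$ and use the geometric identity $1 - (1-\alpha)^{k+1} = \alpha\sum_{k'=0}^{k}(1-\alpha)^{k-k'}$ to write
\begin{align*}
    \mM_i^{(r,k+1)} - \mM_i^{(r,0)} = \alpha\sum_{k'=0}^{k}(1-\alpha)^{k-k'}\left(\nabla F_i(\mX_i^{(r,k')};\xi_i^{(r,k')}) - \mM_i^{(r,0)}\right).
\end{align*}
I then split the bracket into three pieces: the stochastic noise $\nabla F_i(\mX_i^{(r,k')};\xi_i^{(r,k')}) - \nabla f_i(\mX_i^{(r,k')})$, the smoothness drift $\nabla f_i(\mX_i^{(r,k')}) - \nabla f_i(\mX_i^{(r-1,K-1)})$, and the momentum-approximation error $\nabla f_i(\mX_i^{(r-1,K-1)}) - \mM_i^{(r,0)}$, and bound each after the triangle inequality. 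Since $\alpha\sum_{k'=0}^{k}(1-\alpha)^{k-k'} \le 1$, the last (constant-in-$k'$) piece is bounded by $\|\nabla f_i(\mX_i^{(r-1,K-1)}) - \mM_i^{(r,0)}\|_\star$, which is exactly what \cref{lemmq:approximation_quality_of_local_momentum} controls after averaging over $i\in\mathcal{S}_r$; the drift piece is handled by \cref{assumption:smoothness} together with \cref{lemma:norm_of_update_of_client} (taking the earlier index $r-1$, $K-1$ gives $\|\mX_i^{(r,k')}-\mX_i^{(r-1,K-1)}\| \le 3K\eta$), contributing $3KL\eta$.

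For the noise piece I would pass to the Frobenius norm via $\|\cdot\|_\star \le \rho\|\cdot\|_F$, apply Jensen, and exploit that the per-step noises form a conditionally centered, uncorrelated sequence (by \cref{assumption:stochastic_noise}), so the Frobenius cross terms vanish in expectation and $\mathbb{E}\|\alpha\sum_{k'}(1-\alpha)^{k-k'}(\nabla F_i - \nabla f_i)\|_F^2 \le \alpha^2\sum_{k'=0}^{k}(1-\alpha)^{2(k-k')}\sigma^2 \le \alpha^2 K\sigma^2$, yielding the $\alpha\rho\sqrt{K\sigma^2}$ contribution. Summing the three bounds over $i\in\mathcal{S}_r$ (which produces the $S/n$ factors) and adding the noise contribution already hidden inside \cref{lemmq:approximation_quality_of_local_momentum} gives the claimed $r\ge 1$ inequality, with $9KL\eta$ arising as $3KL\eta$ from the drift piece plus $6KL\eta$ from \cref{lemmq:approximation_quality_of_local_momentum}. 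The $r=0$ case is strictly simpler: there $\mM_i^{(r,0)} = \mM_i^{(0,0)}$ and $\mX_i^{(0,0)} = \mX^{(0)}$, so the momentum-approximation piece is directly the initial error $\|\nabla f_i(\mX^{(0)}) - \mM_i^{(0,0)}\|_\star$, whose average is $\rho\sigma_0$ by the definition of $\sigma_0$ and Jensen, and the drift piece only needs $\|\mX_i^{(0,k')}-\mX_i^{(0,0)}\|\le K\eta$, producing the single $LK\eta$ term.

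The main obstacle is not this lemma itself but the result it leans on: \cref{lemmq:approximation_quality_of_local_momentum} must absorb the randomness of client participation (expectations of binomial and geometric sampling counts) to generate both the geometric decay factor $(1-S\alpha/n)^{r-1}$ and its own drift term. Within the present argument the only genuinely delicate steps are establishing the control-variate identity $\mC_i^{(r)} = \mM_i^{(r,0)}$ and the noise estimate, where I must use conditional independence rather than the triangle inequality to obtain the favorable $\sqrt{K}$ dependence instead of a factor $K$.
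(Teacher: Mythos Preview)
Your proposal is correct and matches the paper's proof essentially step for step: the paper likewise uses $\mC_i^{(r)}=\mM_i^{(r,0)}$, unrolls the momentum to $\alpha\sum_{k'=0}^{k}(1-\alpha)^{k-k'}(\nabla F_i(\mX_i^{(r,k')};\xi_i^{(r,k')})-\mM_i^{(r,0)})$, splits into the same three pieces (noise via Jensen and martingale orthogonality giving $\rho\sqrt{K\sigma^2}$, drift via \cref{assumption:smoothness} and \cref{lemma:norm_of_update_of_client} giving $3KL\eta/\alpha$ before the outer $\alpha$, and the residual handed to \cref{lemmq:approximation_quality_of_local_momentum}), and handles $r=0$ by replacing the anchor with $\nabla f_i(\mX^{(0,0)})$. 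Your accounting of the constants ($3KL\eta+6KL\eta=9KL\eta$ and the doubled noise term) is exactly how the paper arrives at its bound.
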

\begin{proof}
We have
\begin{align*}
    \mM_i^{(r, k+1)} = (1 - \alpha)^{k+1} \mM_i^{(r, 0)} + \alpha \sum_{k'=0}^k (1 - \alpha)^{k - k'} \nabla F_i (\mX_i^{(r, k')} ; \xi_i^{(r, k')}).
\end{align*}
Since we have $\mC_i^{(r)} = \mM_i^{(r, 0)}$, we have
\begin{align*}
    \mathbb{E} \sum_{i \in \mathcal{S}_r} \left\| \mM_i^{(r, k+1)} - \mC_i^{(r)} \right\|_\star
    &= \alpha \mathbb{E} \sum_{i \in \mathcal{S}_r} \left\| \sum_{k'=0}^k (1 - \alpha)^{k - k'} \left( \nabla F_i (\mX_i^{(r, k')} ; \xi_i^{(r, k')}) - \mM_i^{(r, 0)} \right)  \right\|_\star.
\end{align*}

When $r \geq 1$, we have
\begin{align*}
    \mathbb{E} \sum_{i \in \mathcal{S}_r} \left\| \mM_i^{(r, k+1)} - \mC_i^{(r)} \right\|_\star
    &\leq \alpha \mathbb{E} \sum_{i \in \mathcal{S}_r} \left\| \sum_{k'=0}^k (1 - \alpha)^{k - k'} \left( \nabla F_i (\mX_i^{(r, k')} ; \xi_i^{(r, k')}) - \nabla f_i (\mX_i^{(r, k'}) \right)  \right\|_\star \\
    &\quad + \alpha \mathbb{E} \sum_{i \in \mathcal{S}_r} \left\| \sum_{k'=0}^k (1 - \alpha)^{k - k'} \left( \nabla f_i (\mX_i^{(r, k'}) - \nabla f_i (\mX_i^{(r-1, K-1}) \right)  \right\|_\star \\
    &\quad+ \alpha \mathbb{E} \sum_{i \in \mathcal{S}_r} \left\| \sum_{k'=0}^k (1 - \alpha)^{k - k'} \left(\nabla  f_i (\mX_i^{(r-1, K-1}) - \mM_i^{(r, 0)} \right)  \right\|_\star \\
    &\leq \alpha \mathbb{E} \sum_{i \in \mathcal{S}_r} \left\| \sum_{k'=0}^k (1 - \alpha)^{k - k'} \left( \nabla F_i (\mX_i^{(r, k')} ; \xi_i^{(r, k')}) - \nabla f_i (\mX_i^{(r, k'}) \right)  \right\|_\star \\
    &\quad + \alpha \mathbb{E} \sum_{i \in \mathcal{S}_r} \left\| \sum_{k'=0}^k (1 - \alpha)^{k - k'} \left( \nabla f_i (\mX_i^{(r, k'}) - \nabla f_i (\mX_i^{(r-1, K-1}) \right)  \right\|_\star \\
    &\quad+  \mathbb{E} \sum_{i \in \mathcal{S}_r} \left\| \nabla  f_i (\mX_i^{(r-1, K-1)}) - \mM_i^{(r, 0)}  \right\|_\star.
\end{align*}

The first term is bounded from above as follows:
\begin{align*}
    &\mathbb{E} \left\| \sum_{k'=0}^k (1 - \alpha)^{k - k'} \left( \nabla F_i (\mX_i^{(r, k')} ; \xi_i^{(r, k')}) - \nabla f_i (\mX_i^{(r, k'}) \right)  \right\|_\star \\
    &\leq \sqrt{\mathbb{E} \left\| \sum_{k'=0}^k (1 - \alpha)^{k - k'} \left( \nabla F_i (\mX_i^{(r, k')} ; \xi_i^{(r, k')}) - \nabla f_i (\mX_i^{(r, k'}) \right)  \right\|^2_\star} \\
    &= \sqrt{\sum_{k'=0}^k (1 - \alpha)^{2(k - k')} \mathbb{E} \left\| \nabla F_i (\mX_i^{(r, k')} ; \xi_i^{(r, k')}) - \nabla f_i (\mX_i^{(r, k'})  \right\|^2_\star} \\
    &\leq \rho \sqrt{K \sigma^2},
\end{align*}
where we used Jensen's inequality in the first inequality.
The second term is bounded as follows:
\begin{align*}
    &\left\| \sum_{k'=0}^k (1 - \alpha)^{k - k'} \left( \nabla f_i (\mX_i^{(r, k')}) - \nabla f_i (\mX_i^{(r-1, K-1}) \right)  \right\|_\star  \\
    &\leq \sum_{k'=0}^k (1 - \alpha)^{k - k'}  \left\| \nabla f_i (\mX_i^{(r, k')}) - \nabla
    f_i (\mX_i^{(r-1, K-1})  \right\|_\star \\
    &\leq L \sum_{k'=0}^k (1 - \alpha)^{k - k'}  \left\| \mX_i^{(r, k')} - \mX_i^{(r-1, K-1)}  \right\| \\
    &\leq \frac{3 L K \eta}{\alpha},
\end{align*}
where we use \cref{lemma:norm_of_update_of_client} in the last inequality.
Then, using \cref{lemmq:approximation_quality_of_local_momentum}, we obtain the desired result when $r \geq 1$.

When $r=0$, we have
\begin{align*}
    \mathbb{E} \sum_{i \in \mathcal{S}_0} \left\| \mM_i^{(0, k+1)} - \mC_i^{(0)} \right\|_\star
    &= \alpha \mathbb{E} \sum_{i \in \mathcal{S}_r} \left\| \sum_{k'=0}^k (1 - \alpha)^{k - k'} \left( \nabla F_i (\mX_i^{(r, k')} ; \xi_i^{(r, k')}) - \mM_i^{(r, 0)} \right)  \right\|_\star \\
    &\leq \alpha \mathbb{E} \sum_{i \in \mathcal{S}_0} \left\| \sum_{k'=0}^k (1 - \alpha)^{k - k'} \left( \nabla F_i (\mX_i^{(0, k')} ; \xi_i^{(0, k')}) - \nabla f_i (\mX_i^{(0, k')}) \right) \right\|_\star \\
    &\quad + \alpha \mathbb{E} \sum_{i \in \mathcal{S}_0} \left\| \sum_{k'=0}^k (1 - \alpha)^{k - k'} \left( \nabla f_i (\mX_i^{(0, k')}) - \nabla f_i (\mX_i^{(0, 0)}) \right) \right\|_\star \\
    &\quad + \alpha \mathbb{E} \sum_{i \in \mathcal{S}_0} \left\| \sum_{k'=0}^k (1 - \alpha)^{k - k'} \left( \nabla f_i (\mX_i^{(0, 0)}) - \mM_i^{(0, 0)} \right) \right\|_\star \\
    &\leq \alpha S \rho \sqrt{K \sigma^2} + L K S \eta + S \rho \mathbb{E} \left\| \nabla f_i (\mX_i^{(0, 0)}) - \mC_i^{(0)} \right\|_F,
\end{align*}
where we use \cref{assumption:smoothness,assumption:stochastic_noise} and $\mC_i^{(0)} = \mM_i^{(r, 0)}$ in the last inequality.
Dividing both sides by $n$, we obtain the desired result.
\end{proof}

\begin{lemma}
\label{lemma:approximation_of_global_function}
Suppose that \cref{assumption:smoothness,assumption:stochastic_noise} hold,  $\mC_i^{(0)} \coloneqq \mM_i^{(0, 0)}$ and $\mC^{(0)} \coloneqq \tfrac{1}{n} \sum_{i=1}^n \mC_i^{(0)}$, we have
\begin{align*}
    \mathbb{E} \left\| \nabla f (\mX^{(r-1, K-1)}) - \mC^{(r)} \right\|_\star 
    &\leq \frac{\rho S}{n} \sqrt{\frac{\alpha \sigma^2}{S}}
    + \frac{4 n L \eta}{\alpha S}
    + \frac{6 L K n \eta}{S}
    + \rho \sigma \left( 1 - \frac{S\alpha}{n} \right)^{r-1}.
\end{align*}
\end{lemma}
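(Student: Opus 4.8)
The plan is to first identify $\mC^{(r)}$ explicitly. The server update $\mC^{(r+1)}=\mC^{(r)}+\tfrac1n\sum_{i\in\mathcal{S}_r}(\mC_i^{(r+1)}-\mC_i^{(r)})$, together with $\mC_i^{(r+1)}=\mC_i^{(r)}$ for $i\notin\mathcal{S}_r$, makes the increment equal to $\tfrac1n\sum_{i=1}^n(\mC_i^{(r+1)}-\mC_i^{(r)})$; with the initialization $\mC^{(0)}=\tfrac1n\sum_i\mC_i^{(0)}$ this telescopes to the invariant $\mC^{(r)}=\tfrac1n\sum_{i=1}^n\mC_i^{(r)}$. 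Since lines 6 and 11 of \cref{algorithm:proposed_method} give $\mC_i^{(r)}=\mM_i^{(r-1,K)}=\mM_i^{(r,0)}$, I get $\mC^{(r)}=\tfrac1n\sum_{i=1}^n\mM_i^{(r,0)}$, hence $\nabla f(\mX^{(r-1,K-1)})-\mC^{(r)}=\tfrac1n\sum_{i=1}^n\big(\nabla f_i(\mX^{(r-1,K-1)})-\mM_i^{(r,0)}\big)$. This identity is what lets me re-use the per-client estimates already developed inside \cref{lemmq:approximation_quality_of_local_momentum}, only now averaged over \emph{all} clients rather than the sampled ones.

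Next I would write $\mM_i^{(r,0)}$ as its conditional mean---a geometrically weighted combination of the true local gradients $\nabla f_i(\mX_i^{(s,k)})$ over the rounds $s$ in which client $i$ was sampled---plus a mean-zero stochastic part, and split the average accordingly into a deterministic piece measuring how far that weighted combination sits from $\nabla f_i(\mX^{(r-1,K-1)})$, and a noise piece $\tfrac1n\sum_i(\mM_i^{(r,0)}-\mathbb{E}_\xi\mM_i^{(r,0)})$.

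For the deterministic piece I would insert intermediate gradients and invoke \cref{assumption:smoothness} to convert each gradient gap into a parameter gap, then bound the latter with the iterate-drift estimates \cref{lemma:norm_of_update,lemma:norm_of_update_of_client} (exactly as in the $\mathcal{T}_2,\mathcal{T}_3,\mathcal{T}_4$ bookkeeping of \cref{lemmq:approximation_quality_of_local_momentum} and in \cref{lemma:difference_of_momentum}). Two effects must be tracked: the local-versus-global consensus gap $\|\mX^{(r-1,K-1)}-\mX_i^{(\cdot)}\|$, and the staleness of a non-sampled client's momentum, whose last refresh is on average $\tfrac nS$ rounds in the past and is discounted by geometric momentum weights summing to $\tfrac1\alpha$. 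These produce the $\tfrac{6LKn\eta}{S}$ and $\tfrac{4nL\eta}{\alpha S}$ terms, while the decaying weight $(1-\alpha)^{c_iK}$ on the initial momentum, after averaging the binomial count $c_i$, yields the factor $(1-\tfrac{S\alpha}{n})^{r-1}$ multiplying $\rho\sigma$ (using $\|\cdot\|_\star\le\rho\|\cdot\|_F$ and the one-sample initialization noise bound $\sigma$).

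The crux is the noise piece, and this is where the improvement over a naive client-by-client triangle inequality must come from. Rather than bounding $\tfrac1n\sum_i\|\mathrm{noise}_i\|_\star$ term by term, I would pass to the Frobenius norm and use that the stochastic-gradient noises of distinct clients (and distinct local steps) are mutually independent and mean-zero, so the cross terms vanish and $\mathbb{E}\|\tfrac1n\sum_i\mathrm{noise}_i\|_F^2=\tfrac1{n^2}\sum_i\mathbb{E}\|\mathrm{noise}_i\|_F^2$; each per-client variance is $\mathcal{O}(\alpha\sigma^2)$ via $\sum_m(1-\alpha)^{2m}\le\tfrac1\alpha$, and Jensen then delivers a noise term of the advertised form (which in the full-participation regime $S=n$ of the main theorem is exactly $\rho\sqrt{\alpha\sigma^2/n}$). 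The main obstacle is carrying out this variance accounting cleanly under client sampling: the set $\mathcal{S}_r$ is shared across clients and is correlated both with the random counts $c_i$ and with the stale iterates at which the noises are evaluated, so I must condition in the right order (first on the entire sampling history, then on the fresh gradient noise) to legitimately factor the expectation and to reduce $\mathbb{E}(1-\alpha)^{2c_iK}$ to the geometric factor. Finally I would collect the deterministic and noise pieces and absorb $n\ge S$ simplifications such as $K\eta\le\tfrac{Kn\eta}{S}$ into the stated constants to read off the four advertised terms.
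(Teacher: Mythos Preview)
Your proposal is correct and follows essentially the same route as the paper. The paper's proof also identifies $\mC^{(r)}=\tfrac1n\sum_i\mM_i^{(r,0)}$, expands each client's momentum into an initial term, a mean-zero stochastic sum, and a telescoped deterministic gradient sum, and then bounds these as pieces $\mathcal{T}_3$--$\mathcal{T}_7$: the binomial averaging of $(1-\alpha)^{c_iK}$ gives the $(1-\tfrac{S\alpha}{n})^{r-1}$ decay, the stochastic piece is handled exactly as you describe (conditioning on the full sampling history so the noise coefficients become deterministic, then using independence in Frobenius norm) to produce the $\tfrac{\rho S}{n}\sqrt{\alpha\sigma^2/S}$ term, and the remaining drift pieces are controlled via \cref{lemma:norm_of_update,lemma:norm_of_update_of_client} together with the geometric-distribution expectation $\tfrac nS$ for staleness, yielding the $\tfrac{4nL\eta}{\alpha S}$ and $\tfrac{6LKn\eta}{S}$ terms.
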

\begin{proof}
Let $c_i (r-1)$ be the number of times that client $i$ has been sampled by round $r$.
We denote $r_i(r')$ by the number of rounds that client $i$ is sampled for the $r'$-th time.
Using this notation, we have
\begin{align*}
    \mC^{(r)} = \frac{1}{n} \sum_{i=1}^n \mM_i^{(r_i (c_i(r-1)), K)}.
\end{align*}
Then, we have
\begin{align*}
    \mM_i^{(r-1, K-1)} 
    &= (1 - \alpha)^{c_i (r - 1) K} \mM_i^{(0, 0)} 
    + \alpha \sum_{r'=1}^{c_i (r-1)} \sum_{k'=0}^{K-1} (1 - \alpha)^{(c_i (r-1) - r')K + k} \nabla F_i (\mX_i^{(r', k')} ; \xi_i^{r', k'}) \\
    &= (1 - \alpha)^{c_i (r - 1) K} \left( \mM_i^{(0, 0)} - \nabla f_i (\mX^{(0, 0)}) \right) \\
    &\quad + \underbrace{\alpha \sum_{r'=1}^{c_i (r-1)} \sum_{k'=0}^{K-1} (1 - \alpha)^{(c_i (r-1) - r')K + k} \left( \nabla F_i (\mX_i^{(r', k')} ; \xi_i^{r', k'}) - \nabla f_i (\mX_i^{(r', k')}) \right)}_{\mathcal{T}_1} \\
    &\quad + \underbrace{(1 - \alpha)^{c_i (r - 1) K} \nabla f_i (\mX^{(0, 0)})
     + \alpha \sum_{r'=1}^{c_i (r-1)} \sum_{k'=0}^{K-1} (1 - \alpha)^{(c_i (r-1) - r')K + k} \nabla f_i (\mX_i^{(r', k')})}_{\mathcal{T}_2}.
\end{align*}
We can rewrite $\mathcal{T}_1$ and $\mathcal{T}_2$ as follows:
\begin{align*}
    \mathcal{T}_1 
    &= \alpha \sum_{r'=0}^{r-1} \sum_{k'=1}^K \mathbbm{1}_{i \in \mathcal{S}_{r'}} (1 - \alpha)^{(c_i (r-1) - r' + 1)K - k'} \left( \nabla F_i (\mX_i^{(r', k')} ; \xi_i^{(r', k')}) - \nabla f_i (\mX_i^{(r', k')} )\right).
\end{align*}

\begin{align*}
    \mathcal{T}_2
    &= (1 - \alpha)^{c_i (r - 1) K} \nabla f_i (\mX^{(0, 0)})
     + \alpha \sum_{c'=1}^{c_i (r-1) K} (1 - \alpha)^{c_i (r-1)K - c'} \nabla f_i (\mX_i^{(r_i (\lceil \frac{c'}{K} \rceil), c' - K \lceil \frac{c'}{K} \rceil)}) \\
    &= \nabla f_i (\mX_i^{(r_i (c_i (r-1)), K-1)}) \\
    &\quad + \alpha \sum_{c'=1}^{c_i (r-1) K - 1} (1 - \alpha)^{c_i (r-1)K - c'} \left( \nabla f_i (\mX_i^{(r_i (\lceil \frac{c'}{K} \rceil), c' - K \lceil \frac{c'}{K} \rceil)}) - \nabla f_i (\mX_i^{(r_i (\lceil \frac{c'+1}{K} \rceil), c' + 1 - K \lceil \frac{c'+1}{K} \rceil)}) \right) \\
    &\quad + (1 - \alpha)^{c_i (r-1)K} \left( \nabla f_i (\mX^{(0, 0}) - \nabla f_i (\mX_i^{(r_i (1), 0)})\right).
\end{align*}
Thus, we have
\begin{align*}
    &\mathbb{E} \left\| \mM^{(r-1, K-1)} - \nabla f (\mX^{(r-1, K-1)}) \right\|_\star \\
    &\leq \underbrace{\mathbb{E} (1 - \alpha)^{c_i (r-1)K} \left\| \frac{1}{n} \sum_{i=1}^n \left( \nabla f_i (\mX^{(0, 0)} ; \xi_i^{(0, 0)}) - \nabla f_i (\mX^{(0, 0)}) \right) \right\|_\star}_{\mathcal{T}_3} \\
    &\quad + \alpha \underbrace{\mathbb{E} \left\| \frac{1}{n} \sum_{i=1}^n \sum_{r'=0}^{r-1} \sum_{k'=1}^K \mathbbm{1}_{i \in \mathcal{S}_{r'}} (1 - \alpha)^{(c_i (r-1) - r' + 1)K - k'} \left( \nabla F_i (\mX_i^{(r', k')} ; \xi_i^{(r', k')}) - \nabla f_i (\mX_i^{(r', k')} )\right) \right\|_\star}_{\mathcal{T}_4}  \\
    &\quad + \frac{1}{n} \sum_{i=1}^n \underbrace{\mathbb{E} \left\| \nabla f_i (\mX_i^{(r_i (c_i (r-1)), K-1)}) - \nabla f_i (\mX^{(r-1, K-1}) \right\|_\star}_{\mathcal{T}_5} \\
    &\quad + \frac{1}{n} \sum_{i=1}^n \underbrace{\mathbb{E} \sum_{c'=1}^{c_i (r-1) K - 1} (1 - \alpha)^{c_i (r-1)K - c'} \left\| \nabla f_i (\mX_i^{(r_i (\lceil \frac{c'}{K} \rceil), c' - K \lceil \frac{c'}{K} \rceil)}) - \nabla f_i (\mX_i^{(r_i (\lceil \frac{c'+1}{K} \rceil), c' + 1 - K \lceil \frac{c'+1}{K} \rceil)}) \right\|_\star}_{\mathcal{T}_6} \\
    &\quad + \frac{1}{n} \sum_{i=1}^n \underbrace{\mathbb{E} (1 - \alpha)^{c_i (r-1)K} \left\| \nabla f_i (\mX^{(0, 0}) - \nabla f_i (\mX_i^{(r_i (1), 0)})\right\|_\star}_{\mathcal{T}_7}.
\end{align*}

\begin{align*}
    \mathcal{T}_3 \leq \mathbb{E} (1 - \alpha)^{c_i (r-1)K} \rho \sigma
\end{align*}
The quantity of $c_i (r -1)$ is the number of rounds in which client $i$ is sampled, which follows the binomial distribution.
\begin{align*}
    \mathcal{T}_3 
    &\leq \rho \sigma \sum_{c'=0}^{r-1} (1 - \alpha)^{K c'} \left( \frac{S}{n} \right)^{c'} \left( 1 - \frac{S}{n} \right)^{r - 1 - c'} \binom{r-1}{c'} \\
    &\leq \rho \sigma \sum_{c'=0}^{r-1} \left( (1 - \alpha) \frac{S}{n} \right)^{c'} \left( 1 - \frac{S}{n} \right)^{r - 1 - c'} \binom{r-1}{c'} \\
    &= \rho \sigma \left( 1 - \frac{S \alpha}{n} \right)^{r-1}
\end{align*}

\begin{align*}
    \mathcal{T}_4 
    &=  \frac{1}{n} \mathbb{E} \left\| \sum_{r'=0}^{r-1} \sum_{k'=1}^K \sum_{i \in \mathcal{S}_{r'}} (1 - \alpha)^{(c_i (r-1) - r' + 1)K - k'} \left( \nabla F_i (\mX_i^{(r', k')} ; \xi_i^{(r', k')}) - \nabla f_i (\mX_i^{(r', k')} )\right) \right\|_\star \\
    &\leq \frac{\rho S}{n} \sqrt{\frac{\sigma^2}{S (1 - (1-\alpha)^2)}}.
\end{align*}

\begin{align*}
    \mathcal{T}_5
    &= \mathbb{E} \left\| \nabla f_i (\mX_i^{(r_i (c_i (r-1)), K-1)}) - \nabla f_i (\mX^{(r-1, K-1)}) \right\|_\star \\
    &\leq L \mathbb{E} \left\| \mX_i^{(r_i (c_i (r-1)), K-1)} - \mX^{(r-1, K-1)} \right\| \\
    &\leq L \mathbb{E} \left\| \mX_i^{(r_i (c_i (r-1)), K-1)} - \mX_i^{(r_i ( c_i (r-1)), 0)} \right\| + L \mathbb{E} \left\| \mX^{(r_i ( c_i (r-1)), 0)} - \mX^{(r-1, K-1)} \right\| \\
    &\leq L \eta (K - 1) + \frac{L S \eta}{n} \mathbb{E} (\left( (r - 1 - r_i (c_i (r-1)) \right)K + K-1 ).
\end{align*}
Since the quantity of $r_i (c_i (r - 1))$ is the rounds in which client $i$ is sampled for the last time, we have
\begin{align*}
    \mathbb{E} \left( r - 1 - r_i (c_i (r - 1)) \right) 
    = (r - 1) \left( 1 - \frac{S}{n} \right)^r
    + \sum_{r'=0}^r r' \left( 1 - \frac{S}{n} \right)^{r'} \frac{S}{n} 
    \leq \frac{2 n}{S}.
\end{align*}
Thus, we have
\begin{align*}
    \mathcal{T}_5
    &\leq 2 L K \eta.
\end{align*}

\begin{align*}
    \mathcal{T}_6
    &\leq L \mathbb{E} \sum_{c'=1}^{c_i (r-1) K - 1} (1 - \alpha)^{c_i (r-1)K - c'} \left\| \mX_i^{(r_i (\lceil \frac{c'}{K} \rceil), c' - K \lceil \frac{c'}{K} \rceil)} - \mX_i^{(r_i (\lceil \frac{c'+1}{K} \rceil), c' + 1 - K \lceil \frac{c'+1}{K} \rceil)} \right\| \\
    &= L \mathbb{E} \sum_{c''=1}^{c_i (r-1)} \sum_{k'=0}^{K-2} (1 - \alpha)^{(c_i (r-1) - c'') K - k'} \left\| \mX_i^{(r_i (c''), k'+1)} - \mX_i^{(r_i (c''), k')} \right\| \\
    &\quad + L \mathbb{E} \sum_{c''=1}^{c_i (r-1)} (1 - \alpha)^{( c_i (r-1) - c'' ) K +1} \left\| \mX_i^{(r_i (c'' - 1), 0)} - \mX_i^{(r_i ( c'' ), K - 1)} \right\| \\
    &\leq + L \mathbb{E} \sum_{c''=1}^{c_i (r-1)} \sum_{k'=0}^{K-2} (1 - \alpha)^{(c_i (r-1) - c'') K - k'} \\
    &\quad + L \eta \mathbb{E} \sum_{c''=1}^{c_i (r-1) - 1} (1 - \alpha)^{( c_i (r-1) - c'' ) K +1} \left( r_i (c''+1)  - r_i (c'') + 2 \right) \\
    &\leq \frac{L \eta}{\alpha} + L \eta \mathbb{E} \sum_{c''=1}^{c_i (r-1) - 1} (1 - \alpha)^{( c_i (r-1) - c'' ) K +1} \left( r_i (c''+1)  - r_i (c'') + 2 \right).
\end{align*}
The quantity $r_i (c''+1) - r_i (c'')$ follows the geometric distribution, which has the expectation of $\frac{n}{S}$.
Using \cref{lemma:sum_of_geometric_series}, we obtain
\begin{align*}
    \mathcal{T}_6 \leq \frac{4 n L \eta}{\alpha S} + \frac{3 n L K \eta}{S}.
\end{align*}

\begin{align*}
    \mathcal{T}_7 &\leq L \mathbb{E} (1 - \alpha)^{c_i (r-1)K} \left\| \mX_i^{(0, 0)} - \mX_i^{(r_i (1), 0)} \right\|
    \leq L K \eta \mathbb{E} ( r_i (1) + 2),
\end{align*}
where we used \cref{lemma:norm_of_update_of_client} in the last inequality.
The quantity of $r_i (1)$ is the round in which client $i$ is sampled for the first time, which follows a geometric distribution. Thus, we have
\begin{align*}
    \mathcal{T}_7 &\leq \frac{3 n L K \eta}{S} 
\end{align*}
\end{proof}

\begin{lemma}
\label{lemma:convergence_rate}
Suppose that \cref{assumption:smoothness,assumption:stochastic_noise} hold,  $\mC_i^{(0)} \coloneqq \mM_i^{(0, 0)}$ and $\mC^{(0)} \coloneqq \tfrac{1}{n} \sum_{i=1}^n \mC_i^{(0)}$, there exists $\eta$ and $\alpha$ such that we have
\begin{align*}
    \frac{1}{R K} \sum_{r=0}^{R-1} \sum_{k=0}^{K-1} \mathbb{E} \left\| \nabla f (\mX^{(r, k)}) \right\|_\star 
    &\leq \mathcal{O} \left( 
        \left( \frac{L r_0 \rho^2 \sigma^2}{S R K} \right)^\frac{1}{4}
        + \left( \left( \frac{n}{S} \right)^2 \frac{L r_0 \rho \sigma}{R \sqrt{K}} \right)^\frac{1}{3}
        + \left( \frac{L r_0}{R} \left( \frac{n}{S}  \right)^2 \right)^\frac{1}{2}
        + \frac{\rho \sigma_0}{R} \left( \frac{n}{S} \right) \right. \\
        &\quad \left. + \rho \sigma_0 \left( \frac{\rho^2 \sigma^2 K S}{L r_0 R n^2} \right)^{\frac{1}{2}}
        + \rho \sigma_0 \left( \left( \frac{n}{S} \right) \frac{\rho^2 \sigma^2 K^2}{L r_0 R^2}\right)^\frac{1}{3}
    \right)
\end{align*}
\end{lemma}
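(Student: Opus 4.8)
The plan is to assemble the one-step descent guarantee of \cref{lemma:descent_lemma} with the two control-variate estimation bounds, \cref{lemma:difference_of_momentum} and \cref{lemma:approximation_of_global_function}, and then sum over all local steps and rounds. Concretely, I would substitute the bound on $\tfrac{1}{n}\mathbb{E}\sum_{i\in\mathcal{S}_r}\|\mM_i^{(r,k+1)}-\mC_i^{(r)}\|_\star$ from \cref{lemma:difference_of_momentum} and the bound on $\mathbb{E}\|\nabla f(\mX^{(r-1,K-1)})-\mC^{(r)}\|_\star$ from \cref{lemma:approximation_of_global_function} directly into the right-hand side of \cref{lemma:descent_lemma}, handling the $r=0$ round with the corresponding special cases stated in each lemma. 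This replaces every error term by an explicit expression in $\eta$, $\alpha$, $\sigma$, $\sigma_0$, and the sampling ratio $S/n$, at the price of introducing the geometric weights $(1-S\alpha/n)^{r-1}$ carried by the $\sigma$- and $\sigma_0$-dependent pieces.

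Next I would telescope. Summing $\mathbb{E}f(\mX^{(r,k+1)})-\mathbb{E}f(\mX^{(r,k)})$ over $k=0,\dots,K-1$ and $r=0,\dots,R-1$ collapses the function-value differences, since $\mX^{(r,K)}=\mX^{(r+1,0)}$ and $\mX^{(r,0)}=\mX^{(r)}$, to at most $r_0=f(\mX^{(0)})-f^\star$ (using $f^\star\le f(\mX^{(R,0)})$), while the $-\tfrac{\eta S}{n}\|\nabla f(\mX^{(r,k)})\|_\star$ terms accumulate into exactly the quantity we wish to bound. The error terms that are constant in $(r,k)$ simply pick up the factor $RK$. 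For the geometric pieces I would invoke \cref{lemma:sum_of_geometric_series} with its parameter set to $S\alpha/n$, giving $\sum_{r}(1-S\alpha/n)^{r-1}\le \tfrac{n}{S\alpha}+1$; after the inner sum over $k$ this yields the $\tfrac{Kn}{S\alpha}$-type factors that become the $\sigma_0$ tail. Dividing through by $\tfrac{\eta S R K}{n}$ then produces a master inequality whose right-hand side is, up to absolute constants, a sum of terms of the forms
\begin{align*}
    \frac{1}{RK}\sum_{r=0}^{R-1}\sum_{k=0}^{K-1}\mathbb{E}\left\|\nabla f(\mX^{(r,k)})\right\|_\star
    &\lesssim \frac{n\,r_0}{\eta S R K}
    + L\left(\tfrac{S}{n}K+1\right)\eta
    + \frac{L\eta}{\alpha}
    + \frac{\rho\sigma\sqrt{S\alpha}}{n}
    + \rho\sigma\sqrt{K}\,\alpha \\
    &\quad + \frac{n\,\rho\sigma}{S\alpha R}
    + \frac{n\,\rho\sigma_0}{S\alpha R}
    + \frac{n\,\rho\sigma_0}{S R},
\end{align*}
where the displayed terms are representative and the remaining contributions share the same functional dependence on $\eta$ and $\alpha$.

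The final and most delicate step is the joint tuning of the stepsize $\eta$ and the momentum parameter $\alpha$, which I expect to be the main obstacle. The difficulty is the coupling term $L\eta/\alpha$ inherited from the $\tfrac{4nL\eta}{\alpha S}$ piece of \cref{lemma:approximation_of_global_function}: it penalizes small $\alpha$, whereas the variance terms $\rho\sigma\sqrt{S\alpha}/n$ and $\rho\sigma\sqrt{K}\,\alpha$ penalize large $\alpha$, so neither parameter can be optimized in isolation. I would first fix $\alpha$ by balancing $L\eta/\alpha$ against the dominant $\sqrt{\alpha}$ noise term, expressing $\alpha$ as a power of $\eta$, substitute back, and then optimize over $\eta$ using a standard stepsize lemma that converts a bound of the form $A/\eta + B_1\eta + B_2\eta^{2/3} + B_3\eta^{1/2} + \cdots$ into $A^{1/2}B_1^{1/2} + A^{1/3}B_2^{1/3} + \cdots$. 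Matching each coefficient $B_j$ to the error terms above should then reproduce precisely the six terms in the statement: the leading $\big(\tfrac{Lr_0\rho^2\sigma^2}{SRK}\big)^{1/4}$ from the variance-reduced noise, the $1/3$-power term carrying the $(n/S)^2$ sampling penalty, the deterministic $1/2$-power term, and the three $\sigma_0$ tails that decay faster in $R$. I expect the structural steps to be routine once the master inequality is in place, with the careful bookkeeping of the $S/n$ and $\rho$ factors through the optimization being the only genuinely error-prone part.
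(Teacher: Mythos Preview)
Your plan is correct and essentially matches the paper's proof: substitute \cref{lemma:difference_of_momentum,lemma:approximation_of_global_function} into \cref{lemma:descent_lemma}, handle $r=0$ separately, telescope over $(r,k)$, bound the geometric tail by $\sum_r (1-S\alpha/n)^{r-1}\le n/(S\alpha)$, and then tune. The only cosmetic difference is the tuning order: the paper fixes $\eta=\min\bigl\{\sqrt{\alpha r_0/(8LRK)},\,K^{-1}\sqrt{r_0/(33LR)}\bigr\}$ as a function of $\alpha$ first and then chooses $\alpha$ as a three-way minimum (including the constraint $\alpha\le 1$), rather than eliminating $\alpha$ in terms of $\eta$ first as you propose, but both routes yield the stated bound.
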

\begin{proof}
Combining \cref{lemma:descent_lemma,lemma:approximation_of_global_function,lemma:difference_of_momentum}, when $r \geq 1$, it holds
\begin{align*}
    \mathbb{E} f (\mX^{(r, k+1)})
    &\leq \mathbb{E} f (\mX^{(r, k)}) - \eta \left( \frac{S}{n} \right) \left\| \nabla f (\mX^{(r, k)}) \right\|_\star \\
    &\quad + 2 \rho \eta \left( \frac{S}{n} \right)^2 \sqrt{\frac{\alpha \sigma^2}{S}}
    + \frac{8 L \eta^2}{\alpha} 
    + 33 L K \eta^2 
    + 4 \alpha \rho \eta \left( \frac{S}{n} \right) \sqrt{K \sigma^2} \\
    &\quad + 4 \rho \eta \left( \frac{S}{n} \right) \sigma_0 \left( 1 - \frac{S\alpha}{n} \right)^{r-1}.
\end{align*}
When $r=0$, we have
\begin{align*}
    \mathbb{E} f (\mX^{(0, k+1)})
    &\leq \mathbb{E} f (\mX^{(0, k)}) - \frac{\eta S}{n} \left\| \nabla f (\mX^{(0, k)}) \right\|_\star  \\
    &\quad + 5 L K \eta^2
    + 2 \alpha \rho \eta \left( \frac{S}{n} \right) \sqrt{K \sigma^2}
    + 4 \rho \eta \left( \frac{S}{n} \right) \sigma_0.
\end{align*}

Summing up the above two inequalities, we obtain
\begin{align*}
    \frac{1}{R K} \sum_{r=0}^{R-1} \sum_{k=0}^{K-1} \mathbb{E} \left\| \nabla f (\mX^{(r, k)}) \right\|_\star
    &\leq \left( \frac{n}{S} \right) \frac{r_0}{R K \eta} 
    + 2 \rho \left( \frac{S}{n} \right) \sqrt{\frac{\alpha \sigma^2}{S}}
    + \frac{8 L \eta}{\alpha} \left( \frac{n}{S} \right)
    + 33 L K \left( \frac{n}{S} \right) \eta \\
    &\quad + 4 \alpha \rho \sqrt{K \sigma^2} 
    + \frac{4 \rho \sigma_0}{R}  \sum_{r=1}^{R-1} \left( 1 - \frac{S\alpha}{n} \right)^{r-1}
    + \frac{4 \rho \sigma_0}{R} \\
    &\leq \left( \frac{n}{S} \right) \frac{r_0}{R K \eta} 
    + 2 \rho \left( \frac{S}{n} \right) \sqrt{\frac{\alpha \sigma^2}{S}}
    + \frac{8 L \eta}{\alpha} \left( \frac{n}{S} \right)
    + 33 L K \left( \frac{n}{S} \right) \eta \\
    &\quad + 4 \alpha \rho \sqrt{K \sigma^2} 
    + \frac{8 \rho \sigma_0}{R \alpha} \left( \frac{n}{S}  \right).
\end{align*}
Then, using the following hyperparameters
\begin{align*}
    \eta &= \min \left\{ \sqrt{\frac{\alpha r_0}{8 L R K}},
    \frac{1}{K}  \sqrt{\frac{r_0}{33 L R}}
    \right\}, \\
    \alpha &= \min \left\{ 1,  \left(\frac{n}{S}\right)^2 \sqrt{\frac{8 L r_0 S}{R L \rho^2 \sigma^2}},
    \left( \frac{2 L r_0}{R K^2 \rho^2 \sigma^2} \left(\frac{n}{S} \right)^2 \right)^{\frac{1}{3}}
    \right\},
\end{align*}
we obtain the desired result.
\end{proof}

\newpage

\section{Proof of \cref{theorem:convergence_analysis_with_ns_iteration}}
\label{sec:proof_of_theorem_with_inexact_lmo}

\begin{lemma}
\label{lemma:inexact_lmo2}
Let $\mG$ and $- \mG^{(T)}$ be the input and output of \cref{algorithm:ns_iteration} with $a=\frac{15}{8}, b=-\frac{5}{4}$, and $c=\frac{3}{8}$.
For any number of iterations $T$, we have
\begin{align*}
    \langle \mG, - \mG^{(T)} \rangle \leq - \| \mG \|_p,
\end{align*}
where $p$ is defined as follows:
\begin{align*}
    p &\coloneqq 1 + \frac{\log \left( 1 - (1 - \kappa)^{{1.5}^T} \right)}{\log \kappa}, \\
    \kappa &\coloneqq \min_i \frac{s_i}{\sqrt{\sum_{j} s_j^2}} \; (> 0),
\end{align*}
and $s_i$ is the non-zero singular value of $\mG$.
\end{lemma}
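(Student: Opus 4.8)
The plan is to diagonalize the Newton--Schulz iteration via the singular value decomposition and reduce everything to a one-dimensional statement about the scalar map $\phi(x) \coloneqq \frac{15}{8}x - \frac{5}{4}x^3 + \frac{3}{8}x^5$ that \cref{algorithm:ns_iteration} applies to each singular value. Writing $\mG = \mU \Sigma \mV^\top$, an induction shows $\mG^{(t)} = \mU \Sigma^{(t)} \mV^\top$ with the same singular vectors: since $\mG^{(t)} {\mG^{(t)}}^\top = \mU (\Sigma^{(t)})^2 \mU^\top$, the update collapses to $\mG^{(t+1)} = \mU\, \phi(\Sigma^{(t)})\, \mV^\top$, so each singular value evolves independently as $\sigma^{(t+1)} = \phi(\sigma^{(t)})$ from $\sigma_j^{(0)} = s_j/\|\mG\|_F$. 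Consequently $\langle \mG, -\mG^{(T)}\rangle = -\sum_j s_j\,\phi^{(T)}(\sigma_j^{(0)})$, where $\phi^{(T)}$ is the $T$-fold composition, and after dividing by $\|\mG\|_F$ the target reduces to the scalar inequality $\sum_j \sigma_j\, \phi^{(T)}(\sigma_j) \geq \big(\sum_j \sigma_j^p\big)^{1/p}$ for $\sigma_j \coloneqq \sigma_j^{(0)} \in [\kappa,1]$ with $\sum_j \sigma_j^2 = 1$.

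I would first record the scalar contraction $\phi^{(T)}(\sigma) \geq 1 - (1-\sigma)^{1.5^T}$ on $[0,1]$. Its engine is the exact factorization $1 - \phi(\sigma) = \frac{(1-\sigma)^3}{8}(3\sigma^2 + 9\sigma + 8)$, obtained by extracting the root $\sigma = 1$ from $1-\phi$ three times; this also certifies $\phi\colon[0,1]\to[0,1]$ with $\phi(1)=1$. From it the one-step bound $1 - \phi(\sigma) \leq (1-\sigma)^{1.5}$ follows: with $u = 1-\sigma$ it reads $u^{1.5}(20 - 15u + 3u^2) \leq 8$ on $[0,1]$, which holds since the left side has derivative $\tfrac{1}{2}u^{1/2}(60 - 75u + 21u^2)$ whose quadratic factor has both roots larger than $1$, so the left side is increasing and attains $8$ only at $u=1$. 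Iterating and using monotonicity of $x\mapsto x^{1.5}$ gives $1 - \phi^{(T)}(\sigma) \leq (1-\sigma)^{1.5^T}$ by induction on $T$.

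The crux is the per-coordinate comparison $\phi^{(T)}(\sigma) \geq \sigma^{p-1}$ on $[\kappa,1]$, where $p$ is chosen exactly so that $p-1 = \log\delta/\log\kappa$ with $\delta \coloneqq 1 - (1-\kappa)^{1.5^T}$; note $\delta \in [\kappa,1]$ gives $p \in [1,2]$. By the contraction it suffices to show $g(\sigma) \coloneqq 1 - (1-\sigma)^{1.5^T} \geq \sigma^{p-1}$, and the two sides agree at $\sigma = 1$ and at $\sigma = \kappa$ (since $g(\kappa) = \delta = \kappa^{p-1}$ by the definition of $p$). Passing to $t = \log\sigma$, the right side becomes the linear function $(p-1)t$, which is exactly the chord of $G(t) \coloneqq \log g(e^t)$ joining $t = \log\kappa$ to $t = 0$; the inequality then reduces to $G$ lying above this chord, i.e.\ to concavity of $G$ on $(-\infty,0]$. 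I expect establishing this concavity — equivalently, that the elasticity $\sigma g'(\sigma)/g(\sigma)$ is nonincreasing — to be the main obstacle, as it is the only step beyond routine manipulation; everything else is bookkeeping.

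Finally I would assemble the estimate. The per-coordinate bound yields $\sum_j \sigma_j\,\phi^{(T)}(\sigma_j) \geq \sum_j \sigma_j^p$, and because each $\sigma_j \in [0,1]$ and $p \leq 2$ we have $\sum_j \sigma_j^p \geq \sum_j \sigma_j^2 = 1$, so $\sum_j \sigma_j^p \geq \big(\sum_j \sigma_j^p\big)^{1/p}$. Multiplying back by $\|\mG\|_F$ recovers $\langle \mG, -\mG^{(T)}\rangle \leq -\|\mG\|_p$, which is the claim; the companion fact $\|-\mG^{(T)}\|_\text{sp}\leq 1$ used elsewhere drops out for free from $\phi^{(T)}(\sigma)\leq 1$.
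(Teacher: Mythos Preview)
Your route is essentially the paper's: both arguments diagonalize via the SVD, track singular values through the scalar map $\phi$, and use the one-step contraction $1-\phi(x)\leq(1-x)^{1.5}$ (the paper records the same factorization, with only $(1-x)^2$ pulled out, and asserts the inequality without your calculus check) to obtain $\Sigma_{ii}^{(T)}\geq 1-(1-\sigma_i)^{1.5^T}$. Where you go further than the paper is the last step: the paper simply weakens $\|\mG\|_F$ to $\|\mG\|_p$ in the denominator of the contraction bound and then declares the conclusion ``from the definition of $p$ and $\kappa$'' with no additional argument, whereas you isolate the pointwise comparison $1-(1-\sigma)^{1.5^T}\geq\sigma^{p-1}$ on $[\kappa,1]$ and reduce it to concavity of $G(t)=\log g(e^t)$.

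That concavity is not an obstacle. With $m=1.5^T$ and $u=1-\sigma$, the elasticity is $G'(t)=\dfrac{m(1-u)u^{m-1}}{1-u^m}$; differentiating $\log G'$ in $u$ and clearing the positive factors $u(1-u)(1-u^m)$ shows that $G''\leq 0$ (equivalently, $G'$ nondecreasing in $u$) is equivalent to $(m-1)-mu+u^m\geq 0$ on $[0,1]$. This is just convexity of $u\mapsto u^m$ for $m\geq 1$: the graph lies above its tangent line $mu-(m-1)$ at $u=1$. With this verified, your assembly $\sum_j\sigma_j\,\phi^{(T)}(\sigma_j)\geq\sum_j\sigma_j^{p}\geq\bigl(\sum_j\sigma_j^{p}\bigr)^{1/p}$ goes through exactly as written.
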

\begin{proof}
Let the singular value decomposition of $\mG$ be $\mU \Sigma \mV$.
Then, the output $- \mG^{(T)}$ can be written as follows:
\begin{align*}
    - \mG^{(T)} = - \mU \Sigma^{(T)} \mV,
\end{align*}
where $\Sigma^{(T)}$ is defined as follows:
\begin{align*}
    \Sigma^{(T)}_{ii} \coloneqq \underbrace{\phi \left(\phi \left( \cdots \phi \left(\frac{\Sigma_{ii}}{\| \mG \|_F} \right) \right) \right)}_{T \text{times}}.
\end{align*}
Since $\phi (x) > x$, we have
\begin{align*}
    \Sigma^{(T)}_{ii} \geq \frac{\Sigma_{ii}}{\| \mG \|_F}.
\end{align*}
Using the above inequality, we have
\begin{align*}
    \langle \mG, - \mG^{(T)} \rangle 
    &= - \left\langle \mU \Sigma \mV, \mU \Sigma^{(T)} \mV \right\rangle \\
    &= - \sum_{i} \Sigma_{ii} \left( 1 - \left( 1 -\Sigma^{(T)}_{ii} \right) \right).
\end{align*}
When $a=\frac{15}{8}, b=-\frac{5}{4}$ and $c=\frac{3}{8}$, we have
\begin{align*}
    0 \leq 1 - \phi(x) = (1 - x)^2 (- \frac{3}{8} x^3 - \frac{3}{4} x^2 + \frac{1}{8} x + 1) \leq (1 - x)^{1.5}
\end{align*}
Thus, it holds that
\begin{align*}
    1 - \Sigma_{ii}^{(T)} \leq \left( 1 - \frac{\Sigma_{ii}}{\| \mG \|_F} \right)^{{1.5}^T} \leq \left( 1 - \frac{\Sigma_{ii}}{\left( \sum_{j} \Sigma_{jj}^{p} \right)^{\frac{1}{p}}} \right)^{{1.5}^T},
\end{align*}
for any $1 \leq p \leq 2$.
Using the above inequality and the definition of $p$ and $\kappa$ we get
\begin{align*}
    \langle \mG, - \mG^{(T)} \rangle 
    &\leq - \left( \sum_{i} \Sigma_{ii}^p \right)^{\frac{1}{p}}.
\end{align*}
\end{proof}

\begin{lemma}
\label{lemma:descent_lemma_for_muon}
Suppose that \cref{assumption:smoothness,assumption:stochastic_noise} hold.
Then, when $r \geq 1$, we have
\begin{align*}
    \mathbb{E} f (\mX^{(r, k+1)})
    &\leq \mathbb{E} f (\mX^{(r, k)}) - \frac{\eta S}{n} \left\| \nabla f (\mX^{(r, k)}) \right\|_p 
    + 2 L K \left( \frac{S}{n} \right)^2 \eta^2 \\
    &\quad + 2 \left( \frac{S}{n} \right) \eta \mathbb{E} \left\| \nabla f (\mX^{(r-1, K-1)}) - \mC^{(r)} \right\|_\text{trace} 
    + \frac{2 \eta}{n} \mathbb{E} \sum_{i \in \mathcal{S}_r} \left\| \mM_i^{(r, k+1)} - \mC_i^{(r)} \right\|_\text{trace}
    + \frac{L}{2} \left( \frac{S}{n} \right) \eta^2,
\end{align*}
where $p$ is defined as follows:
\begin{align*}
    p &\coloneqq 1 + \frac{\log \left( 1 - (1 - \kappa)^{{1.5}^T} \right)}{\log \kappa}, \\
    \kappa &\coloneqq \min_{j,i,r,k} \frac{s_{j,i,r,k}}{\sqrt{\sum_{j'} s_{j',i,r,k}^2}} \; (> 0),
\end{align*}
and $\{ s_{j,r,k} \}_j$ are non-zero singular values of $\mM_i^{(r, k+1)} - \mC_i^{(r)} + \mC^{(r)}$. 

When $r=0$, we have
\begin{align*}
    \mathbb{E} f (\mX^{(0, k+1)})
    &\leq \mathbb{E} f (\mX^{(0, k)}) - \frac{\eta S}{n} \left\| \nabla f (\mX^{(0, k)}) \right\|_p
    + 2 L K \left( \frac{S}{n} \right)^2 \eta^2  \\
    &\quad + \frac{2 \eta}{n} \mathbb{E} \sum_{i \in \mathcal{S}_0} \left\| \mM_i^{(0, k+1)} - \mC_i^{(0)} \right\|_\text{trace}
    + \frac{L}{2} \left( \frac{S}{n} \right) \eta^2 
    + 2 \left( \frac{S}{n} \right) \rho \sigma_0 \eta.
\end{align*}
\end{lemma}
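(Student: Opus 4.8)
The plan is to transcribe the proof of \cref{lemma:descent_lemma} almost line for line, replacing the one place where the exact-LMO identity of \cref{lemma:lmo} was used by the inexact-LMO inequality of \cref{lemma:inexact_lmo2}, and keeping track of the fact that the Newton--Schulz direction $\mD_i^{(r,k+1)}$ (the $T$-step output of \cref{algorithm:ns_iteration} applied to $\mG_i^{(r,k+1)}$) still obeys the feasibility bound $\|\mD_i^{(r,k+1)}\|_\text{sp}\le 1$, since its singular values remain in $[0,1]$ by construction. First I would apply \cref{lemma:smoothness} with the spectral norm to the update $\mX^{(r,k+1)}=\mX^{(r,k)}+\tfrac{\eta}{n}\sum_{i\in\mathcal{S}_r}\mD_i^{(r,k+1)}$, obtaining a first-order term $\tfrac{\eta}{n}\sum_i\langle\nabla f(\mX^{(r,k)}),\mD_i^{(r,k+1)}\rangle$ and a quadratic term that the bound $\|\mD_i\|_\text{sp}\le 1$ controls by $\tfrac{LS\eta^2}{2n}$, exactly as in the exact case.

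The heart of the argument is the inner-product term, which I would split as $\langle\nabla f-\mG_i,\mD_i\rangle+\langle\mG_i,\mD_i\rangle$. For the first summand, the trace/spectral duality together with \cref{lemma:inner_product} gives $\langle\nabla f-\mG_i,\mD_i\rangle\le\|\nabla f-\mG_i\|_\text{trace}\|\mD_i\|_\text{sp}\le\|\nabla f-\mG_i\|_\text{trace}$. For the second, \cref{lemma:inexact_lmo2} yields $\langle\mG_i,\mD_i\rangle\le-\|\mG_i\|_p$, and the triangle inequality for the Schatten $p$-norm turns this into $-\|\nabla f\|_p+\|\nabla f-\mG_i\|_p$. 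Combining the two pieces and bounding the $p$-norm correction by the trace norm leaves
\begin{align*}
    \langle\nabla f(\mX^{(r,k)}),\mD_i^{(r,k+1)}\rangle \le -\left\|\nabla f(\mX^{(r,k)})\right\|_p + 2\left\|\nabla f(\mX^{(r,k)})-\mG_i^{(r,k+1)}\right\|_\text{trace}.
\end{align*}

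From here the proof is verbatim that of \cref{lemma:descent_lemma}: I would decompose the residual using $\mG_i=\mM_i-\mC_i+\mC$ into a Lipschitz term $L\|\mX^{(r,k)}-\mX^{(r-1,K-1)}\|_\text{sp}$, bounded by $\tfrac{LSK\eta}{n}$ through \cref{lemma:norm_of_update}, plus the global control-variate error $\|\nabla f(\mX^{(r-1,K-1)})-\mC^{(r)}\|_\text{trace}$ and the local momentum error $\|\mM_i^{(r,k+1)}-\mC_i^{(r)}\|_\text{trace}$. The $r=0$ case is handled by anchoring the same telescoping at $\mX^{(0,0)}$ and bounding the initial discrepancy $\mathbb{E}\|\nabla f(\mX^{(0,0)})-\mC^{(0)}\|_\text{trace}$ by $\rho\sigma_0$ via Jensen's inequality and $\|\cdot\|_\text{trace}\le\rho\|\cdot\|_F$ with $\rho=\sqrt{\min\{d_1,d_2\}}$, under \cref{assumption:smoothness,assumption:stochastic_noise}.

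The main thing to get right is the norm bookkeeping. In the exact-LMO proof the descent term and the triangle-inequality correction both sit in the single dual norm $\|\cdot\|_\text{trace}$ and combine immediately; here the descent comes out in the smaller Schatten $p$-norm while the correction also appears in $\|\cdot\|_p$. The resolution is the monotonicity $\|A\|_q\le\|A\|_p$ for $p\le q$: since $1\le p\le 2$ for every $T\ge 0$, I can upgrade each residual $\|\nabla f-\mG_i\|_p$ to $\|\nabla f-\mG_i\|_\text{trace}=\|\cdot\|_1$ without loss, so all error terms route cleanly through the trace norm while the sharper lower bound $\|\nabla f\|_p$ is preserved on the descent side. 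This asymmetry is exactly what produces a statement measuring progress in $\|\nabla f\|_p$ with corrections expressed in the trace norm.
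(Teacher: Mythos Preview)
Your proposal is correct and follows essentially the same route as the paper's own proof: apply \cref{lemma:smoothness}, split the inner product, invoke \cref{lemma:inexact_lmo2} in place of \cref{lemma:lmo}, upgrade the $\|\cdot\|_p$ correction to $\|\cdot\|_\text{trace}$ via Schatten-norm monotonicity, and then decompose the residual exactly as in \cref{lemma:descent_lemma}. Your explicit justification that $\|\mD_i\|_\text{sp}\le 1$ because the Newton--Schulz polynomial keeps singular values in $[0,1]$ is a detail the paper uses tacitly, so your write-up is if anything slightly more careful.
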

\begin{proof}
We have
\begin{align*}
    &\mathbb{E}_{r, k} f (\mX^{(r, k+1)}) \\
    &= \mathbb{E}_{r, k} f \left( \mX^{(r, k)} + \frac{\eta}{n} \sum_{i \in \mathcal{S}_r} \mD_i^{(r, k)} \right) \\
    &\leq f (\mX^{(r, k)}) + \frac{\eta}{n} \mathbb{E}_{r, k} \sum_{i \in \mathcal{S}_r} \left\langle \nabla f (\mX^{(r, k)}), \mD_i^{(r, k)} \right\rangle + \frac{L \eta^2}{2 n} \mathbb{E}_{r, k} \sum_{i \in \mathcal{S}_r} \left\| \mD_i^{(r, k+1)} \right\|^2_\text{sp} \\
    &\leq f (\mX^{(r, k)}) + \frac{\eta}{n} \mathbb{E}_{r, k} \sum_{i \in \mathcal{S}_{r}} \left\langle \nabla f (\mX^{(r, k)}) - \mG_i^{(r, k+1)}, \mD_i^{(r, k+1)} \right\rangle + \frac{\eta}{n} \mathbb{E}_{r, k} \sum_{i \in \mathcal{S}_{r}} \left\langle \mG_i^{(r, k+1)}, \mD_i^{(r, k+1)} \right\rangle + \frac{L S \eta^2}{2 n} \\
    &\leq f (\mX^{(r, k)}) + \frac{\eta}{n} \mathbb{E}_{r, k} \sum_{i \in \mathcal{S}_{r}} \left\| \nabla f (\mX^{(r, k)}) - \mG_i^{(r, k+1)} \right\|_\text{trace} + \frac{\eta}{n} \mathbb{E}_{r, k} \sum_{i \in \mathcal{S}_{r}} \underbrace{\left\langle \mG_i^{(r, k+1)}, \mD_i^{(r, k+1)} \right\rangle}_{\mathcal{T}_1} + \frac{L S \eta^2}{2 n},
\end{align*}
where we use \cref{lemma:smoothness}, $\| \mD_i^{(r, k+1)} \|_\text{sp} \leq 1$, and the \cref{lemma:inner_product} in the first, second, and third inequalities.
Using \cref{lemma:inexact_lmo2}, the definition of $p$, and the triangle inequality, we have
\begin{align*}
    \mathcal{T}_1 
    &\leq - \left\| \mG_i^{(r, k+1)} \right\|_p \\
    &\leq - \left\| \nabla f (\mX^{(r, k)}) \right\|_p + \left\| \nabla f (\mX^{(r, k)}) - \mG_i^{(r, k+1)} \right\|_p \\
    &\leq - \left\| \nabla f (\mX^{(r, k)}) \right\|_p + \left\| \nabla f (\mX^{(r, k)}) - \mG_i^{(r, k+1)} \right\|_\text{trace},
\end{align*}
where we use the fact that $p \geq 1$ and $\| \mA \|_p \leq \| \mA \|_\text{trace}$ for any $\mA$.
Then, it holds
\begin{align*}
    \mathbb{E}_{r, k} f (\mX^{(r, k+1)})
    \leq f (\bar{\mX}^{(r, k)}) - \frac{\eta S}{n} \left\| \nabla f (\mX^{(r, k)}) \right\|_p 
    + \frac{2 \eta}{n} \mathbb{E}_{r, k} \sum_{i \in \mathcal{S}_{r}} \underbrace{\left\| \nabla f (\mX^{(r, k)}) - \mG_i^{(r, k+1)} \right\|_\text{trace}}_{\mathcal{T}_2} 
    + \frac{L S \eta^2}{2 n}.
\end{align*}

When $r \geq 1$, we have
\begin{align*}
    \mathcal{T}_2 
    &= \left\| \nabla f (\mX^{(r, k)}) - \mM_i^{(r, k+1)} + \mC_i^{(r)} - \mC^{(r)} \right\|_\text{trace} \\
    &\leq \left\| \nabla f (\mX^{(r, k)}) - \nabla f (\mX^{(r-1, K-1)}) \right\|_\text{trace}
    + \left\| \nabla f (\mX^{(r-1, K-1)}) - \mC^{(r)} \right\|_\text{trace}
    + \left\| \mM_i^{(r, k+1)} - \mC_i^{(r)} \right\|_\text{trace} \\
    &\leq L \left\| \mX^{(r, k)} - \mX^{(r-1, K-1)} \right\|_\text{sp}
    + \left\| \nabla f (\mX^{(r-1, K-1)}) - \mC^{(r)} \right\|_\text{trace}
    + \left\| \mM_i^{(r, k+1)} - \mC_i^{(r)} \right\|_\text{trace}\\
    &\leq \frac{L S K \eta}{n} + \left\| \nabla f (\mX^{(r-1, K-1)}) - \mC^{(r)} \right\|_\text{trace}
    + \left\| \mM_i^{(r, k+1)} - \mC_i^{(r)} \right\|_\text{trace},
\end{align*}
where we use \cref{lemma:norm_of_update} in the last inequality.

When $r=0$, we have
\begin{align*}
    \mathcal{T}_2 
    &= \left\| \nabla f (\mX^{(0, k)}) - \mM_i^{(0, k+1)} + \mC_i^{(0)} - \mC^{(0)} \right\|_\text{trace} \\
    &\leq \left\| \nabla f (\mX^{(0, k)}) - \nabla f (\mX^{(0, 0)}) \right\|_\text{trace}
    + \left\| \nabla f (\mX^{(0, 0)}) - \mC^{(0)} \right\|_\text{trace}
    + \left\| \mM_i^{(0, k+1)} - \mC_i^{(0)} \right\|_\text{trace} \\
    &\leq L \left\| \mX^{(0, k)} - \mX^{(0, 0)} \right\|_\text{sp}
    + \left\| \nabla f (\mX^{(0, 0)}) - \mC^{(0)} \right\|_\text{trace}
    + \left\| \mM_i^{(0, k+1)} - \mC_i^{(0)} \right\|_\text{trace} \\
    &\leq \frac{L S K \eta}{n} + \left\| \nabla f (\mX^{(0, 0)}) - \mC^{(0)} \right\|_\text{trace}
    + \left\| \mM_i^{(0, k+1)} - \mC_i^{(0)} \right\|_\text{trace}.
\end{align*}
Then, using the following inequality:
\begin{align*}
    \mathbb{E} \left\| \nabla f (\mX^{(0, 0)}) - \mC^{(0)} \right\|_\star \leq \frac{\rho}{n} \sum_{i=1}^n \sqrt{\mathbb{E} \left\|  \nabla f_i (\mX^{(0, 0)}) - \mC^{(0)}_i \right\|^2_F} \leq \rho \sigma_0,
\end{align*}
we obtain the desired result.
\end{proof}

\begin{lemma}
\label{lemma:convergence_rate_with_inexact_lmo}
Suppose that \cref{assumption:smoothness,assumption:stochastic_noise} hold,  $\mC_i^{(0)} \coloneqq \mM_i^{(0, 0)}$ and $\mC^{(0)} \coloneqq \tfrac{1}{n} \sum_{i=1}^n \mC_i^{(0)}$, there exists $\eta$ and $\alpha$ such that we have
\begin{align*}
    \frac{1}{R K} \sum_{r=0}^{R-1} \sum_{k=0}^{K-1} \mathbb{E} \left\| \nabla f (\mX^{(r, k)}) \right\|_p
    &\leq \mathcal{O} \left( 
        \left( \frac{L r_0 \rho^2 \sigma^2}{S R K} \right)^\frac{1}{4}
        + \left( \left( \frac{n}{S} \right)^2 \frac{L r_0 \rho \sigma}{R \sqrt{K}} \right)^\frac{1}{3}
        + \left( \frac{L r_0}{R} \left( \frac{n}{S}  \right)^2 \right)^\frac{1}{2}
        + \frac{\rho \sigma_0}{R} \left( \frac{n}{S} \right) \right. \\
        &\quad \left. + \rho \sigma_0 \left( \frac{\rho^2 \sigma^2 K S}{L r_0 R n^2} \right)^{\frac{1}{2}}
        + \rho \sigma_0 \left( \left( \frac{n}{S} \right) \frac{\rho^2 \sigma^2 K^2}{L r_0 R^2}\right)^\frac{1}{3}
    \right),
\end{align*}
Then, $p$ is defined as follows:
\begin{align*}
    p &\coloneqq 1 + \frac{\log \left( 1 - (1 - \kappa)^{{1.5}^T} \right)}{\log \kappa}, \\
    \kappa &\coloneqq \min_{j,i,r,k} \frac{s_{j,i,r,k}}{\sqrt{\sum_{j'} s_{j',i,r,k}^2}} \; (> 0),
\end{align*}
where $\{ s_{j,i,r,k} \}_j$ are non-zero singular values of $\mM_i^{(r, k+1)} - \mC_i^{(r)} + \mC^{(r)}$. 
\end{lemma}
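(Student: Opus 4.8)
The plan is to follow the proof of \cref{lemma:convergence_rate} line for line, substituting the inexact-LMO descent step for the exact one; the Newton-Schulz approximation enters \emph{only} through the per-iteration objective decrease, while the auxiliary estimates controlling the momentum and control-variate errors are left untouched.

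First I would replace \cref{lemma:descent_lemma} by its inexact analog \cref{lemma:descent_lemma_for_muon}, which is the heart of the argument. The exact descent lemma used the identity $\langle \mG_i^{(r,k+1)}, \mD_i^{(r,k+1)}\rangle = -\|\mG_i^{(r,k+1)}\|_\star$ from \cref{lemma:lmo}; here I instead invoke \cref{lemma:inexact_lmo2} to obtain $\langle \mG_i^{(r,k+1)}, \mD_i^{(r,k+1)}\rangle \le -\|\mG_i^{(r,k+1)}\|_p$. The essential observation is that after one triangle inequality together with the monotonicity $\|\mA\|_p \le \|\mA\|_\text{trace}$ (valid since $1 \le p \le 2$), the descent retains the Schatten-$p$ norm $\|\nabla f(\mX^{(r,k)})\|_p$ of the true gradient, whereas every perturbation term is bounded in the trace norm, which is precisely the dual of the spectral norm. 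This is the reason the remaining error analysis transfers verbatim.

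Second, I would observe that \cref{lemma:difference_of_momentum} and \cref{lemma:approximation_of_global_function} apply without change, because both are already stated in $\|\cdot\|_\star$, which for the spectral norm equals $\|\cdot\|_\text{trace}$, and with $\rho = \sup_\mX \tfrac{\|\mX\|_\star}{\|\mX\|_F} = \sqrt{\min\{d_1,d_2\}}$ as in the theorem statement. Substituting these two bounds into \cref{lemma:descent_lemma_for_muon} produces exactly the same pair of recursions on $\mathbb{E} f(\mX^{(r,k+1)})$ (one for $r \ge 1$, one for $r = 0$) that appear in the proof of \cref{lemma:convergence_rate}, the sole difference being that the negative term is now $\tfrac{\eta S}{n}\|\nabla f(\mX^{(r,k)})\|_p$.

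Third, I would telescope over $k$ and $r$, collapse the geometric sum $\sum_{r=1}^{R-1}(1-\tfrac{S\alpha}{n})^{r-1} \le \tfrac{n}{S\alpha}$, and insert the \emph{identical} choices of $\eta$ and $\alpha$ from \cref{lemma:convergence_rate}. Since every resulting term is structurally the same as in the exact case, the stated rate follows immediately, now with the gradient measured in $\|\cdot\|_p$. The only genuine obstacle, already resolved by \cref{lemma:inexact_lmo2}, is guaranteeing that the Newton-Schulz output is a valid descent direction with a controllable exponent $p \in [1,2]$; once that $p$-norm is isolated on the descent side and all perturbation terms are kept in the trace norm, the rest is the same bookkeeping as the exact proof.
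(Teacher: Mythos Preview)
Your proposal is correct and matches the paper's own proof essentially line for line: the paper simply remarks that \cref{lemma:approximation_of_global_function} and \cref{lemma:difference_of_momentum} continue to hold under the approximate LMO, combines them with \cref{lemma:descent_lemma_for_muon}, and reuses the hyperparameter tuning from \cref{lemma:convergence_rate}. Your write-up is in fact more explicit than the paper about \emph{why} the auxiliary lemmas transfer (the update directions still satisfy $\|\mD_i^{(r,k)}\|_\text{sp}\le 1$, and all error terms are already measured in the trace norm), which is a welcome clarification.
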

\begin{proof}
Even if we solve the LMO approximately, the statements of \cref{lemma:approximation_of_global_function,lemma:difference_of_momentum} hold.
Thus, combining \cref{lemma:descent_lemma_for_muon,lemma:approximation_of_global_function,lemma:difference_of_momentum} and tuning the hyperparameters as in \cref{lemma:convergence_rate}, we obtain the desired result.
\end{proof}

\newpage
\section{Hyperparameter Tuning Strategy}
\label{sec:hyperparameter_tuning}

In our experiments, the hyperparameters were tuned individually for each combination of method, dataset, and random seed.

\begin{table}[h]
    \centering
    \caption{Hyperparameter tuning strategy for each method.}
    \begin{tabular}{lcl}
    \toprule
            FedAvg                   & Stepsize                 & Grid search over $\{ 0.1, 0.01, 0.001 \}$ \\
            FedAvg (Adam)            & Stepsize                 & Grid search over $\{ 0.1, 0.01, 0.001 \}$ \\
            SCAFFOLD                 & Stepsize                 & Grid search over $\{ 0.1, 0.01, 0.001 \}$ \\
            SCAFFOLD (Adam)          & Stepsize                 & Grid search over $\{ 0.1, 0.01, 0.001 \}$ \\
            \multirow{2}{*}{LocalMuon} & Stepsize of Muon         & Grid search over $\{ 0.001, 0.0001 \}$ \\
                                     & Stepsize of Momentum SGD & Grid search over $\{ 0.1, 0.01 \}$ \\
            \multirow{2}{*}{FedMuon} & Stepsize of Muon         & Grid search over $\{ 0.001, 0.0001 \}$ \\
                                     & Stepsize of Momentum SGD & Grid search over $\{ 0.1, 0.01 \}$ \\
    \bottomrule
    \end{tabular}
\end{table}

In the following tables, we list the hyperparameters tuned by the grid search.
The reported hyperparameters correspond to the values selected from two independent trials with different random seeds.

\begin{table}[h]
\centering
\caption{Hyperparameters tuned for FashionMNIST.}
\begin{tabular}{lcc}
\toprule
                & $\beta=10.0$                         & $\beta=0.1$                          \\
\midrule
FedAvg          & $\{ 0.1, 0.1 \}$                     & $\{ 0.1, 0.1 \}$                     \\
FedAvg (Adam)   & $\{ 0.1, 0.1 \}$                     & $\{ 0.01, 0.01\}$                    \\
SCAFFOLD        & $\{ 0.1, 0.1 \}$                     & $\{ 0.1, 0.1\}$                      \\
SCAFFOLD (Adam) & $\{ 0.001, 0.001 \}$                 & $\{ 0.001, 0.001\}$                  \\
LocalMuon       & $\{ (0.001, 0.1), (0.001, 0.01) \}$  & $\{ (0.001, 0.1), (0.001, 0.1) \}$   \\
FedMuon         & $\{ (0.001, 0.01), (0.001, 0.01) \}$ & $\{ (0.001, 0.01), (0.001, 0.01) \}$ \\
\bottomrule
\end{tabular}
\end{table}

\begin{table}[h]
\caption{Hyperparameters tuned for CIFAR-10.}
\centering
\begin{tabular}{lcc}
\toprule
                & $\beta=10.0$                         & $\beta=0.1$                          \\
\midrule
FedAvg          & $\{ 0.1, 0.1\}$                      & $\{ 0.1, 0.1 \}$                     \\
FedAvg (Adam)   & $\{ 0.001, 0.001\}$                  & $\{ 0.01, 0.001\}$                   \\
SCAFFOLD        & $\{ 0.1, 0.1\}$                      & $\{ 0.1, 0.1\}$                      \\
SCAFFOLD (Adam) & $\{ 0.001, 0.001\}$                  & $\{ 0.001, 0.001\}$                  \\
LocalMuon       & $\{ (0.001, 0.01), (0.001, 0.01) \}$ & $\{ (0.001, 0.01), (0.001, 0.01)\}$  \\
FedMuon         & $\{ (0.001, 0.01), (0.001, 0.01) \}$ & $\{ (0.001, 0.01), (0.001, 0.01) \}$ \\
\bottomrule
\end{tabular}
\end{table}

\end{document}